\documentclass{article}
\pdfoutput=1

\usepackage{arx}

\usepackage{natbib} 
\usepackage{mathtools} 
\usepackage{booktabs} 
\usepackage{tikz} 

\usepackage{amsmath,amssymb,amsthm,mathtools}
\usepackage{hyperref}
\usepackage{cleveref}
\usepackage{enumitem}
\usepackage{lipsum}
\usepackage{algorithm}
\usepackage{algpseudocode}
\newtheorem{assumption}{Assumption}
\newtheorem{definition}{Definition}
\newtheorem{lemma}{Lemma}
\newtheorem{proposition}{Proposition}
\newtheorem{theorem}{Theorem}
\newtheorem{corollary}{Corollary}
\newtheorem{remark}{Remark}

\newcommand{\inner}[2]{\left\langle #1,\, #2 \right\rangle}

\newcommand{\Pcal}{\mathfrak{P}}
\newcommand{\R}{\mathbb{R}}
\newcommand{\E}{\mathbb{E}}

\newcommand{\Wone}{W_1}

\newcommand{\norm}[1]{\left\lVert#1\right\rVert}

\newcommand{\argmin}{\operatorname*{arg\,min}}
\newcommand{\argmax}{\operatorname*{arg\,max}}
\newcommand{\mcs}{\mathcal{S}}
\newcommand{\mca}{\mathcal{A}} 
\newcommand{\N}{\mathbb{N}}

\newcommand{\DeltaS}{\Delta(\mcs)}
\newcommand{\DeltaA}{\Delta(\mca)}
\newcommand{\DD}{\Delta}
\newcommand{\BR}{\mathsf{BR}}
\newcommand{\WC}{\mathsf{WC}}
\newcommand{\Inv}{\mathsf{Inv}}
\newcommand{\supp}{\operatorname{supp}}

\title{Stationary Robust Mean-Field Games under Model Mismatches}

\author{
  Yue Wang \\
  Department of Electrical and Computer Engineering\\
    University of Central Florida\\
    Orlando, Florida, USA
}
 
\begin{document}
\maketitle

\begin{abstract}
Deploying multi-agent reinforcement learning (MARL) in the real world is often limited by model mismatches between the training simulators and the true environment, which could be further amplified through strategic interactions and result in severe performance degradation upon deployment. Distributional robustness offers a principled response by optimizing policies against worst-case transition models drawn from an uncertainty set, but standard robust MARL frameworks become increasingly intractable as the number of agents grows. This paper develops an infinite-horizon, stationary mean-field game framework that incorporates distributional model uncertainty directly into the population-coupled dynamics. We establish a robust dynamic programming principle with a contractive Bellman operator and prove the existence of a stationary robust mean-field equilibrium via a fixed-point argument. We further develop the first concrete algorithm with convergence guarantees. We then connect the mean-field solution to a finite-population robust game whose ambiguity sets depend on the empirical distribution, showing that the mean-field equilibrium policy induces approximate equilibrium behavior as the population size increases. Under a contractive robust-dynamics regime, we further obtain explicit non-asymptotic error bounds. Numerical experiments further illustrate the qualitative and quantitative impact of robustness under multiple uncertainty models, validating our theoretical findings.
\end{abstract}

\section{Introduction}

Multi-agent reinforcement learning (MARL), together with its Markov/stochastic-game formulation
\citep{shapley1953stochastic,littman1994markov}, has become a core paradigm for designing intelligent
multi-agent systems (MAS) that exhibit complex and coordinated behavior.
By enabling multiple decision-makers to learn and act in a shared, evolving environment, MARL has achieved great success in, e.g.,  strategic games \citep{silver2017mastering,vinyals2019grandmaster},
coordination in autonomous transportation and traffic systems \citep{shalev2016safe,hua2024multi},
and distributed robotics \citep{lowe2017multi,matignon2012independent}.

Despite this progress, a fundamental obstacle that limits the reliable deployment of MARL in the real world is the \emph{Sim-to-Real} gap \citep{zhao2020sim,peng2018sim}.
The standard workflow trains policies in a simulator and then deploys them in practice; yet, even high-fidelity simulators inevitably miss aspects of reality, including subtle physical effects, sensor imperfections, unmodeled dynamics, and latent environmental factors \citep{padakandla2020reinforcement,rajeswaran2016epopt}.
As a consequence, policies that appear optimal in simulation can be brittle under model mismatch and may degrade
severely (or fail catastrophically) upon deployment in practice \citep{pinto2017robust}. 

This challenge becomes even more acute in multi-agent systems, as the model mismatches  can be further endogenously amplified by interactions among agents.
A small deviation in one agent's realized dynamics can change its behavior; and this further modifies the effective environment faced by others, prompting responses that further alter the joint dynamics. Such feedback loops can induce strong non-stationarity beyond that caused by strategic adaptation alone
\citep{papoudakis2019dealing,canese2021multi,wong2023deep}. This amplification effect makes the system-level outcome of MARL to be 
highly sensitive to small modeling errors, and makes robustness against model mismatches a crucial concern.

A principled strategy to address model mismatches is \emph{distributional robustness}.
Rather than optimizing for a single nominal model, (distributionally) robust MARL optimize against a family of
plausible transition models (an \emph{uncertainty set}), selecting policies that maximize worst-case performance among them
\citep{zhang2020robust,kardecs2011discounted}.
This minimax viewpoint provides formal performance guarantees whenever the true environment lies in the uncertainty set,
and it often acts as a regularizer that improves generalization under perturbations
\citep{abdullah2019wasserstein,vinitsky2020robust,liu2025distributionally}.
However, existing results for distributionally robust MARL or Markov games suffer from the \emph{curse of multi-agency}, which is a fundamental obstruction that the learning complexity scales exponentially in the number of agents \citep{farhat2026sample,shi2024sample}. Thus, robust RL can become significantly inefficient and infeasible in large-scale multi-agent systems, where robustness is most needed.


On the other hand, Mean-field games (MFGs) \citep{huang2006large,lasry2007mean} offer a classical lens for large-population strategic interactions by exploiting symmetry and weak coupling.
Instead of tracking the full $N$-agent joint state, MFGs approximate agent-wise interactions through the population distribution (the
\emph{mean field}), enabling scalable equilibrium computation and, under appropriate conditions, finite-$N$ approximation guarantees \citep{carmona2013probabilistic,saldi2018markov,anahtarci2023q,cui2021approximately,yardim2023policy,yardim2024mean}.

Given the effectiveness and efficiency of MFGs in large-scale MAS, 
a natural question arises: 
\noindent\textit{Can we utilize the mean-field approximation to reduce the complexity of  \emph{stationary} (time-homogeneous), infinite-horizon distributionally robust Markov games, making robust MARL tractable under large systems?}

We answer this question affirmatively by developing a framework of discounted  stationary distributionally 
robust mean-field games. Our contributions are summarized as follows. 

\textbf{Formulation and Solvability of robust MFGs with infinite-horizon discounted reward.} We first propose the formulation of infinite-horizon robust MFGs under model mismatches, and their solution notions: robust mean-field equilibrium (MFE). We then prove the existence of a robust MFE under a standard assumption, confirming the solvability of  discounted stationary robust MFGs. Different from existing studies on finite-horizon robust MFGs \citep{langner2024markov} which rely on backward construction of non-stationary robust MFE, our study relies on a fixed-point argument to handle the stationarity.


\textbf{Convergent algorithms for robust MFE.} We then design a concrete algorithm to identify the stationary robust MFE. We first develop a fixed-point-based characterization of the robust MFE, which enables us to obtain a robust MFE by finding the fixed point of an operator. We then develop a Robust Best-Response Iteration algorithm, and prove that it converges to a fixed point (and hence finds a robust MFE) under a standard assumption. Our results represent the first convergent algorithms under robust MFGs, providing a concrete algorithmic solution to uncertain MAS.

\textbf{Approximation of finite-player robust games.} We further develop comprehensive studies on the connections between robust MFGs and finite-player robust games (as the player number becomes large), to understand the effectiveness of approximating large-scale MAS through robust MFGs. We first reveal the hardness of such an approximation under the robust setting, and, unlike the non-robust cases, the necessity of additional stabilizing assumptions. We then develop an asymptotic approximation of robust MFG under this additional assumption, and further characterize the non-asymptotic approximation rate under additional quantitative assumptions. Our results hence enable tractable robust MARL under large population. 


\section{Related Work}
\textbf{Mean-field games.} The framework of standard (non-robust) mean-field games (i.e., without uncertainty) are proposed and studied in both continuous-time (see, e.g., \citep{huang2006large,tembine2013risk,huang2010large,gomes2013continuous,lacker2016general,lacker2022case,lacker2019mean,aurell2022optimal,delarue2020master}) and in discrete-time (see, e.g., \citep{adlakha2015mean,biswas2015mean,gomes2010discrete,gast2012mean,elliott2013discrete,moon2015discrete,moon2016discrete,nourian2013linear,saldi2018markov,saldi2019approximate,Elie2020meanfield}). 
We also refer to \citep{carmona2018probabilistic,bensoussan2013mean,gomes2014mean,lauriere2022learning} for survey papers including both settings. 

Under the regime of reinforcement learning in MFGs, extensive studies on  algorithm design and convergence analysis are developed in, e.g.,  \citep{carmona2013probabilistic,saldi2020approximate,anahtarci2023q,cui2021approximately,yardim2023policy,yardim2024mean,perrin2020fictitious,guo2019learning,xie2021learning}. 

However, all of these existing works are for nominal mean-field games, and no model uncertainty is considered.


\textbf{Distributionally robust Markov decision processes and Markov games.} To address the model mismatch and the Sim-to-Real gap, distributionally robust Markov decision processes are first proposed and studied in \citep{iyengar2005robust,nilim2004robustness,wiesemann2013robust}. Based on them, robust single-agent RL has been extensively studied under different settings, e.g., online learning \citep{wang2021online,lu2024distributionally,he2025sample,ghosh2026orvit,ghosh2025scaling}, offline learning \citep{shi2024distributionally,blanchet2023double,wang2024unified,wang2024sample}, or with a generative model \citep{liu2022distributionally,panaganti2022sample,yang2022toward,shi2023curious,wang2024modelfree,xu2023improved,roch2025finite,Roch2025reduction,wang2024robust,wang2023model}. 

It is then further extended to the multi-agent regime, where the framework of distributionally robust Markov games is developed and studied firstly in \citep{kardecs2011discounted,zhang2020robust}. However, designing efficient MARL algorithms for distributionally robust Markov games is significantly challenging. Since learning the worst-case performance requires a global knowledge of the uncertainty, recent research reveals an inherent curse of multi-agency in robust MARL: the sample complexity exponentially depends on the number of agents \citep{shi2024sample,blanchet2023double,farhat2026sample}, unless additional assumptions or oracles are assumed \citep{shi2025breaking,jiao2024minimax,ma2023decentralized,li2025sample}.  
This inherent challenge prevents the deployment of robust MARL in large multi-agent systems, motivating other formulations and efficient solutions.

\textbf{Mean-field games under model mismatches.}
When model mismatches are considered in the MFG framework, a promising approach is to consider robust MFGs. The most related works to ours are \citep{langner2024markov,liang2026mean}. In \citep{langner2024markov}, the framework of robust MFG is considered under the finite-horizon settings, and the existence of a non-stationary MFE is proved. However, their proofs are based on a backward construction argument, which is infeasible under our settings, and we develop a fundamentally different proof technique to handle the infinite-horizon dependence in our problems. Another most relevant work is recently developed in \citep{liang2026mean}, where stationary robust MFGs are considered. However, this work studies a different formulation of the MFE, and only derives its existence under some strong conditions. Hence, although our studies bear similarities to them, our problem formulation and proof technique are fundamentally different. Moreover, neither of these works studies algorithm convergence or non-asymptotic approximation of robust MFGs.

Robust MFGs are also studied in continuous-time settings, e.g., \citep{moon2016linear,moon2016robust,huang2013mean,bauso2016robust}. However, the studies cannot be applied to our discrete settings, and do not subsume our studies. Another line of research studies robust mean-field control problems \citep{zaman2024robust,lauriere2025robust,xu2025robust}, which can be viewed as a cooperative MFG and shares fundamentally different objectives with ours.

\section{Preliminaries}
In massive MAS, computing exact equilibria via traditional game-theoretic methods becomes computationally intractable due to the exponential growth of the joint state-action space. Mean Field Games (MFGs) circumvent this curse by considering the asymptotic regime where the number of agents approaches infinity. In this limit, agents become infinitesimal, anonymous, and indistinguishable. Consequently, direct agent-to-agent interactions are replaced by the interaction between a single representative agent and the macroscopic state distribution of the entire population, termed the mean field. By tracking this aggregate distribution rather than individual microscopic states, the intractable $N$-player game elegantly decouples into a tractable local optimization problem coupled with a global consistency condition.

A stationary MFG \citep{anahtarci2023q, yardim2023policy, guo2019learning, xie2021learning} can be formulated as  a tuple $(\mcs, \mca, P, r,\gamma)$, where $\mcs,\mca$ are the state and action spaces, with transition  dynamics $P:\mcs\times\mca\times\DeltaS\to \DeltaS$ and rewards $r:\mcs\times\mca\times\mathcal{S}\times\DeltaS\to \mathbb{R}$. $\gamma\in(0,1)$ is the discount factor. 

As mentioned, performances in a MFG are determined by the policy of the representative agent and the overall strategy of all other agents. Specifically, these strategies are characterized as a stationary policy $\pi: \mcs\to \Delta(\mca)$ and a population distribution $\mu\in\DeltaS$. For any $(\mu, \pi) \in \Delta_\mcs \times \Pi$, the discounted infinite horizon value function is defined as 
    \begin{align*}
        J_\mu(\pi,P) := \E \left[ \sum_{t=0}^\infty \gamma^t r(s_t, a_t, s_{t+1},\mu) \middle| \substack{s_0 \sim \mu, a_t \sim \pi(s_t) \\ s_{t+1} \sim P(s_t, a_t, \mu) }\right].
    \end{align*}
The goal of a MFG is to find some equilibrium: 
\begin{definition}
A policy-population pair $(\mu^\star, \pi^\star) \in \Delta_\mcs \times \Pi$ is called a Nash-equilibrium (or mean field equilibrium) if the two conditions hold:
    \begin{align*}
        \textit{Stability: } \quad &\mu^\star(s') = \sum_{s,a}\mu^\star(s)\pi^\star(a|s)P(s'|s,a,\mu^\star), \notag \\
        \textit{Optimality: } \quad &J_{\mu^\star}(\pi^\star,P) = \max_{\pi \in \Pi} J_{\mu^\star}(\pi,P). 
    \end{align*}
\end{definition}

\section{Infinite-horizon stationary robust mean-field games}
We then introduce the major objective of our studies, the stationary (time-homogeneous) robust mean-field games. Similarly to a standard MFG, a robust MFG is specified as $(\mcs,\mca, \mathfrak{P}, r)$ \citep{langner2024markov,liang2026mean}, where $\mcs,\mca$ are the finite state and action spaces and $r$ is the reward function. The potential Sim-to-Real gap is modeled through the model uncertainty sets (independently) defined through a set-valued map as 
\begin{align}  \mathfrak{P}: &\mcs\times\mca\times\DeltaS\rightrightarrows \DeltaS,\nonumber\\
&(s,a,\mu)\mapsto \mathfrak{P}(s,a,\mu)\subseteq \DeltaS,
\end{align}
where each element of $\mathfrak{P}(s,a,\mu)$ is a candidate transition law for the next state from current $(s,a)$ and population $\mu$.


Under the model mismatches $\mathfrak{P}$, robust learning takes a principle of pessimism, and considers the worst-case performance over the uncertainty set $\mathfrak{P}$. For a pair $(\pi,\mu)$, the robust value function is defined as: 
\begin{align}\label{eq:robust-value}
V_\mu^\pi  :=\inf_{p\in \mathfrak{P}(\mu)}J_\mu(\pi,p),
V_\mu:=\sup_\pi\inf_{p\in \mathfrak{P}(\mu)}J_\mu(\pi,p),
\end{align}
where $\mathfrak{P}(\mu)\triangleq \otimes_{(s,a)} \mathfrak{P}(s,a,\mu)$ and the infimum takes over kernels
with $p(\cdot\mid s,a)\in\mathfrak{P}(s,a,\mu)$ for any $(s,a)$.

We then extend the definition of MFE and introduce the notion of robust equilibrium as follows.
\begin{definition}[Stationary robust mean-field equilibrium]\label{def:smfe}
A triple $(\mu^\star,\pi^\star,p^\star)$ with $\mu^\star\in\DeltaS$,
$\pi^\star:\mcs\to\DeltaA$, and $p^\star:\mcs\times\mca\times\DeltaS\to\DeltaS$
is a {stationary robust mean-field equilibrium} if:\footnote{In Proposition \ref{prop:history}, we showed that the worst-case can be achieved by stationary kernel, instead of history-dependent ones.}
\begin{enumerate}[label=(\roman*),leftmargin=2.2em]
\item \textbf{Robust optimality:} $\pi^\star$ attains \eqref{eq:robust-value} at $\mu^\star$:
\begin{align}
V_{\mu^\star} =\inf_{p}J_{\mu^\star}(\pi^\star,p).
\end{align}
\item \textbf{Worst-case kernel:} $p^\star$ attains the infimum against $\pi^\star$:
\begin{align}
V_{\mu^\star} =J_{\mu^\star}(\pi^\star,p^\star).
\end{align}
\item \textbf{Consistency (stationarity):} $\mu^\star$ is invariant under $(\pi^\star,p^\star)$:  for any $s'\in\mcs$:
\begin{equation}\label{eq:inv-smfe}
\mu^\star(s')
=\sum_{s\in\mcs}\mu^\star(s)\sum_{a\in\mca}\pi^\star(a|s)\,p^\star(s'|s,a,\mu^\star).
\end{equation}
\end{enumerate}
\end{definition}

Specifically, if we view the three components as three players: a representative player (who controls $\pi$), the population player (controls $\mu$), and an environment player (controls $p$), a robust MFE is a triple where the three players achieve a balanced state. Such a notion is an extension of the Nash equilibrium of standard non-robust MFGs to the worst-case \citep{huang2006large,lasry2007mean}. Moreover, a similar notion of robust MFE is studied in \citep{langner2024markov} for the non-stationary finite-horizon setting. 


\section{Solvability of Robust MFGs}
In this section, we study the solvability of infinite-horizon robust MFGs, i.e., if the stationary robust MFE in \Cref{def:smfe} always exists. 

We first make the following standard assumptions. 
\begin{assumption}\label{ass:stat}
We assume the following conditions hold: (1).  $\mcs$ and $\mca$ are finite; (2). For every $(s,a,\mu)$, the set $\mathfrak{P}(s,a,\mu)$ is nonempty,
convex, and compact in $\DeltaS$.
Moreover, for every fixed $(s,a)$ the correspondence $\mu\mapsto\mathfrak{P}(s,a,\mu)$ is continuous:  
\begin{enumerate}[label=(\alph*)]
\item (\textbf{Closed graph}) if $\mu_n\to \mu$, $p_n\to p$, and $p_n\in \mathfrak{P}(s,a,\mu_n)$ for all $n$, then $p\in \mathfrak{P}(s,a,\mu)$.
\item (\textbf{Lower hemicontinuity}) if $\mu_n\to \mu$ and $p\in \mathfrak{P}(s,a,\mu)$, then there exist $p_n\in \mathfrak{P}(s,a,\mu_n)$ such that $p_n\to p$.
\end{enumerate}
And (3). $r$ is bounded and continuous in $\mu$ (w.r.t. the Euclidean topology on $\DeltaS$). 
\end{assumption}
\begin{remark}
Parts (1) and (3) are standard in standard MFG studies \citep{huang2006large}; Part (2) is adapted from the continuity assumption on the transition kernel in standard MFGs and standard robust RL literature \citep{iyengar2005robust,wang2023robust,wang2025bellman}, and can be easily satisfied by many standard uncertainty sets, e.g., distributionally ambiguous sets defined by divergence.  
\end{remark}

Under these standard assumptions, we then derive the existence result as follows. 
\begin{theorem}[Existence of a stationary robust mean-field equilibrium]\label{thm:existence-smfe}
Under Assumption~\ref{ass:stat}, there exists a stationary robust mean-field equilibrium
$(\mu^\star,\pi^\star,p^\star)$.
\end{theorem}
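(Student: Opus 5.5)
The plan is to obtain the equilibrium as a fixed point of a set-valued map via Kakutani's theorem, working with state–action occupation measures rather than with $(\mu,\pi,p)$ directly. Fix $\mu\in\DeltaS$ and look at the robust MDP with rectangular uncertainty $\mathfrak{P}(\mu)=\otimes_{(s,a)}\mathfrak{P}(s,a,\mu)$. Under Assumption~\ref{ass:stat}, the robust Bellman operator
\[
(T_\mu V)(s)=\max_{\pi(\cdot|s)}\sum_a \pi(a|s)\min_{q\in\mathfrak{P}(s,a,\mu)}\sum_{s'}q(s')\bigl[r(s,a,s',\mu)+\gamma V(s')\bigr]
\]
is a $\gamma$-contraction in $\norm{\cdot}_\infty$. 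Write $V^\star_\mu$ for its fixed point. Standard rectangular robust-MDP theory (Iyengar / Nilim--El Ghaoui, which I would invoke together with Proposition~\ref{prop:history}) then gives three facts: stationary deterministic optimal policies exist, the worst case is attained by a stationary kernel, and $V_\mu=\sum_s\mu(s)V^\star_\mu(s)$.

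Next I would establish continuity of $\mu\mapsto V^\star_\mu$. The inner minimum $g_\mu(s,a,V)=\min_{q\in\mathfrak{P}(s,a,\mu)}\langle q, r(s,a,\cdot,\mu)+\gamma V\rangle$ is continuous in $(\mu,V)$ by Berge's maximum theorem. Berge applies because the correspondence is continuous (closed graph plus lower hemicontinuity, with compact values in the compact space $\DeltaS$, hence upper hemicontinuous), and $r$ is continuous in $\mu$. Continuity of $T_\mu$ in $\mu$, uniform over bounded $V$, combined with uniform contraction, then gives continuity of $V^\star_\mu$. Let $Q_\mu(s,a)=g_\mu(s,a,V^\star_\mu)$. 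For each $\mu$ define the set $\BR(\mu)$ of optimal policies, namely those supported on $\argmax_a Q_\mu(s,a)$, and the set $\WC(\mu,s,a)=\argmin_{q\in\mathfrak{P}(s,a,\mu)}\langle q, r+\gamma V^\star_\mu\rangle$. Both sets are nonempty and convex: the argmax set gives a face of the simplex, and a linear objective on a convex compact set has a convex argmin. By Berge, both correspondences have closed graphs in $\mu$.

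Now I define the correspondence on the compact convex set $X=\DeltaS\times\Pi\times\prod_{(s,a)}\DeltaS$, with $\Pi=\DeltaA^{\mcs}$, by
\[
\Phi(\mu,\pi,p)=\Inv(\pi,p)\times\BR(\mu)\times\textstyle\prod_{(s,a)}\WC(\mu,s,a).
\]
Here $\Inv(\pi,p)$ is the set of invariant distributions of the chain $P_{\pi,p}(s'|s)=\sum_a\pi(a|s)p(s'|s,a)$. The set $\Inv(\pi,p)$ is nonempty by finiteness of $\mcs$, convex because the invariance equation is linear in $\mu$, and closed-graph because the equation is jointly continuous. Kakutani then yields a fixed point $(\mu^\star,\pi^\star,p^\star)$.

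It remains to verify Definition~\ref{def:smfe}; this is the step I expect to be most delicate. Consistency is immediate. For robust optimality and the worst-case property, I need to show that any $\pi^\star$ that is greedy w.r.t.\ $Q_{\mu^\star}$, paired with any kernel $p^\star$ that is a pointwise minimizer against $V^\star_{\mu^\star}$, satisfies $J_{\mu^\star}(\pi^\star,p^\star)=\inf_pJ_{\mu^\star}(\pi^\star,p)=V_{\mu^\star}$. I would first show that $V^\star$ is the fixed point of the policy evaluation equation $V=r_{\pi^\star,p^\star}+\gamma P_{\pi^\star,p^\star}V$, which follows because greediness and pointwise minimization make $T_{\mu^\star}V^\star$ coincide with this evaluation. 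Hence $J(\pi^\star,p^\star)=\langle\mu^\star,V^\star\rangle$. For any other stationary $p\in\mathfrak{P}(\mu^\star)$, the evaluation operator of $(\pi^\star,p)$ maps $V^\star$ to something $\ge V^\star$, and monotonicity then gives $J(\pi^\star,p)\ge J(\pi^\star,p^\star)$. Proposition~\ref{prop:history} handles history-dependent $p$. A subtle point is that randomized $\pi^\star$ must put mass only on maximizing actions so that the averaging does not break the max–min equality; the definition of $\BR$ guarantees this. A second subtle point is that the worst-case kernel must be chosen jointly for all actions; rectangularity ensures the product of pointwise argmins works.
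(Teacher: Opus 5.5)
Your proposal is correct and is essentially the paper's proof: the same Kakutani correspondence $\Phi(\mu,\pi,p)=\Inv(\pi,p)\times\BR(\mu)\times\WC(\mu)$ on $\DeltaS\times\Pi\times\mathcal K$, closed graphs obtained from Berge-type continuity of $\mu\mapsto v_\mu$ and $\mu\mapsto Q_\mu$ (Lemmas~\ref{lem:continuity-bellman} and~\ref{lem:phi-properties}), and verification through the robust dynamic-programming identities of Propositions~\ref{prop:robust-dp} and~\ref{prop:statewise-vs-distributional}. The only discrepancy is cosmetic: your opening sentence mentions state--action occupation measures, but they are never used, since your argument works directly with $(\mu,\pi,p)$, exactly as the paper does.
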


Our result hence implies that, for an infinite horizon robust MFG with discounted reward, there always exists a mean-field equilibrium, and the game is always solvable. We also highlight that our proof is fundamentally different from the finite-horizon case \citep{langner2024markov}, where the non-stationary mean-field equilibrium can be constructed through a backward induction, whereas in our setting, the existence of a stationary equilibrium is derived through fixed-point arguments.

\section{Approximation of Finite-Player Robust Games}
In this section, we aim to show that, under some assumptions, the policy $\pi^\star$ of a robust MFE constitutes an approximate Nash equilibrium of the finite-player robust Markov game (defined below). We study both asymptotic and non-asymptotic convergence, developing a comprehensive understanding of the connections.

\subsection{$N$-player robust games}
In this section, we first introduce the finite $N$-player robust game. Fix $N\in\N$, a $N$-player robust game is specified as $(N,\mcs,\mca,\mathfrak{P},r,\mu^\star)$. For each agent $i\in N=\{1,\dots,n\}$, it has $s_t^i\in\mcs$ and $a_t^i\in\mca$ as its local (individual) state/action, with $s_0^i \sim \mu^\star$. Denote the joint state and action as  $s_t^N:=(s_t^1,\dots,s_t^n)\in\mcs^N$ and $a_t^N:=(a_t^1,\dots,a_t^n)\in\mca^N$. 
At each time-step $t$, the empirical state distribution $e^N \in\DeltaS$ is
\begin{equation}\label{eq:empirical}
e^N(s_t^N):=\frac1n\sum_{i=1}^n\delta_{s_t^N=s_t^i}, \quad \forall s^N_t\in\mathcal{S}.
\end{equation}
For a pair $(s^N,a^N)\in\mcs^N\times\mca^N$, the environment transition will follow some kernel from the uncertainty set under the empirical state distribution $e^N(\cdot)$
\begin{equation}\label{eq:PN}
\mathfrak{P}^N(s^N,a^N):= \bigotimes_{i=1}^n \mathfrak{P}(s^i,a^i,e^N(s^N)).
\end{equation}

In the game, let $\Pi:=\{\pi:\mcs\to\DeltaA\}$ and $\Pi^N:=\Pi^N$ be profiles
$\pi^N=(\pi^1,\dots,\pi^N)$ with $\pi^i\in\Pi$. For $s^N$, the joint action distribution from the joint (product) policy $\pi^N$ is 
\begin{align}
    \pi^N(da^N\mid s^N):=\bigotimes_{i=1}^n\pi^i(da^i\mid s^i).
\end{align}
Given a profile $\pi^N\in \Pi^N$, consider sequences of (possibly time-inhomogeneous) transition kernels
$(p_t^N)_{t\ge 0}$ on $\mcs^N$ such that for every $t$ and every $(s^N,a^N)\in \mcs^N\times \mca^N$,
\begin{align}
p_t^N(\cdot \mid s^N,a^N)\ \in\ \mathfrak{P}^N(s^N,a^N).
\end{align}
Given $(\pi^N,(p_t^N)_{t\ge0})$, let $\mathbb{P}^{\pi^N,(p_t^N)}$ denote the induced law of the controlled process
generated by $s_0^N\sim (\mu^\star)^{\otimes N}$, $a_t^N\sim \pi^N(\cdot\mid s_t^N)$, and
$s_{t+1}^N\sim p_t^N(\cdot\mid s_t^N,a_t^N)$.
We write $\E^{\pi^N,(p_t^N)}$ for expectation under $\mathbb{P}^{\pi^N,(p_t^N)}$.

The robust payoff of player $i$ is then defined as 
\begin{align}\label{eq:JiN}
&J_i^N(\pi^N)\\
&:= \inf_{(p_t^N)}\;
\E^{\pi^N,(p_t^N)}\Big[\sum_{t\ge0}\gamma^t\, r(s_t^i,a_t^i,s_{t+1}^i, e^N(s_t^N))\Big],\nonumber
\end{align}
where the infimum ranges over all sequences $(p_t^N)_{t\ge0}$ satisfying the constraint above. 


With this objective, we can define the Nash equilibrium notions as in standard games. 
\begin{definition}\label{def:eps-mne}
A profile $\pi^{N,\star}\in\Pi^N$ is an $\varepsilon$-Nash equilibrium if for all $i$, $J_i^N(\pi^{N,\star})+\varepsilon\ \ge\ \sup_{\pi\in\Pi}J_i^N(\pi^{N,\star,-i},\pi)$, where $(\pi^{N,\star,-i},\pi)$ is the joint policy where agent $i$ takes $\pi$ while others follow $\pi^{N,\star}$.  
\end{definition}

\begin{remark}
    The $N$-player robust games we defined are closely related to standard robust Markov games \citep{zhang2020robust,kardecs2011discounted} (see \Cref{app:n-player and drmg}).
\end{remark}



\subsection{Asymptotic Approximation of $N$-Player Robust Games}
\label{sec:asymptotic-approximation}

In this section, we show that the robust MFG provides an asymptotic approximation of the finite-player robust games defined above. Specifically, we show that, when the number of agents is sufficiently large, the equilibrium policy of the robust MFG is an $\varepsilon$-Nash equilibrium of the $N$-player robust game.

Define the symmetric equilibrium profile
\begin{align}
\pi^{N|\star}:=(\pi^\star,\ldots,\pi^\star)\in \Pi^N .
\end{align}
Fix any unilateral deviation sequence $(\pi^{(N)})_{N\in\mathbb N}\subset \Pi$ for player $1$, and set
\begin{align}
\pi^{N|(N)} := (\pi^{(N)},\pi^\star,\ldots,\pi^\star)\in \Pi^N .
\end{align}
For each $N$, let $P^{N|(N)}$ be a minimizing law attaining
$J_1^N(\pi^{N|(N)})$, whose existence is guaranteed by Lemma~\ref{lem:finite-player-robust-dp} in Appendix~C. Under $P^{N|(N)}$, define the one-step law
\begin{align}
Q_t^{N|(N)}
:=
\operatorname{Law}_{P^{N|(N)}}\!\bigl(s_t^1,a_t^1,s_{t+1}^1,e^N(s_t^N)\bigr).
\end{align}
Next, define the proxy chain $P^{*(N)}$ by
\begin{align}
s_0\sim \mu^\star,
a_t\sim \pi^{(N)}(\cdot\mid s_t),
s_{t+1}\sim p^\star(\cdot\mid s_t,a_t),
\end{align}
and let
\begin{align}
Q_t^{*(N)}
:=
\operatorname{Law}_{P^{*(N)}}\!\bigl(s_t,a_t,s_{t+1},\mu^\star\bigr),
\qquad t\in\mathbb N_0 .
\end{align}

\begin{assumption}
\label{ass:one-step-law}
For every deviation sequence $(\pi^{(N)})_{N}\subset\Pi$, every $t\in\N_0$, and some choice of minimizing laws $P^{N|(N)}$,
\begin{align}
    W_1\bigl(Q^{N|(N)}_t,\,Q^{*(N)}_t\bigr)\longrightarrow 0\qquad (N\to\infty),
\end{align}
with $W_1$ on $\Delta(Z)$ as in \eqref{eq:Z}.
\end{assumption}

This assumption states that, along any unilateral deviation sequence, the deviating player asymptotically faces the same one-step law as in the proxy mean-field model driven by $(\mu^\star,\pi^{(N)},p^\star)$. In particular, both the player-side transition and the population term entering the reward converge, at the level of one-step distributions, to their mean-field counterparts.

In the robust setting, such a stabilization property does not follow from Assumption~\ref{ass:stat} alone, because nature can use the remaining $N-1$ players to alter the empirical distribution in a way that persists at the level of the one-step law. The following counterexample shows that this additional assumption is genuinely needed.

\begin{theorem}
\label{thm:counterexample-onestep}
Fix $\gamma\in(0,1)$ and $\kappa>1/\gamma$, and set
\[
\varepsilon_0\;:=\;\frac{\gamma\kappa-1}{1-\gamma}\;>\;0.
\]
There exist a robust mean-field game satisfying Assumption~\ref{ass:stat} and a stationary robust mean-field equilibrium $(\mu^\star,\pi^\star,p^\star)$ of it such that:
\begin{enumerate}[label=(\alph*)]
  \item Assumption~\ref{ass:one-step-law} fails for the constant deviation sequence $\pi^{(N)}\equiv\pi^\star$ under \emph{every} choice of minimizing laws $P^{N|(N)}$; and
  \item for \emph{every} $N\in\mathbb{N}$, the symmetric profile $\pi^{N|\star}$ is not an $\varepsilon$-Nash equilibrium of the $N$-player robust game for any $\varepsilon<\varepsilon_0$.
\end{enumerate}
\end{theorem}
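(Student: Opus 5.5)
The plan is to build an explicit two-state example. In it, nature's freedom to steer the other $N-1$ players moves the empirical distribution $e^N$ entering player~1's reward, while in the mean-field model the population $\mu^\star$ is frozen by consistency and cannot be attacked.

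\textbf{Construction.} Take $\mcs=\{G,B\}$ and $\mca=\{a_s,a_r\}$ (``safe'' and ``risky''), with $c:=\gamma\kappa-1\in(0,\gamma\kappa)$.
\begin{itemize}[leftmargin=1.5em]
\item Let $\mathfrak{P}(s,a,\mu):=\DeltaS$ for every $(s,a,\mu)$. This is constant, hence nonempty, convex, compact and continuous in $\mu$, so Assumption~\ref{ass:stat} holds trivially.
\item Let the reward not depend on player~1's own state. It pays $c$ per step under $a_s$. Under $a_r$ it pays $\kappa\,\mu(G)$ with a one-step delay, realised through a bounded reward of the form $r(s,a,s',\mu)$ that is continuous in $\mu$. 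The delay may need an auxiliary flag state, or a reward that at time $t$ reads the population at $t$ and the action taken at $t-1$; I would settle this bookkeeping first. The aim is that the risky stream is worth $\gamma\kappa\sum_{t\ge1}\gamma^{t-1}\mu_t(G)$.
\item Take $\mu^\star=\delta_G$, $\pi^\star\equiv a_r$, and let $p^\star$ send every $(s,a)$ to $G$.
\end{itemize}
Because the reward ignores player~1's own state, every kernel is worst-case for the representative against a fixed $\mu^\star$. So $p^\star$ satisfies (ii), and $\mu^\star=\delta_G$ is invariant under $(\pi^\star,p^\star)$, which gives (iii). For (i), the robust value at $\mu^\star$ of $a_r$ is $\gamma\kappa/(1-\gamma)$ and that of $a_s$ is $c/(1-\gamma)$. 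Since $c<\gamma\kappa$, by the robust DP principle $\pi^\star$ is robustly optimal. Hence $(\mu^\star,\pi^\star,p^\star)$ is a stationary robust MFE.

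\textbf{Finite-$N$ game.} Here $\mathfrak{P}^N$ is the full product simplex, so nature may send every player to $B$ at each step; this is clearly the minimiser for $J_1^N$. Then $e^N(s_t^N)=\delta_B$ for all $t\ge1$, while $e^N(s_0^N)=\delta_G$ almost surely because $s_0^i\sim\delta_G$.
\begin{itemize}[leftmargin=1.5em]
\item Under $\pi^{N|\star}$, player~1's risky stream is therefore worth $0$ after the delay, so $J_1^N(\pi^{N|\star})=0$ with the delayed convention.
\item The unilateral deviation to $a_s$ yields $c/(1-\gamma)$ no matter what nature does.
\end{itemize}
The gap is $c/(1-\gamma)=(\gamma\kappa-1)/(1-\gamma)=\varepsilon_0$ for every $N$, which gives (b).

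For (a), I would argue that every minimising law must drive $e^N(s_t^N)(G)$ to $0$ for $t\ge1$, up to null events. Any positive mass on $G$ for $t\ge1$ strictly raises player~1's payoff, because $\kappa>0$ and the delay weights are positive. Consequently the fourth marginal of $Q_t^{N|(N)}$ is $\delta_B$, whereas that of $Q_t^{*(N)}$ is $\delta_G$. This gives $W_1\ge d(\delta_B,\delta_G)>0$ for all $N$ and all $t\ge1$.

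\textbf{Main obstacle.} The delicate step is encoding the one-step delay while keeping $r(s,a,s',\mu)$ of the paper's form and continuous in $\mu$. It must also remain true that the worst case in the mean-field model is indifferent to the representative's own transitions. Otherwise $p^\star$ would push the representative somewhere that breaks consistency of $\mu^\star=\delta_G$.

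My first attempt would be to enlarge the state to $\{G,B\}\times\{0,1\}$, with the flag recording whether the previous action was risky and the transitions on the flag coordinate deterministic. I would then check directly that Assumption~\ref{ass:stat} survives and that the value computations above are unchanged. If this becomes awkward, I would drop the delay and tune $c$ so that the gap is at least $\varepsilon_0$, accounting for the $t=0$ term $\kappa$.
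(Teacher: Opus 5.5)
Your mechanism is the paper's: full ambiguity, and a Dirac equilibrium population that nature cannot disturb at $t=0$ (because $s_0^i\sim\mu^\star$ i.i.d.) but can move entirely from $t=1$ on by redirecting the other $N-1$ players. The equilibrium action's payoff reads the population, while the outside option does not. Your fallback is in fact the paper's proof, and the delay you call the main obstacle is unnecessary.

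Take $r(\cdot,a_r,\cdot,\mu)=\kappa\,\mu(G)$ and $r(\cdot,a_s,\cdot,\mu)=c'$ with no delay, for any $c'\in[\kappa-1,\kappa)$. At $\mu^\star=\delta_G$, the action $a_r$ earns $\kappa>c'$ per step against every kernel, so $(\delta_G,\,a_r,\,\delta_G)$ is an MFE exactly as you argue. In the $N$-player game the worst case gives $J_1^N(\pi^{N|\star})=\kappa$, which is the unavoidable $t=0$ term. Deviating to $a_s$ gives $c'/(1-\gamma)$ against any kernel. With $c'=\kappa-1$ the gap is $\frac{\kappa-1}{1-\gamma}-\kappa=\frac{\gamma\kappa-1}{1-\gamma}=\varepsilon_0$ exactly.

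Relabel $G\leftrightarrow 0$, $B\leftrightarrow 1$ and subtract the constant $\kappa-1$; you get the paper's reward, $1-\kappa\mu(1)$ for $b$ and $0$ for $o$. For (a), the paper works at $t=0$. There the population coordinates agree, but every minimizing law forces $s_1^1=1$, so $W_1=1$. Your argument at $t\ge 1$ through the population marginal, giving $W_1\ge 2$, works equally well.

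If you keep the delayed version, two things in your sketch need fixing.

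First, you cannot keep $\mathfrak{P}=\Delta(\mathcal{S})$ on the enlarged space $\{G,B\}\times\{0,1\}$. Nature would then set the flag to $0$, so $a_r$ earns nothing after $t=0$ even in the mean-field model, and $\pi^\star\equiv a_r$ is no longer robustly optimal. You need the constant set of kernels that place the flag at $\mathbf{1}\{a=a_r\}$ and leave only the $G/B$ coordinate free; this set is convex and compact, so Assumption~\ref{ass:stat} still holds. The relevant fact is then that the reward ignores this free coordinate, not that it ignores player~1's whole state.

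Second, consistency forces $\mu^\star=\delta_{(G,1)}$. So every player starts with flag $1$ and collects $\kappa$ at $t=0$ whatever it does. The values are therefore $\kappa/(1-\gamma)$ versus $\kappa+c/(1-\gamma)$ at the MFE, and $\kappa$ versus $\kappa+c/(1-\gamma)$ in the $N$-player game. They are not $\gamma\kappa/(1-\gamma)$, $c/(1-\gamma)$ and $0$ as you wrote.

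The gap is still $c/(1-\gamma)=\varepsilon_0$, so that route is sound. However, the delay does not remove the $t=0$ term; it only makes it independent of the action. Choosing the safe payoff $\kappa-1$ achieves the same thing directly.
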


We then derive the asymptotic approximation result.

\begin{theorem}
\label{thm:asymptotic-eps-nash}
Assume Assumptions~\ref{ass:stat} and~\ref{ass:one-step-law}. Let
$(\mu^\star,\pi^\star,p^\star)$ be a stationary robust mean-field equilibrium and let
\begin{align}
\pi^{N|\star}:=(\pi^\star,\ldots,\pi^\star)\in \Pi^N .
\end{align}
Then, for every $\varepsilon>0$, there exists $N(\varepsilon)\in\mathbb N$ such that, for all $N\ge N(\varepsilon)$, the profile $\pi^{N|\star}$ is an $\varepsilon$-Nash equilibrium of the $N$-player robust game.
\end{theorem}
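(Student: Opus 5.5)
We argue by contradiction, combined with the decomposition of the $N$-player robust payoff into one-step expected rewards. Suppose the claim fails. Then there are $\varepsilon>0$, a subsequence $N_k\to\infty$, and deviations $\pi^{(N_k)}\in\Pi$ for player $1$ (by symmetry of $\pi^{N|\star}$ and of $\mathfrak{P}^N$ we may assume the deviator is player $1$) such that
\begin{align*}
J_1^{N_k}(\pi^{N_k|(N_k)}) > J_1^{N_k}(\pi^{N_k|\star})+\varepsilon .
\end{align*}
Extend $(\pi^{(N_k)})$ to a full deviation sequence by setting $\pi^{(N)}=\pi^\star$ off the subsequence. The plan is to show two limits. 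First, $\limsup_N J_1^N(\pi^{N|(N)}) - V^{\pi^{(N)}}_{p^\star}\le 0$, where $V^{\pi}_{p^\star}:=J_{\mu^\star}(\pi,p^\star)$ denotes the proxy value. Second, $\liminf_N J_1^N(\pi^{N|\star}) - V_{\mu^\star}\ge 0$. Robust optimality and the worst-case kernel property in Definition~\ref{def:smfe} give $J_{\mu^\star}(\pi,p^\star)\le \sup_{\pi'}\inf_p J_{\mu^\star}(\pi',p)$? Not directly, and this is the point to be careful about, addressed next.

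The first limit follows from Assumption~\ref{ass:one-step-law}. Using the minimizing law $P^{N|(N)}$ from Lemma~\ref{lem:finite-player-robust-dp},
\begin{align*}
J_1^N(\pi^{N|(N)})=\sum_{t\ge0}\gamma^t\,\E_{Q_t^{N|(N)}}[r],\qquad J_{\mu^\star}(\pi^{(N)},p^\star)=\sum_{t\ge0}\gamma^t\,\E_{Q_t^{*(N)}}[r].
\end{align*}
Since $\mcs,\mca$ are finite and $r$ is bounded and continuous in $\mu$, $r$ is bounded and uniformly continuous on the compact space $Z$. Weak convergence of $Q_t^{N|(N)}-Q_t^{*(N)}$ (in $W_1$) therefore gives termwise convergence of the expectations, uniformly enough along the sequence. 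If needed, we use compactness of $\Delta(Z)$ to pass to further subsequences in which $Q_t^{*(N)}$ converges. Truncating the series at a horizon $T$ with tail $\le 2\gamma^T\|r\|_\infty/(1-\gamma)$ then yields $J_1^N(\pi^{N|(N)})-J_{\mu^\star}(\pi^{(N)},p^\star)\to0$. Applying the same argument to the constant sequence $\pi^\star$ gives $J_1^N(\pi^{N|\star})\to J_{\mu^\star}(\pi^\star,p^\star)=V_{\mu^\star}$.

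It remains to bound $J_{\mu^\star}(\pi^{(N)},p^\star)\le V_{\mu^\star}$. This is the main obstacle: $p^\star$ is worst-case only against $\pi^\star$, so a deviator facing a fixed $p^\star$ could in principle do better. The tool is the robust dynamic programming principle, namely the contractive robust Bellman operator and the saddle structure established with Theorem~\ref{thm:existence-smfe}. At $\mu^\star$, $\pi^\star$ is greedy with respect to the robust value $V^\star$, and $p^\star$ attains the inner infimum against $V^\star$. Since $\mathfrak{P}(s,a,\mu^\star)$ is convex and compact and the one-step objective is bilinear in the action distribution and the next-state law, a minimax (Sion) argument at each state shows that $(\pi^\star(\cdot|s),p^\star(\cdot|s,\cdot))$ can be taken as a saddle point of the robust Bellman step. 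If the equilibrium kernel is chosen this way, then $V^\star$ is a fixed point of the Bellman optimality operator of the MDP with kernel $p^\star$. Hence $\sup_\pi J_{\mu^\star}(\pi,p^\star)=V_{\mu^\star}$.

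If the given $p^\star$ is not such a saddle kernel, a fallback is available. Assumption~\ref{ass:one-step-law} already ties the deviator's worst-case law to $p^\star$, and the robust payoff with nature optimizing is at most the payoff obtained by using kernels from $\mathfrak{P}(s,a,e^N)$ that approximate $p^\star$, which is justified by lower hemicontinuity. This gives the inequality directly in the limit. Either way, combining the pieces shows $\limsup_N\bigl(J_1^N(\pi^{N|(N)})-J_1^N(\pi^{N|\star})\bigr)\le0$, which contradicts the assumed $\varepsilon$ gap.
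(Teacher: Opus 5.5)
Your main line is the paper's proof. It uses the contradiction argument with the deviation subsequence padded by $\pi^\star$, and Assumption~\ref{ass:one-step-law} to replace both $N$-player payoffs by proxy payoffs; this is the paper's Proposition~\ref{prop:actual-to-proxy-asymptotic}, and your truncation/subsequence substitute for Lemma~\ref{lem:uc} is fine. It closes with the bound $\sup_\pi J_{\mu^\star}(\pi,p^\star)=\langle\mu^\star,v_{\mu^\star}\rangle=V_{\mu^\star}$, which is Lemma~\ref{lem:reduction}.

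You are right that this last step is the delicate one. It needs $p^\star(\cdot|s,a)\in\widehat{\mathfrak P}(s,a,\mu^\star)$ for \emph{every} $(s,a)$, whereas Definition~\ref{def:smfe} forces this only on pairs visited under $(\pi^\star,p^\star)$ from $\mu^\star$. The paper uses this property silently: Lemma~\ref{lem:reduction} is stated for $p^\mu\in\WC(\mu)$, which holds for the Kakutani equilibrium and for fixed points as in Lemma~\ref{lem:alg_fixed_point_equiv}. No Sion argument is needed. By $(s,a)$-rectangularity, being a saddle of the Bellman step is exactly pointwise minimality for each action, and then $v_{\mu^\star}=\widetilde{\mathcal T}_{\mu^\star}v_{\mu^\star}$ is immediate.

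What does not work is your treatment of a non-saddle $p^\star$.

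First, you cannot re-choose the kernel after the fact. Assumption~\ref{ass:one-step-law} is stated for the proxy chain driven by the \emph{given} $p^\star$. Altering $p^\star$ at off-path pairs changes the deviators' proxy chains, so the assumption no longer applies.

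Second, the fallback is circular. Letting nature use kernels that approximate $p^\star$ yields at best $\limsup_N J_1^N(\pi^{N|(N)})\le\limsup_N J_{\mu^\star}(\pi^{(N)},p^\star)$. Assumption~\ref{ass:one-step-law} already gives this, as a limit equality. Bounding $J_{\mu^\star}(\pi^{(N)},p^\star)$ by $V_{\mu^\star}$ is exactly what was in question, and Definition~\ref{def:smfe} alone does not provide it when $p^\star$ is favorable at actions $\pi^\star$ never plays. Moreover, Assumption~\ref{ass:one-step-law} says nothing about laws generated by non-minimizing kernels, so passing to the limit there needs its own propagation-of-chaos argument.

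A repair goes as follows:
\begin{enumerate}
\item For player $1$, let nature use the Euclidean nearest point in $\mathfrak P(s,a,e^N(s^N))$ to some $\hat p(\cdot|s,a)\in\widehat{\mathfrak P}(s,a,\mu^\star)$. For players $i\ge 2$, use the nearest point to $p^\star(\cdot|s,a)$.
\item Lower hemicontinuity makes both selections continuous at $\mu^\star$.
\item An induction on $t$, using $\mu^\star K_{\pi^\star,p^\star}=\mu^\star$ and the conditional law of large numbers, gives $e^N(s^N_t)\to\mu^\star$ in probability for each $t$, uniformly in the deviation.
\item Hence $\limsup_N J_1^N(\pi^{N|(N)})\le\limsup_N J_{\mu^\star}(\pi^{(N)},\hat p)\le V_{\mu^\star}$ by Lemma~\ref{lem:reduction}.
\end{enumerate}
Alternatively, state the restriction $p^\star\in\WC(\mu^\star)$ explicitly, since that is what your main line and the paper's argument actually prove.
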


\subsection{Non-Asymptotic Approximation}
\label{sec:nonasymptotic-approximation}

We now derive finite-$N$ approximation bounds. We will mainly study the difference between the actual $N$-player robust game and the proxy chain driven by the equilibrium worst-case kernel $p^\star$. Specifically, let
\begin{align}
X := \mathcal{S}\times\mathcal{A}\times\mathcal{S},
\qquad
Z := X\times \Delta(\mathcal{S}),
\end{align}\label{eq:Z}
and equip $Z$ with the metric
\begin{align}
d_Z\bigl((x,\nu),(\tilde x,\tilde \nu)\bigr)
:=
\mathbf 1_{\{x\neq \tilde x\}} + \|\nu-\tilde \nu\|_1,
\end{align}
and let $W_1$ be the corresponding Wasserstein-$1$ distance on $\mathcal P(Z)$.

In contrast with the asymptotic approximation result, the finite-$N$ analysis requires an explicit rate rather than mere weak convergence. We therefore impose a quantitative hypothesis directly on the one-step law faced by the deviating player. Our assumption does not require a particular realization of the minimizing $N$-player kernel, and it only controls the law of the local transition together with the empirical distribution.

\begin{assumption}
\label{ass:proxy-comparison}
There exists a deterministic array
$(\delta_{N,t})_{N\in\mathbb N,\; t\in\mathbb N_0}$ with $\delta_{N,t}\ge 0$ such that, for every $N\in\mathbb N$, every unilateral deviation policy $\pi\in \Pi$, and every $t\in\mathbb N_0$,
\begin{align}
W_1\!\left(Q_t^{N,\pi},Q_t^\pi\right)\le \delta_{N,t},
\end{align}
where $Q_t^{N,\pi}$ is the law of $\bigl(s_t^1,a_t^1,s_{t+1}^1,e^N(s_t^N)\bigr)$ 
under the actual $N$-player robust game with profile
$\pi^{N|\star,-1}\oplus \pi$, and $Q_t^\pi$ is the law of $\bigl(s_t,a_t,s_{t+1},\mu^\star\bigr)$ 
under the proxy chain
\begin{align}
s_0\sim \mu^\star,\quad
a_t\sim \pi(\cdot\mid s_t),\quad
s_{t+1}\sim p^\star(\cdot\mid s_t,a_t).
\end{align}
\end{assumption}
\begin{remark}
\label{rem:interpret-nonasymp}
Assumption~\ref{ass:proxy-comparison} is the quantitative counterpart of the one-step law convergence used in the asymptotic approximation. It is formulated directly at the level of
\[
Z_t^{N,\pi}:=(s_t^1,a_t^1,s_{t+1}^1,\mu_t^N)
\]
under the deviating $N$-player robust game and the proxy variable
\[
Z_t^\pi:=(s_t,a_t,s_{t+1},\mu^\star)
\]
under the mean-field worst-case kernel $p^\star$.

We note that the projections $(x,\nu)\mapsto \nu$ and $(x,\nu)\mapsto x$ are $1$-Lipschitz under the product metric used in the assumption. Consequently, Assumption~\ref{ass:proxy-comparison} simultaneously controls the empirical distribution and the local one-step dynamics. In particular,
\[
W_1\bigl(\mathrm{Law}(\mu_t^N),\delta_{\mu^\star}\bigr)\le \delta_{N,t},
\text{ hence }
\mathbb{E}\bigl[\|\mu_t^N-\mu^\star\|_1\bigr]\le \delta_{N,t},
\]
and also
\[
W_1\!\Bigl(\mathrm{Law}(s_t^1,a_t^1,s_{t+1}^1),\mathrm{Law}(s_t,a_t,s_{t+1})\Bigr)\le \delta_{N,t},
\]
where on $\mathcal{S}\times \mathcal{A}\times \mathcal{S}$ we use the discrete metric. Since the latter space is finite, this is equivalent to total-variation control of the local triple. Thus $\delta_{N,t}$ measures a combined finite-$N$ Nash-certainty-equivalence error for both the deviator and the population.


Finally, the bound is required uniformly over unilateral deviations $\pi$, because the Nash gap involves the supremum over all such deviations. The dependence of $\delta_{N,t}$ on $t$ is also natural: finite-$N$ coupling errors may accumulate with time, and the theorem only needs the discounted series in the final bound to remain finite.
\end{remark}

We further make the following standard Lipschitz assumption on reward, and derive the following finite-$N$ equilibrium guarantee\footnote{In \Cref{app:finiteN-proxy}, we derive the results under a more general H\"older assumption.}. 
\begin{assumption}
\label{ass:holder-reward}
There exist $L_r>0$ such that, for all
$s\in S$, $a\in A$, $s'\in S$, and all $\nu,\tilde \nu\in \Delta(\mathcal{S})$,
\begin{align}
|r(s,a,s',\nu)-r(s,a,s',\tilde \nu)|
\le
L_r \|\nu-\tilde \nu\|_1.
\end{align}
\end{assumption}

\begin{theorem}
\label{thm:finiteN-holder}
Assume Assumptions~\ref{ass:stat}, \ref{ass:proxy-comparison}, and~\ref{ass:holder-reward}. Let
$(\mu^\star,\pi^\star,p^\star)$ be a stationary robust mean-field equilibrium and let
\begin{align}
\pi^{N|\star}:=(\pi^\star,\ldots,\pi^\star)\in \Pi^N .
\end{align}
Then $\pi^{N|\star}$ is an $\varepsilon_N$-Nash equilibrium of the $N$-player robust game, where
\begin{align}
\varepsilon_N
:=
2\sum_{t=0}^\infty \gamma^t
\Bigl(
2\|r\|_\infty
+
L_r
\Bigr)\delta_{N,t}.
\end{align}
Moreover, if there exists $C_\delta>0$ such that\footnote{The discussion of this condition is deferred to \Cref{sec:appendix-d5}.}
\begin{align}
\delta_{N,t}\le \frac{C_\delta(1+t)}{\sqrt N}
\qquad\forall\, N\in\mathbb N,\; t\in\mathbb N_0,
\end{align}
then
\begin{align}
\varepsilon_N
\le \frac{2(2\|r\|_\infty+L_r)C_\delta}{(1-\gamma)^2\sqrt N}=\mathcal{O}\!\left(N^{-1/2}\right).
\end{align}
\end{theorem}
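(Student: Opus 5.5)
The plan is to compare, for each unilateral deviation $\pi\in\Pi$ of player~$1$, the robust $N$-player payoff $J_1^N(\pi^{N|\star,-1}\oplus\pi)$ with the proxy value $J_{\mu^\star}(\pi,p^\star)$ of the mean-field chain driven by $(\mu^\star,\pi,p^\star)$. We then chain the resulting inequalities through the equilibrium properties of $(\mu^\star,\pi^\star,p^\star)$. Concretely, we aim for the uniform bound
\[
\bigl|J_1^N(\pi^{N|\star,-1}\oplus\pi)-J_{\mu^\star}(\pi,p^\star)\bigr|\le \tfrac12\varepsilon_N\qquad\forall\,\pi\in\Pi .
\]
Given this, for any $\pi$,
\[
J_1^N(\pi^{N|\star,-1}\oplus\pi)\le J_{\mu^\star}(\pi,p^\star)+\tfrac12\varepsilon_N\le J_{\mu^\star}(\pi^\star,p^\star)+\tfrac12\varepsilon_N\le J_1^N(\pi^{N|\star})+\varepsilon_N .
\]
The last step is the same bound applied with $\pi=\pi^\star$. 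Taking the supremum over $\pi$ then gives the $\varepsilon_N$-Nash property; by symmetry it suffices to treat player $1$.

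\textbf{The middle inequality.} We need $J_{\mu^\star}(\pi,p^\star)\le J_{\mu^\star}(\pi^\star,p^\star)$, which is the delicate point. Robust optimality and the worst-case property only give
\[
\inf_p J_{\mu^\star}(\pi,p)\le V_{\mu^\star}=J_{\mu^\star}(\pi^\star,p^\star),
\]
not an inequality at the fixed kernel $p^\star$. I would handle this by reading $Q_t^\pi$ in Assumption~\ref{ass:proxy-comparison} as the law of the worst case for the deviator. That is, the proxy comparison should be made against the robust proxy value $V^\pi_{\mu^\star}=\inf_p J_{\mu^\star}(\pi,p)$. For the deviation side we only need an upper bound, and the inequality $J_1^N(\cdot)\le \text{(proxy via }p^\star)$ is the direction controlled by the assumption. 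If the assumption's literal form (proxy with $p^\star$) is kept, I would instead use that the deviation payoff is an infimum, so it is at most the value under any particular admissible law. I would then invoke Assumption~\ref{ass:proxy-comparison} at the minimizing law to sandwich it. This step is the main obstacle, and I would first try to resolve it by noting that the assumption is stated for the actual robust game law, so it directly pins the deviator's realized one-step laws near those of the $p^\star$ proxy.

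\textbf{Per-step reward gap.} Write both payoffs as $\sum_t\gamma^t\,\E[g(Z_t)]$ with $g(s,a,s',\nu)=r(s,a,s',\nu)$. On the $N$-player side the expectation is taken under the minimizing law $P^{N}$, which exists by Lemma~\ref{lem:finite-player-robust-dp}. Then
\[
|\E g(Z_t^{N,\pi})-\E g(Z_t^\pi)|\le \mathrm{Lip}_{d_Z}(g)\,W_1(Q_t^{N,\pi},Q_t^\pi).
\]
Under $d_Z$, the function $g$ changes by at most $2\|r\|_\infty$ when $x\neq\tilde x$ and by at most $L_r\|\nu-\tilde\nu\|_1$ in $\nu$, by Assumption~\ref{ass:holder-reward}. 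Hence $\mathrm{Lip}(g)\le 2\|r\|_\infty+L_r$, and Kantorovich duality gives the per-step gap $(2\|r\|_\infty+L_r)\delta_{N,t}$. Summing with weights $\gamma^t$, which is justified by boundedness of $r$ and dominated convergence, yields the bound $\tfrac12\varepsilon_N$; the extra factor $2$ comes from using the comparison twice, once at $\pi$ and once at $\pi^\star$.

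\textbf{Rate.} Plugging in $\delta_{N,t}\le C_\delta(1+t)/\sqrt N$ and using $\sum_t\gamma^t(1+t)=(1-\gamma)^{-2}$ gives the stated $\mathcal O(N^{-1/2})$ bound.
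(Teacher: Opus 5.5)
Your outer chain of inequalities, the per-step bound, and the rate computation all match the paper. The per-step bound uses Kantorovich duality with $\mathrm{Lip}_{d_Z}(r)\le 2\|r\|_\infty+L_r$, which is equivalent to the paper's optimal-coupling estimate. The genuine gap is the middle inequality $J_{\mu^\star}(\pi,p^\star)\le J_{\mu^\star}(\pi^\star,p^\star)$: you correctly flag it but never establish it, and none of your proposed fixes works.

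\begin{itemize}
\item Reading $Q_t^\pi$ as the law under the deviator's own worst-case kernel changes the hypothesis. Assumption~\ref{ass:proxy-comparison} fixes the proxy kernel to be $p^\star$ for every $\pi$, so this route proves a different theorem.
\item Invoking the assumption at the minimizing law only yields $J_1^N(\pi^{N|\star,-1}\oplus\pi)\approx J_{\mu^\star}(\pi,p^\star)$. That is precisely the quantity you still have to compare with $J_{\mu^\star}(\pi^\star,p^\star)$.
\item Bounding the $N$-player infimum by its value under some other admissible $N$-player law would need a comparison between that law and a mean-field chain driven by a kernel other than $p^\star$. The assumption provides no such comparison, and the paper's counterexample shows it is not automatic, since nature can steer $e^N$.
\end{itemize}

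The missing idea is the saddle-point (reduction) property of $(s,a)$-rectangular robust MDPs, Lemma~\ref{lem:reduction}. Freezing a per-$(s,a)$ worst-case kernel turns the robust problem into an ordinary MDP whose optimal value equals the robust value. Suppose $p^\star(\cdot\mid s,a)\in\widehat{\mathfrak{P}}(s,a,\mu^\star)$ for every $(s,a)$. Then $\sum_{s'}p^\star(s'\mid s,a)\bigl(r(s,a,s',\mu^\star)+\gamma v_{\mu^\star}(s')\bigr)=Q_{\mu^\star}(s,a)$, so $v_{\mu^\star}$ is a fixed point of the non-robust Bellman optimality operator with kernel $p^\star$. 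Since that operator is a $\gamma$-contraction, $v_{\mu^\star}$ is the optimal value of the $p^\star$-MDP. Hence $J_{\mu^\star}(s;\pi,p^\star)\le v_{\mu^\star}(s)$ for all $\pi$ and $s$.

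Averaging against $\mu^\star$, and using Proposition~\ref{prop:statewise-vs-distributional} and Definition~\ref{def:smfe}(ii), gives $J_{\mu^\star}(\pi,p^\star)\le\langle\mu^\star,v_{\mu^\star}\rangle=V_{\mu^\star}=J_{\mu^\star}(\pi^\star,p^\star)$. This is exactly how the paper closes the argument. It defines $V^\star$ as the optimal value of the $p^\star$-MDP started from $\mu^\star$, and cites Lemma~\ref{lem:reduction} to say the supremum is attained at $\pi^\star$.

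What is used here is the per-$(s,a)$ worst-case property of $p^\star$, which holds for the equilibria built in Theorem~\ref{thm:corrected-existence} and Lemma~\ref{lem:alg_fixed_point_equiv}. Your instinct that conditions (i)--(ii) of Definition~\ref{def:smfe} alone do not suffice is correct. They leave $p^\star$ unconstrained at state--action pairs not visited under $(\pi^\star,p^\star)$ from $\mu^\star$, and there a deviator could exploit a favourable $p^\star$. The remedy is to use this statewise structure, not to modify the assumption.
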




\section{Algorithmic Solutions for Robust MFGs}
In this section, we further propose an iterative algorithm to compute a mean-field equilibrium of the robust MFG. We will first develop a fixed-point characterization of the mean-field equilibrium, and then design an algorithm with convergence guarantees.

\subsection{Mean-Field Equilibria as Fixed Points}

Fix $\mu\in\Delta(\mathcal{S})$. For $v\in\mathbb{R}^S$ define the robust $Q$-operator as 
\begin{align}
(Q_\mu v)(s,a)\triangleq
\min_{P\in\mathfrak{P}(s,a,\mu)}
\sum_{s'\in S} P(s')\Big(r(s,a,s',\mu)+\gamma v(s')\Big),
\nonumber
\end{align}
and the robust Bellman operator as
\begin{align}
(T_\mu v)(s) \triangleq \max_{a\in A} (Q_\mu v)(s,a).
\label{eq:alg_Tmu}
\end{align}
Since $T_\mu$ is a $\gamma$-contraction in $\|\cdot\|_\infty$, it admits a unique fixed point $v_\mu$ satisfying $v_\mu = T_\mu v_\mu,$ 
and we further define the optimal robust $Q$-function by $Q_\mu(s,a) ~:=~ (Q_\mu v_\mu)(s,a).$

Moreover, a robust best response at $\mu$ can be obtained by selecting  
\begin{align}
a_\mu(s)\in \arg\max_{a\in A} Q_\mu(s,a),\qquad \pi_\mu(\cdot\mid s)=\delta_{a_\mu(s)},
\label{eq:alg_policy_selection}
\end{align}
and for each $(s,a)$, set $p_\mu(\cdot\mid s,a)$ as 
\begin{align} 
\arg\min_{P\in\mathfrak{P}(s,a,\mu)}
\E_{s'\sim P}\Big[r(s,a,s',\mu)+\gamma v_\mu(s')\Big].
\label{eq:alg_kernel_selection}
\end{align}
This triple $(\mu,\pi_\mu,p_\mu)$ hence induces a Markov kernel on $\mcs$: 
\begin{align}
K_\mu(s'\mid s) ~:=~ \sum_{a\in A} \pi_\mu(a\mid s)\,p_\mu(s'\mid s,a),
\label{eq:alg_Kmu}
\end{align}
and we define an operator $F:\DeltaS\to\DeltaS$ as
\begin{align}
F(\mu) ~:=~ \mu K_\mu.
\label{eq:alg_F}
\end{align}

We then prove that, any fixed point of $F$ is a MFE.
\begin{lemma}
\label{lem:alg_fixed_point_equiv}
Let $\mu\in\Delta(\mathcal{S})$ and suppose $(\pi_\mu,p_\mu)$ are selected as in
\eqref{eq:alg_policy_selection}--\eqref{eq:alg_kernel_selection}.
If $\mu$ satisfies the fixed-point condition
\begin{align}
\mu = F(\mu)=\mu K_\mu,
\label{eq:alg_mu_fixed_point}
\end{align}
then $(\mu,\pi_\mu,p_\mu)$ is a stationary robust MFE.
\end{lemma}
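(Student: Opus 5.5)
Consistency (iii) is immediate: the fixed-point condition $\mu=\mu K_\mu$ written out coordinatewise is exactly \eqref{eq:inv-smfe} with $(\pi^\star,p^\star)=(\pi_\mu,p_\mu)$. The work is therefore to show that, at this fixed $\mu$, the pair $(\pi_\mu,p_\mu)$ satisfies robust optimality (i) and the worst-case kernel property (ii). Both are statements about the single-agent robust MDP with rectangular uncertainty set $\mathfrak{P}(\mu)=\otimes_{(s,a)}\mathfrak{P}(s,a,\mu)$. They will follow from the robust dynamic programming principle for the contraction $T_\mu$ in \eqref{eq:alg_Tmu}. The plan is to show that the state-wise robust values all equal $v_\mu$ and then average over $s_0\sim\mu$.

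Step 1 is a policy-evaluation identity. For a fixed stationary kernel $p\in\mathfrak{P}(\mu)$ and policy $\pi$, the vector $J^{s}_\mu(\pi,p)$ (the value started from state $s$) is the unique fixed point of the linear $\gamma$-contraction
\[
v\mapsto \sum_a\pi(a|s)\sum_{s'}p(s'|s,a)\bigl(r(s,a,s',\mu)+\gamma v(s')\bigr).
\]
Since $\pi_\mu$ is greedy for $Q_\mu v_\mu$ and $p_\mu$ attains the minimum in \eqref{eq:alg_kernel_selection}, substituting $v_\mu$ into this map gives $T_\mu v_\mu=v_\mu$. Hence $J^{s}_\mu(\pi_\mu,p_\mu)=v_\mu(s)$.

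Step 2 is the lower bound $J^{s}_\mu(\pi_\mu,p)\ge v_\mu(s)$ for every admissible $p$, including history-dependent or time-varying sequences. The inequality $\sum_{s'}p_t(s'|s,a)\bigl(r+\gamma v_\mu(s')\bigr)\ge (Q_\mu v_\mu)(s,a)=v_\mu(s)$ holds for $a=a_\mu(s)$ and any $p_t(\cdot|s,a)\in\mathfrak{P}(s,a,\mu)$. I will telescope it over a finite horizon $T$:
\[
\E\Big[\sum_{t<T}\gamma^t r_t+\gamma^T v_\mu(s_T)\Big]\ge v_\mu(s_0),
\]
then let $T\to\infty$ using boundedness of $r$ and $v_\mu$. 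This gives $\inf_p J_\mu(\pi_\mu,p)=J_\mu(\pi_\mu,p_\mu)=\mu\cdot v_\mu$, which is (ii) once (i) identifies $V_\mu$ with this value.

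Step 3 is the upper bound $\inf_p J^{s}_\mu(\pi,p)\le v_\mu(s)$ for every policy $\pi$, including randomized ones. This yields $V_\mu\le \mu\cdot v_\mu=\inf_pJ_\mu(\pi_\mu,p)\le V_\mu$, proving (i). For each $(s,a)$ I will pick $\tilde p(\cdot|s,a)$ attaining the minimum defining $(Q_\mu v_\mu)(s,a)$, which exists because $\mathfrak{P}(s,a,\mu)$ is compact and the objective is linear. Then $(Q_\mu v_\mu)(s,a)\le v_\mu(s)$ for all $a$, and averaging over $\pi(\cdot|s)$ gives
\[
\sum_a\pi(a|s)\sum_{s'}\tilde p(s'|s,a)\bigl(r+\gamma v_\mu(s')\bigr)\le v_\mu(s).
\]
The same telescoping as in Step 2 then gives $J^{s}_\mu(\pi,\tilde p)\le v_\mu(s)$. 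Here rectangularity is essential: it is what allows $\tilde p$ to be chosen independently at each $(s,a)$.

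The main obstacle is a definitional subtlety rather than a technical one. The definition of $V_\mu$ averages over $s_0\sim\mu$, and the inner infimum may range over history-dependent kernels (cf.\ Proposition~\ref{prop:history}). I need the state-wise bounds to hold against that broad adversary class, so the telescoping argument in Step 2 must be written for arbitrary adapted kernel sequences, not just stationary ones. Once those bounds are in hand, linearity of the $\mu$-average passes them to $J_\mu$, and all three conditions of Definition~\ref{def:smfe} hold.
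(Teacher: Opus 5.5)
Your proposal is correct and follows essentially the same route as the paper. Consistency is read off from $\mu=\mu K_\mu$, and conditions (i)--(ii) come from the state-wise robust DP identities at the fixed $\mu$, namely $v_\mu(s)=\inf_p J_\mu(s;\pi_\mu,p)=J_\mu(s;\pi_\mu,p_\mu)$ and $\inf_p J_\mu(s;\pi,p)\le v_\mu(s)$ for every $\pi$, averaged over $s_0\sim\mu$; the only difference is that you re-derive these inline (telescoping, including against history-dependent adversaries, and the fixed worst-case-kernel upper bound), whereas the paper cites Proposition~\ref{prop:robust-dp} and Proposition~\ref{prop:statewise-vs-distributional}.
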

This result thus enables us to reduce the problem of solving a robust MFG to finding a fixed point of $F$. 
Based on this, we design our algorithm as follows.


\begin{algorithm}[!htb]
\caption{Robust Best-Response Iteration}
\label{alg:rb_picard}
\begin{algorithmic}[1]
\State \textbf{Input} initial distribution $\mu^0 \in \Delta(\mathcal{S})$; stepsize $\alpha$; DP tolerance $\varepsilon_{\mathrm{DP}} > 0$; mean-field tolerance $\varepsilon_{\mu} > 0$.
\For{$k=0,1,2,\dots$}
\State Initialize $v^{(0)} \in \mathbb{R}^S$, $m\leftarrow 0$
\While{$\|v^{(m+1)} - v^{(m)}\|_\infty > \varepsilon_{\mathrm{DP}}$}
    \State $v^{(m+1)} \leftarrow T_{\mu^k} v^{(m)}$
    \State $m \leftarrow m+1$
    \EndWhile
    \State $v_{\mu^k} \leftarrow v^{(m)}$
    \State  Select $a^k(s) \in \arg\max_{a \in A} (Q_{\mu^k} v_{\mu^k})(s,a)$ \State $\pi^k(\cdot \mid s) \leftarrow \delta_{a^k(s)}$
    \State Select $p^k(\cdot \mid s,a) \in \arg\min_{P \in \mathfrak{P}(s,a,\mu^k)} \sum_{s' \in S} P(s') \Big( r(s,a,s',\mu^k) + \gamma v_{\mu^k}(s') \Big)$.
    \State $K^k(s' \mid s) \leftarrow \sum_{a \in A} \pi^k(a \mid s) p^k(s' \mid s,a)$ 
    \State $\tilde{\mu}^{k+1} \leftarrow \mu^k K^k$.
    \State $\mu^{k+1} \leftarrow (1-\alpha)\mu^k + \alpha \tilde{\mu}^{k+1}$.
    \If{$\|\mu^{k+1} - \mu^k\|_1 \le \varepsilon_{\mu}$}
        \State \textbf{break}
    \EndIf
\EndFor
\State \Return $(\mu^{k+1}, \pi^k, p^k)$.
\end{algorithmic}
\end{algorithm}

\subsection{Convergence Analysis}
\label{subsec:alg_convergence}
Then we derive the convergence analysis of the algorithm. We first highlight that, even in a non-robust stationary MFG, it is not always guaranteed that $F$ has a fixed point \citep{huang2006large}, and additional structural assumptions are needed, e.g., \citep{cardaliaguet2017learning,perrin2020fictitious,geist2021concave,perolat2022mastering}. We hence adapt the standard contraction assumption as \footnote{To make \Cref{ass:alg_contractivity} more applicable, we further extend our algorithm design and results by replacing $\arg\max$ in \eqref{eq:alg_policy_selection} by a softmax policy, which allows us to ensure convergence under weaker and more verifiable assumptions. See our discussion in \Cref{subsec:soft_rb_picard}.}.

\begin{assumption}
\label{ass:alg_contractivity}
For a Markov kernel $K$, define its Dobrushin coefficient by
\begin{align}
\alpha(K) ~:=~ \max_{s,\tilde s\in S} d_{\mathrm{TV}}\big(K(\cdot\mid s),K(\cdot\mid \tilde s)\big).
\label{eq:alg_dobrushin}
\end{align}
Assume there exist constants $\rho_{\mathrm{mix}}\in[0,1)$ and $L_K\ge 0$ such that for all $\mu,\tilde\mu\in\Delta(\mathcal{S})$:
\begin{align}
\alpha(K_\mu) &\le \rho_{\mathrm{mix}},
\label{eq:alg_mix_assump}\\
\max_{s\in S}\|K_\mu(\cdot\mid s)-K_{\tilde\mu}(\cdot\mid s)\|_1
&\le
L_K\|\mu-\tilde\mu\|_1.
\label{eq:alg_LK_assump}
\end{align}
And it holds that $\rho ~:=~ \rho_{\mathrm{mix}}+L_K ~<~ 1.$
\end{assumption}
Notably, as discussed in \Cref{subsec:alg_discussion}, a broad range of distributional uncertainty sets satisfy the approximate Lipschitz condition, and thus our results hold for these sets with small Lipschitz coefficients. 
\begin{remark}
The assumption is standard and inevitable even in non-robust MFGs \citep{huang2006large,carlini2014fully,
guo2019learning,anahtarci2023q,yardim2024mean,yang2026mean}, and any best-response-based or iteration-typed algorithms can be unstable without it \citep{cui2021approximately,
chassagneux2019numerical,
lauriere2021numerical}. We hence adapt this assumption to our robust setting.

On the other hand, there are extensive studies developing different techniques to ensure stability and convergence under weaker conditions, e.g., \citep{cardaliaguet2017learning,hadikhanloo2019finite,hadikhanloo2018learning,perrin2020fictitious,
geist2021concave,
lavigne2023generalized,
delarue2025exploration,
subramanian2019reinforcement,
mguni2018decentralised,
angiuli2022unified,
xie2021learning}. We leave adaptations of these techniques to robust MFGs as our future work. 
\end{remark}

Under this assumption, we can prove the convergence of our \Cref{alg:rb_picard}. Note that the algorithm updates may not be exact due to tolerance errors. We thus develop our analysis with an implemented population update $\widehat F(\mu^k)$ satisfying
\begin{align}
\|\widehat F(\mu^k)-F(\mu^k)\|_1 \le \varepsilon_k.
\label{eq:alg_inexact_update}
\end{align}

\begin{theorem}
\label{thm:alg_inexact}
Assume Assumption~\ref{ass:alg_contractivity} and that Algorithm~\ref{alg:rb_picard} uses the inexact update
$\mu^{k+1}=(1-\alpha)\mu^k+\alpha\widehat F(\mu^k)$ with \eqref{eq:alg_inexact_update}.
Then, denoting $q:=1-\alpha(1-\rho)<1$, it holds that
\begin{align}
\|\mu^{k}-\mu^\star\|_1
~\le~
q^{k}\|\mu^{0}-\mu^\star\|_1
~+~
\alpha\sum_{j=0}^{k-1} q^{k-1-j}\varepsilon_j.
\end{align}
In particular, if $\sup_j\varepsilon_j\le \bar\varepsilon$, then
$\limsup_{k\to\infty}\|\mu^{k}-\mu^\star\|_1 \le \bar\varepsilon/(1-\rho)$.
\end{theorem}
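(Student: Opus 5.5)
The plan is to show that $F$ is a $\rho$-contraction on $(\DeltaS,\|\cdot\|_1)$, then run the standard perturbed Krasnosel'skii--Mann recursion for the damped update. Under Assumption~\ref{ass:alg_contractivity} this also gives existence and uniqueness of the fixed point $\mu^\star$ of $F$ by Banach's theorem. By Lemma~\ref{lem:alg_fixed_point_equiv}, $\mu^\star$ is a stationary robust MFE, so $\mu^\star$ in the statement is well defined.

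\textbf{Step 1: contraction of $F$.} For $\mu,\tilde\mu\in\DeltaS$, split
\begin{align*}
F(\mu)-F(\tilde\mu)=(\mu-\tilde\mu)K_\mu+\tilde\mu\,(K_\mu-K_{\tilde\mu}).
\end{align*}
For the first term, $\mu-\tilde\mu$ has zero total mass. The classical Dobrushin inequality gives $\|(\mu-\tilde\mu)K\|_1\le \alpha(K)\|\mu-\tilde\mu\|_1$. To verify it, write $\mu-\tilde\mu=\tfrac12\|\mu-\tilde\mu\|_1(\nu_+-\nu_-)$ with probability measures $\nu_\pm$. Then $\|(\nu_+-\nu_-)K\|_1\le 2\max_{s,\tilde s}d_{\mathrm{TV}}(K(\cdot|s),K(\cdot|\tilde s))$, using $d_{\mathrm{TV}}=\tfrac12\|\cdot\|_1$. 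Hence this term is at most $\rho_{\mathrm{mix}}\|\mu-\tilde\mu\|_1$ by \eqref{eq:alg_mix_assump}.

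For the second term, $\tilde\mu$ is a probability vector. We get $\|\tilde\mu(K_\mu-K_{\tilde\mu})\|_1\le\sum_s\tilde\mu(s)\|K_\mu(\cdot|s)-K_{\tilde\mu}(\cdot|s)\|_1\le L_K\|\mu-\tilde\mu\|_1$ by \eqref{eq:alg_LK_assump}. Combining the two terms gives $\|F(\mu)-F(\tilde\mu)\|_1\le\rho\|\mu-\tilde\mu\|_1$.

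\textbf{Step 2: one-step recursion.} Since $\mu^\star=F(\mu^\star)=(1-\alpha)\mu^\star+\alpha F(\mu^\star)$, we have
\begin{align*}
\mu^{k+1}-\mu^\star=(1-\alpha)(\mu^k-\mu^\star)+\alpha\bigl(F(\mu^k)-F(\mu^\star)\bigr)+\alpha\bigl(\widehat F(\mu^k)-F(\mu^k)\bigr).
\end{align*}
The triangle inequality, Step 1 and \eqref{eq:alg_inexact_update} then give, for $\alpha\in(0,1]$,
\begin{align*}
\|\mu^{k+1}-\mu^\star\|_1\le q\|\mu^k-\mu^\star\|_1+\alpha\varepsilon_k .
\end{align*}
Here $q=1-\alpha(1-\rho)\in[\rho,1)$.

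\textbf{Step 3: unrolling.} Induction on $k$ yields the stated bound. If $\varepsilon_j\le\bar\varepsilon$, the sum is at most $\alpha\bar\varepsilon/(1-q)=\bar\varepsilon/(1-\rho)$. The term $q^k\|\mu^0-\mu^\star\|_1$ vanishes as $k\to\infty$, which gives the limsup claim.

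The only subtle point is Step 1, namely the Dobrushin contraction for signed zero-mass measures. A second point is whether $\widehat F(\mu^k)$ stays in $\DeltaS$. This is not needed, since $\ell_1$ bounds suffice, though convexity keeps $\mu^{k+1}\in\DeltaS$ when $\widehat F(\mu^k)\in\DeltaS$. Note also that the Lipschitz condition on $K_\mu$ implicitly presumes a consistent selection of $(\pi_\mu,p_\mu)$, which I take as given by the assumption.
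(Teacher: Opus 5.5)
Your proposal is correct and follows essentially the same route as the paper: establish the $\rho$-contraction of $F$ by combining the Dobrushin bound with the $L_K$ bound, then derive the perturbed recursion $\|\mu^{k+1}-\mu^\star\|_1\le q\|\mu^k-\mu^\star\|_1+\alpha\varepsilon_k$ and unroll it. The only differences are cosmetic: you add and subtract $\tilde\mu K_\mu$ rather than $\mu K_{\tilde\mu}$, which is harmless because the mixing bound is uniform in $\mu$, and you verify the Dobrushin inequality explicitly where the paper simply cites it.
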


Our result hence implies that Algorithm~\ref{alg:rb_picard} will converge to the population distribution $\mu^\star$ of a mean-field equilibrium. Moreover, by learning the corresponding optimal robust policy and the worst-case kernel $(\pi_{\mu^\star},p_{\mu^\star})$ under $\mu^\star$, we obtain a mean-field equilibrium. Our algorithm thus stands as the first concrete algorithm for stationary infinite-horizon robust MFGs with convergence guarantees.

\begin{remark} In our algorithm, computing a robust MFE thus reduces to iterating between (1) solving a robust MDP at fixed $\mu$ via contraction-based methods, and (2) updating $\mu$ via the induced Markov kernel. For a robust MDP, each robust Bellman update cost is polynomial in $S,A$ \citep{iyengar2005robust,nilim2004robustness}. Thus, the total complexity of solving a robust MFG does not scale with $N$, whereas the complexity of solving a standard robust Markov game generally scales as $A^N$ \citep{farhat2026sample}. Thus, our robust MFG provides a formulation of large MAS under model mismatches and circumvents the curse of multi-agency.
\end{remark}

\section{Numerical Experiments}
\label{sec:experiments}
In this section, we develop numerical experiments to validate our theoretical results. 

\textbf{Experiment environments.}
In our experiments, we construct a robust MFG with 
$
\mcs=\{0,1,\dots,S-1\}$  and $ \mca=\{0,1\}$. Action $a=0$ corresponds to \emph{stay}, while $a=1$ corresponds to \emph{move clockwise} (to $s+1$) with slip.

Let $u\in\DeltaS$ denote the uniform distribution on $\mcs$ and let $\eta\in(0,1)$ be a mixing parameter. We first define transition kernels $\bar p_0(\cdot\mid s,a)$ under different $(s,a)$ pairs, and set  the nominal transition kernel as the uniformly-mixed kernel $p_0(\cdot\mid s,a)\;:=\;(1-\eta)\,\bar p_0(\cdot\mid s,a)\;+\;\eta\,u(\cdot).$

Fix a goal state $s_g\in\mcs$ and parameters $R_g>0$ (goal reward), $\lambda_{\mathrm{cong}}\ge 0$ (congestion strength),and action costs $c(0),c(1)\ge 0$. We set the one-step reward as $r(s,a,s',\mu)\;:=\;R_g\,\mathbf 1\{s'=s_g\}\;-\;\lambda_{\mathrm{cong}}\,\mu(s')\;-\;c(a).$

We evaluate robustness under two rectangular ambiguity models defined by $D$ being $L_1$-norm and KL-divergence (for simplicity, we set them to be independent of $\mu$): $\mathfrak{P}_(s,a)
:=\Big\{p:\ D(p,p_0(\cdot\mid s,a))\le \rho\Big\}.$

\textbf{Experiment results.}
We then develop two experimental protocols to validate our theoretical results.

\textit{Robustness of robust MFGs.} Under both uncertainty sets, we will first obtain the robust MFE through our \Cref{alg:rb_picard} under a specific radius, and evaluate the robust value function of it under robust MFGs with different radii. Moreover, we find the non-robust MFE and plot its robust value under these robust MFGs as baselines. 

We plot the robust value v.s. the uncertainty set radius in \Cref{fig:robust}. As shown by the results, our robust MFE is much more reliable and stable when facing model uncertainties, whereas the vanilla MFE suffers from severe performance degradations under model mismatches. These experiments hence verify the enhanced robustness and stability of our robust MFG formulation, validating our theoretical results. 

\begin{figure}[!htb]
    \centering
\includegraphics[width=0.95\linewidth]{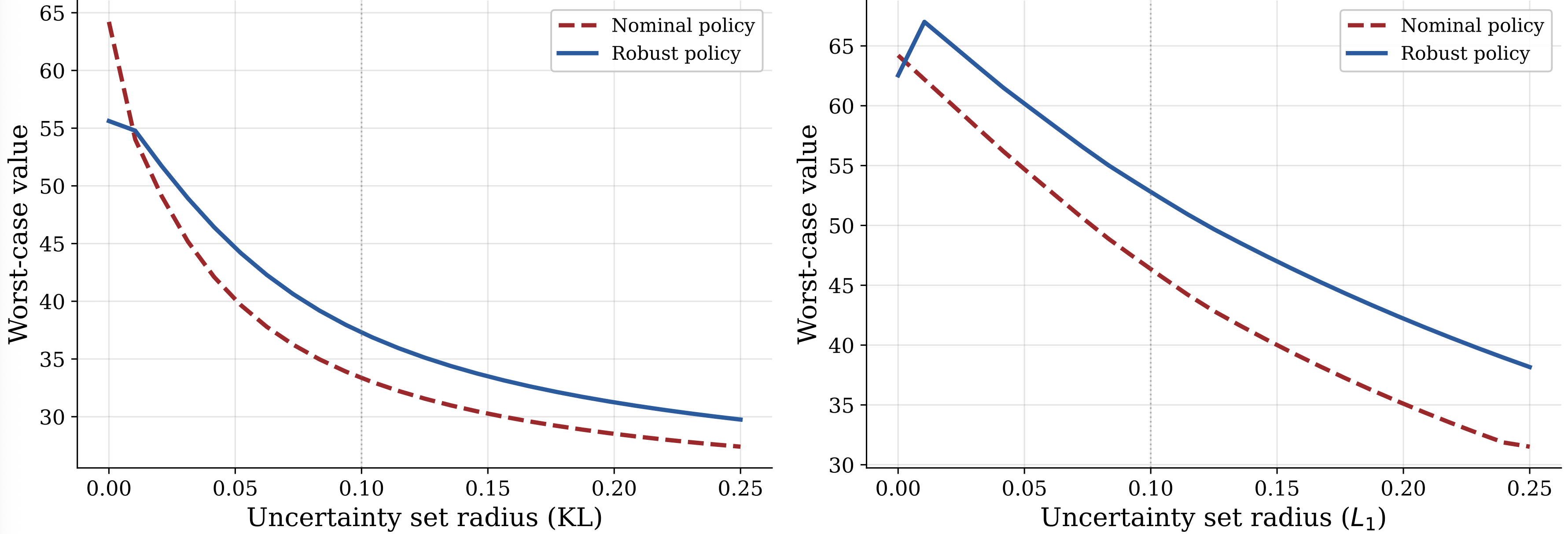}
    \caption{Robust MFE v.s. Non-robust MFE}
    \label{fig:robust}
\end{figure}

\textit{Approximation of finite-player robust games.} We then fix an ambiguity radius and compute the corresponding robust MFE $(\mu^\star,\pi^\star,p^\star)$.
Then, for each $N$, we compute the Nash gap of $\pi^\star$ under $\mu^\star$ and the corresponding $N$-player robust game, and plot the gap v.s. the number of players. 

Our results are presented in \Cref{fig:approximation}. As shown in the results, when the player number $N$ increases, the Nash gap of the robust MFE diminishes, i.e., it becomes an $\varepsilon$-Nash equilibrium in the $N$-player game. Moreover, the approximation slope is $\frac{1}{2}$ in the $\log$-scale, validating our non-asymptotic convergence rate. These results hence validate our theoretical results. 

\begin{figure}[!htb]
   \includegraphics[width=0.95\linewidth]{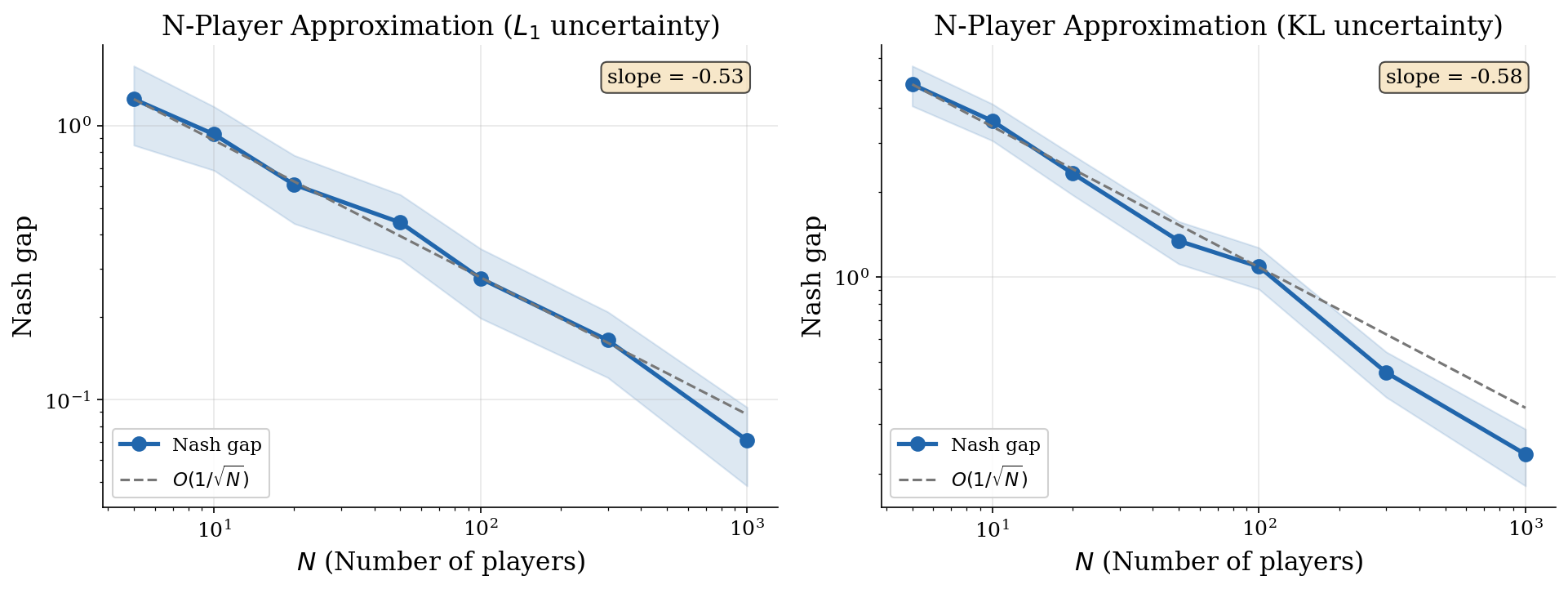}
    \caption{Approximation of $N$-Player Robust Games}
    \label{fig:approximation}
\end{figure}

\section{Conclusion}
In this paper, we proposed and investigated stationary robust mean-field games (MFGs). We first proved its solvability, and developed comprehensive studies on the approximation of the robust MFGs to finite-agent robust games, revealing both asymptotic and non-asymptotic approximations as the multi-agent system becomes large.  We further designed a concrete algorithm for robust MFGs with provable convergence guarantees. Our studies provide a tractable foundation for large-population multi-agent systems under model mismatches, which serve as a potential efficient solution to close the Sim-to-Real gap in large MAS.

\section*{Acknowledgements}
This work was supported by an Amazon Research Award, Fall 2025. Any opinions, findings, and conclusions or recommendations expressed in this material are those of the author(s) and do not reflect the views of Amazon.

\newpage
\bibliography{uai2026-template}

\newpage
\appendix
\onecolumn
\appendix
\onecolumn

\section{Stationary discounted robust mean-field games}\label{app:smfg}

\subsection{Robust discounted dynamic programming}

For fixed $\mu\in\DeltaS$ and $v\in\R^{\mcs}$ define the robust $Q$-operator
\begin{equation}\label{eq:Qop}
(\mathcal{Q}_\mu v)(s,a)
:=\min_{P\in\mathfrak{P}(s,a,\mu)}\ \sum_{s'\in\mcs} P(s')\Big(r(s,a,s',\mu)+\gamma v(s')\Big).
\end{equation}
Define the robust Bellman operator $\mathcal{T}_\mu:\R^{\mcs}\to\R^{\mcs}$ by
\begin{equation}\label{eq:Bellman-op}
(\mathcal{T}_\mu v)(s):=\max_{a\in\mca}(\mathcal{Q}_\mu v)(s,a).
\end{equation}

\begin{lemma}\label{lem:contraction}
For each fixed $\mu\in\DeltaS$, the operator $\mathcal{T}_\mu$ is a contraction on
$(\R^{\mcs},\|\cdot\|_\infty)$ with modulus $\gamma$. Hence there is a unique
$v_\mu\in\R^{\mcs}$ such that
\begin{equation}\label{eq:Bellman-fp}
v_\mu=\mathcal{T}_\mu v_\mu.
\end{equation}
\end{lemma}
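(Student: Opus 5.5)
The plan is to verify the contraction property directly from the definitions \eqref{eq:Qop}--\eqref{eq:Bellman-op} and then invoke the Banach fixed-point theorem on the complete space $(\R^{\mcs},\|\cdot\|_\infty)$. First I will check that $\mathcal{T}_\mu$ is well defined, i.e.\ that it maps $\R^{\mcs}$ into $\R^{\mcs}$. For fixed $(s,a)$ and $v$, the map $P\mapsto \sum_{s'}P(s')\big(r(s,a,s',\mu)+\gamma v(s')\big)$ is linear, hence continuous, on $\DeltaS$. By Assumption~\ref{ass:stat}(2) the set $\mathfrak{P}(s,a,\mu)$ is nonempty and compact, so the minimum is attained and finite. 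It is bounded in absolute value by $\|r\|_\infty+\gamma\|v\|_\infty$, using Assumption~\ref{ass:stat}(3). The maximum over the finite set $\mca$ is then finite as well.

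The key step is the Lipschitz estimate. Fix $v,w\in\R^{\mcs}$ and $(s,a)$. For every $P\in\mathfrak{P}(s,a,\mu)$,
\begin{align*}
\sum_{s'}P(s')\big(r+\gamma v(s')\big)\le \sum_{s'}P(s')\big(r+\gamma w(s')\big)+\gamma\|v-w\|_\infty ,
\end{align*}
since $P$ is a probability vector. Taking the minimum over $P$ on both sides gives $(\mathcal{Q}_\mu v)(s,a)\le(\mathcal{Q}_\mu w)(s,a)+\gamma\|v-w\|_\infty$. By symmetry, $|(\mathcal{Q}_\mu v)(s,a)-(\mathcal{Q}_\mu w)(s,a)|\le\gamma\|v-w\|_\infty$.

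Next I will use the elementary inequality $|\max_a f(a)-\max_a g(a)|\le\max_a|f(a)-g(a)|$, and likewise for the minimum. This is the same mechanism used in the previous step, so it needs no separate argument. It yields $|(\mathcal{T}_\mu v)(s)-(\mathcal{T}_\mu w)(s)|\le\gamma\|v-w\|_\infty$ for every $s$. Taking the supremum over $s$ gives $\|\mathcal{T}_\mu v-\mathcal{T}_\mu w\|_\infty\le\gamma\|v-w\|_\infty$.

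Since $\gamma<1$ and $\R^{\mcs}$ with the sup norm is complete, the Banach fixed-point theorem gives a unique $v_\mu$ satisfying \eqref{eq:Bellman-fp}. There is no real obstacle here. The only point requiring care is the attainment and finiteness of the minimum, which is exactly where nonemptiness and compactness of $\mathfrak{P}(s,a,\mu)$ are used. Even without compactness, the infimum version of the argument would go through unchanged.
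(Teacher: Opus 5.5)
Your proposal is correct and follows essentially the same route as the paper: a per-$P$ bound of $\gamma\|v-w\|_\infty$, passed through the minimum over $\mathfrak{P}(s,a,\mu)$ and the maximum over $\mca$, followed by Banach's fixed-point theorem. Your one-sided-inequality-plus-symmetry step makes the passage through the minimum slightly more explicit than the paper does, and your well-definedness check is a harmless addition.
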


\begin{proof}
Fix $\mu\in\DeltaS$ and $v,w\in\R^{\mcs}$. Fix $s\in\mcs$ and $a\in\mca$.
For any $P\in\mathfrak{P}(s,a,\mu)$,
\begin{align*}
&\Big|\sum_{s'}P(s')\big(r(s,a,s',\mu)+\gamma v(s')\big)
-\sum_{s'}P(s')\big(r(s,a,s',\mu)+\gamma w(s')\big)\Big|\\
&\qquad=\gamma\Big|\sum_{s'}P(s')\big(v(s')-w(s')\big)\Big|
\le \gamma\|v-w\|_\infty,
\end{align*}
since $\sum_{s'}P(s')=1$ and $|v(s')-w(s')|\le\|v-w\|_\infty$ for all $s'$.
Taking the minimum over $P\in\mathfrak{P}(s,a,\mu)$ gives
$|(\mathcal{Q}_\mu v)(s,a)-(\mathcal{Q}_\mu w)(s,a)|\le\gamma\|v-w\|_\infty$.
Taking the maximum over $a\in\mca$ yields
$|(\mathcal{T}_\mu v)(s)-(\mathcal{T}_\mu w)(s)|\le\gamma\|v-w\|_\infty$ for all $s$.
Thus $\|\mathcal{T}_\mu v-\mathcal{T}_\mu w\|_\infty\le\gamma\|v-w\|_\infty$.
Since $\gamma\in(0,1)$, Banach’s fixed point theorem implies existence and uniqueness of
$v_\mu$ solving \eqref{eq:Bellman-fp}.
\end{proof}

Define the optimal robust $Q$-function
\begin{equation}\label{eq:Qstar}
Q_\mu(s,a):=(\mathcal{Q}_\mu v_\mu)(s,a),
\end{equation}
so $v_\mu(s)=\max_{a\in\mca}Q_\mu(s,a)$.

\begin{lemma}\label{lem:selectors}
Fix $\mu\in\DeltaS$.
\begin{enumerate}[label=(\roman*),leftmargin=2.2em]
\item For every $(s,a)\in\mcs\times\mca$ the set
\begin{align}
\widehat{\mathfrak{P}}(s,a,\mu)
:=\argmin_{P\in\mathfrak{P}(s,a,\mu)}\sum_{s'\in\mcs}P(s')\big(r(s,a,s',\mu)+\gamma v_\mu(s')\big)
\end{align}
is nonempty, convex, and compact.
\item For every $s\in\mcs$ the set of optimal actions
\begin{align}
\mathsf D(s,\mu):=\argmax_{a\in\mca}Q_\mu(s,a)
\end{align}
is nonempty.
\end{enumerate}
\end{lemma}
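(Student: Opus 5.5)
The statement is Lemma~\ref{lem:selectors}, and both parts reduce to elementary facts about minimizing a continuous (indeed linear) function over a compact convex set, or maximizing over a finite set.

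\textbf{Part (i).} Fix $\mu$ and $(s,a)$, and put $c(s'):=r(s,a,s',\mu)+\gamma v_\mu(s')$. By Assumption~\ref{ass:stat}(3), $r$ is bounded, and $v_\mu$ exists by Lemma~\ref{lem:contraction}. Hence $c$ is a finite real vector in $\R^{\mcs}$. The objective
\[
\phi(P):=\sum_{s'}P(s')c(s')=\inner{P}{c}
\]
is therefore linear in $P$, and in particular continuous on $\DeltaS\subset\R^{\mcs}$.

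\emph{Nonemptiness.} By Assumption~\ref{ass:stat}(2), $\mathfrak{P}(s,a,\mu)$ is nonempty and compact. By Weierstrass, $\phi$ attains its minimum value $m$ there.

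\emph{Closedness.} The argmin set equals $\mathfrak{P}(s,a,\mu)\cap\phi^{-1}(\{m\})$. This is a closed subset of a compact set, so it is compact.

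\emph{Convexity.} Take $P_1,P_2$ in the argmin and $\lambda\in[0,1]$. The point $\lambda P_1+(1-\lambda)P_2$ lies in $\mathfrak{P}(s,a,\mu)$ because that set is convex. By linearity,
\[
\phi(\lambda P_1+(1-\lambda)P_2)=\lambda m+(1-\lambda)m=m,
\]
so this point is also a minimizer. Equivalently, the argmin is the intersection of a convex set with a hyperplane, hence convex.

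\textbf{Part (ii).} For each $(s,a)$, the minimum in \eqref{eq:Qop} is attained by part (i), so $Q_\mu(s,a)$ is a well-defined real number. Since $\mca$ is finite and nonempty by Assumption~\ref{ass:stat}(1), the maximum of $a\mapsto Q_\mu(s,a)$ over $\mca$ is attained. Therefore $\mathsf D(s,\mu)\neq\emptyset$.

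The only point that needs care is in part (i): one must note that $c$ is finite and that the minimum defining $\mathcal{Q}_\mu$ is genuinely attained rather than merely an infimum. Both follow directly from boundedness of $r$ and compactness of $\mathfrak{P}(s,a,\mu)$, so I expect no real obstacle.
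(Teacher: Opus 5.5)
Your proposal is correct and follows the paper's proof: minimize a continuous affine objective over the nonempty compact convex set $\mathfrak{P}(s,a,\mu)$ (attainment, compactness and convexity of the argmin), then take a maximum over the finite set $\mca$. Your version only spells out the closedness and convexity steps that the paper states in one line.
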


\begin{proof}
(i) Fix $(s,a)$. The objective is continuous and affine in $P$. By Assumption~\ref{ass:stat}(ii),
$\mathfrak{P}(s,a,\mu)$ is nonempty and compact, hence the minimum is attained and the argmin set
is nonempty and compact. Convexity follows because $\mathfrak{P}(s,a,\mu)$ is convex and the objective is affine.

(ii) Since $\mca$ is finite, the maximum of $a\mapsto Q_\mu(s,a)$ is attained, so $\mathsf D(s,\mu)\neq\emptyset$.
\end{proof}

\begin{proposition}\label{prop:robust-dp}
Fix $\mu\in\DeltaS$. Let $v_\mu$ be the unique fixed point of $\mathcal{T}_\mu$.
Let $\pi$ be any stationary policy satisfying $\supp\pi(\cdot|s)\subseteq \mathsf D(s,\mu)$ for all $s$, and let $p$ be any stationary kernel with $p(\cdot|s,a)\in\widehat{\mathfrak{P}}(s,a,\mu)$ for all $(s,a)$. Then:
\begin{enumerate}[label=(\roman*),leftmargin=2.2em]
\item For every $s\in\mcs$,
\begin{align}
v_\mu(s)=\inf_{p'\in\mathfrak{P}(s,a,\mu)}J_\mu(s;\pi,p')=J_\mu(s;\pi,p).
\end{align}
\item For every stationary policy $\pi'$,
\begin{align}
\inf_{p'}J_\mu(s;\pi',p')\le v_\mu(s)\qquad\forall s\in\mcs.
\end{align}
In particular, $v_\mu$ coincides with the robust value \eqref{eq:robust-value} and every $\pi$ supported on the greedy
sets attains the supremum in \eqref{eq:robust-value}.
\end{enumerate}
\end{proposition}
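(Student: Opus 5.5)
The proof is a standard robust policy-evaluation argument carried out at a fixed population $\mu$. Throughout, $\mu$ is frozen, so rewards $r(s,a,s',\mu)$ and sets $\mathfrak{P}(s,a,\mu)$ are fixed and the problem is a rectangular robust MDP. The infimum over $p'$ should be read over all (possibly history-dependent or time-inhomogeneous) adversarial kernel sequences with $p'_t(\cdot|s,a)\in\mathfrak{P}(s,a,\mu)$; the notation $p'\in\mathfrak{P}(s,a,\mu)$ in (i) is taken as shorthand for this. Stationary kernels are a subclass, so lower bounds must be proved against general sequences and upper bounds need only exhibit one kernel.

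For (i), define the policy-evaluation operator
\[
(\mathcal{T}^{\pi,p}v)(s)=\sum_a\pi(a|s)\sum_{s'}p(s'|s,a)\big(r(s,a,s',\mu)+\gamma v(s')\big).
\]
It is a $\gamma$-contraction with fixed point $J_\mu(\cdot;\pi,p)$. Since $\pi$ is supported on $\mathsf D(s,\mu)$ and $p(\cdot|s,a)\in\widehat{\mathfrak{P}}(s,a,\mu)$ (Lemma~\ref{lem:selectors}), we get
\[
\mathcal{T}^{\pi,p}v_\mu(s)=\sum_a\pi(a|s)Q_\mu(s,a)=\max_a Q_\mu(s,a)=v_\mu(s).
\]
Uniqueness of the fixed point then gives $J_\mu(\cdot;\pi,p)=v_\mu$, so $\inf_{p'}J_\mu(s;\pi,p')\le v_\mu(s)$.

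For the reverse inequality in (i), fix an arbitrary adversarial sequence $(p'_t)$. For every $(s,a)$ with $a\in\mathsf D(s,\mu)$ and every $t$, the definition of $\mathcal{Q}_\mu$ gives
\[
\sum_{s'}p'_t(s'|s,a)\big(r+\gamma v_\mu(s')\big)\ge Q_\mu(s,a)=v_\mu(s).
\]
Taking expectations of this along the process yields $\E[r_t+\gamma v_\mu(s_{t+1})\mid \mathcal F_t]\ge v_\mu(s_t)$. Telescoping over $t=0,\dots,T-1$ gives
\[
\E\sum_{t<T}\gamma^t r_t\ \ge\ v_\mu(s_0)-\gamma^T\E\, v_\mu(s_T).
\]
Since $r$ is bounded, $v_\mu$ is bounded, and letting $T\to\infty$ gives $J_\mu(s;\pi,p')\ge v_\mu(s)$.

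For (ii), fix an arbitrary stationary $\pi'$ and let the adversary play the stationary kernel $p$ from $\widehat{\mathfrak{P}}$, which is pointwise optimal for $v_\mu$ at every $(s,a)$. Then
\[
\mathcal{T}^{\pi',p}v_\mu(s)=\sum_a\pi'(a|s)Q_\mu(s,a)\le v_\mu(s).
\]
Since $\mathcal{T}^{\pi',p}$ is monotone and contractive, iterating gives $J_\mu(\cdot;\pi',p)=\lim_n(\mathcal{T}^{\pi',p})^n v_\mu\le v_\mu$. Hence $\inf_{p'}J_\mu(s;\pi',p')\le J_\mu(s;\pi',p)\le v_\mu(s)$. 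Combined with (i), $v_\mu=\sup_{\pi'}\inf_{p'}J_\mu(\cdot;\pi',p')$, and greedy $\pi$ attains the supremum. Averaging over $s_0\sim\mu$ transfers the result to the scalar value \eqref{eq:robust-value}.

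The main obstacle is the lower bound in (i) against non-stationary or history-dependent adversaries. It is handled by the supermartingale/telescoping step above, which relies on rectangularity: at each $(s,a)$ the adversary's choice is constrained only to $\mathfrak{P}(s,a,\mu)$, so the one-step bound holds conditionally on any history.
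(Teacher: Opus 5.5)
Your proof is correct, but it is organized differently from the paper's.

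The paper works through the robust policy-evaluation operator $\mathcal{T}^\pi_\mu$, which averages the inner minimum over $\mathfrak{P}(s,a,\mu)$ against $\pi(\cdot|s)$. It runs in three steps:
\begin{enumerate}[label=(\arabic*)]
\item It proves that the fixed point $u^\pi_\mu$ of $\mathcal{T}^\pi_\mu$ equals $\inf_p J_\mu(\cdot;\pi,p)$ over \emph{stationary} kernels, for \emph{every} stationary $\pi$. The lower bound comes from monotone iteration from $0$, the upper bound from a minimizing selector.
\item It deduces (ii) from the operator comparison $\mathcal{T}^\pi_\mu\le\mathcal{T}_\mu$, which gives $u^\pi_\mu\le v_\mu$.
\item It deduces (i) from $\mathcal{T}^\pi_\mu v_\mu=v_\mu$ for greedy $\pi$, together with the same fixed-kernel identity you use for $J_\mu(\cdot;\pi,p)=v_\mu$.
\end{enumerate}
History-dependent adversaries are treated only afterwards, in Proposition~\ref{prop:history}.

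You take two different steps. For the lower bound in (i), you use a telescoping/supermartingale argument with $v_\mu$ directly against arbitrary adversaries. For (ii), you show that the single worst-case kernel $p$ satisfies $\mathcal{T}^{\pi',p}v_\mu\le v_\mu$ for every $\pi'$. That is the fixed-kernel argument the paper only uses later, in Lemma~\ref{lem:reduction} and Proposition~\ref{prop:statewise-vs-distributional}.

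What your route buys: it is shorter and yields a genuine saddle point, $J_\mu(s;\pi',p)\le J_\mu(s;\pi,p)=v_\mu(s)\le J_\mu(s;\pi,p')$ for all $\pi'$, $p'$, $s$. One kernel $p$ works uniformly over initial states, and this is precisely what makes your final averaging over $s_0\sim\mu$ legitimate without interchanging the infimum and the $\mu$-average. It also covers non-stationary adversaries in one pass. In effect it merges Proposition~\ref{prop:history} (for greedy $\pi$) and Proposition~\ref{prop:statewise-vs-distributional} into this proof.

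What it does not deliver: the exact robust value $\inf_{p'}J_\mu(\cdot;\pi',p')=u^{\pi'}_\mu$ of a non-greedy policy. The paper's route gets this as a by-product and reuses it later, for example in Proposition~\ref{prop:history} and in the soft-equilibrium bias bound, Proposition~\ref{prop:softMFE}. Finally, you assert without proof that the fixed point of $\mathcal{T}^{\pi,p}$ is $J_\mu(\cdot;\pi,p)$; this is standard, and the paper verifies it via the partial-sum recursion.
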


\begin{proof}
  
For any stationary pair $(\pi,p)$ with $p(\cdot\mid s,a)\in\mathfrak{P}(s,a,\mu)$, 
define for $v\in\R^{\mcs}$:
\begin{align}
(\mathcal{T}^{\pi,p}_\mu v)(s)
:=\sum_{a\in\mca}\pi(a\mid s)\sum_{s'\in\mcs}\big(r(s,a,s',\mu)+\gamma v(s')\big)p(s'\mid s,a).
\end{align}
Exactly as in Lemma~\ref{lem:contraction}, $\mathcal{T}^{\pi,p}_\mu$ is a contraction with modulus $\gamma$, and it is affine. 
Hence it has a unique fixed point, call it $u^{\pi,p}_\mu$.
Define partial sums
\begin{align}
J^{(n)}_\mu(s;\pi,p):=\E^{\pi,p}_{s_0=s}\Big[\sum_{t=0}^{n-1}\gamma^t r(s_t,a_t,s_{t+1},\mu)\Big].
\end{align}
Then $J^{(0)}_\mu(\cdot;\pi,p)\equiv 0$ and the Markov property yields the recursion
\begin{align}
J^{(n+1)}_\mu(\cdot;\pi,p)=\mathcal{T}^{\pi,p}_\mu J^{(n)}_\mu(\cdot;\pi,p).
\end{align}
Thus $J^{(n)}_\mu=(\mathcal{T}^{\pi,p}_\mu)^n 0\to u^{\pi,p}_\mu$ in $\|\cdot\|_\infty$.
Since $r$ is bounded and $\gamma\in(0,1)$, $J^{(n)}_\mu(s;\pi,p)\to J_\mu(s;\pi,p)$ for each $s$.
Therefore $u^{\pi,p}_\mu=J_\mu(\cdot;\pi,p)$.

Define
\begin{align}
(\mathcal{T}^{\pi}_\mu v)(s):=\sum_{a\in\mca}\pi(a\mid s)\min_{P\in\mathfrak{P}(s,a,\mu)}
\sum_{s'\in\mcs}\big(r(s,a,s',\mu)+\gamma v(s')\big)P(s').
\end{align}
Again, $\mathcal{T}^{\pi}_\mu$ is a contraction with modulus $\gamma$, hence has a unique fixed point $u^\pi_\mu$.

We show $u^\pi_\mu(s)=\inf_{p}J_\mu(s;\pi,p)$.
For any admissible $p$, by definition of the minimum, $\mathcal{T}^{\pi,p}_\mu v\ge \mathcal{T}^{\pi}_\mu v$ pointwise.
Iterating from $0$ gives $(\mathcal{T}^{\pi,p}_\mu)^n 0\ge (\mathcal{T}^{\pi}_\mu)^n 0$ for all $n$\footnote{Here we use the monotonicity of the operators, proved in Lemma \ref{lem:monotone}.}.
Taking limits and using Step~1,
\begin{align}
J_\mu(\cdot;\pi,p)=\lim_{n\to\infty}(\mathcal{T}^{\pi,p}_\mu)^n 0\ge
\lim_{n\to\infty}(\mathcal{T}^{\pi}_\mu)^n 0=u^\pi_\mu.
\end{align}
Hence $\inf_{p}J_\mu(s;\pi,p)\ge u^\pi_\mu(s)$.

Conversely, for each $(s,a)$ choose a minimizer
$p^{\pi,u^\pi_\mu}(\cdot\mid s,a)\in\argmin_{P\in\mathfrak{P}(s,a,\mu)}\sum_{s'}(r+\gamma u^\pi_\mu)P(s')$.
Then $\mathcal{T}^{\pi}_\mu u^\pi_\mu=\mathcal{T}^{\pi,p^{\pi,u^\pi_\mu}}_\mu u^\pi_\mu=u^\pi_\mu$.
By the result above, this implies $u^\pi_\mu=J_\mu(\cdot;\pi,p^{\pi,u^\pi_\mu})$ and hence $\inf_p J_\mu\le u^\pi_\mu$.
Thus $\inf_p J_\mu=u^\pi_\mu$.

For every $v$, we have that
\begin{align}
\mathcal{T}_\mu v=\max_{a\in\mca}(\mathcal{Q}_\mu v)(\cdot,a)=\sup_{\pi}\ \mathcal{T}^{\pi}_\mu v,
\end{align}
where the supremum is attained by deterministic $\pi$ choosing a maximizing action.
Hence $\mathcal{T}^{\pi}_\mu v\le \mathcal{T}_\mu v$ for all $\pi$, so by contraction iteration from $0$,
\begin{align}
u^\pi_\mu\le v_\mu,\quad\forall \pi.
\end{align}
Thus $\sup_\pi u^\pi_\mu(s)\le v_\mu(s)$.

Now choose $\pi$ supported on maximizers of $Q_\mu$: $\supp \pi(\cdot|s)\subset \mathsf D(s,\mu)$ for any $s$. Then for each $s$,
\begin{align}
    \mathcal{T}^{\pi}_\mu v_\mu(s)=\sum_a \pi(a|s)\mathcal{Q}_\mu v_\mu (s,a)=\sum_a \pi(a|s)Q_\mu(s,a)=\max_a Q_\mu(s,a)=v_\mu(s),
\end{align}
where the third equality holds because $\pi(\cdot|s)$ puts all mass on $\arg\max_a Q_\mu(s,a)$.
Thus $v_\mu$ is a fixed point of $\mathcal{T}^{\pi}_\mu$, hence by uniqueness $u^{\pi}_\mu=v_\mu$.
Therefore $\sup_\pi u^\pi_\mu(s)\ge v_\mu(s)$ and $\sup_\pi u^\pi_\mu(s)=v_\mu(s)$.

Since $p(\cdot\mid s,a)\in\widehat{\mathfrak{P}}(s,a,\mu)$ for each $(s,a)$.
Then by construction,
\begin{align}
(\mathcal{T}^{\pi,p}_\mu v_\mu)(s)=v_\mu(s)\quad\forall s,
\end{align}
so $v_\mu$ is the fixed point of $\mathcal{T}^{\pi,p}_\mu$ and it yields
$J_\mu(\cdot;\pi,p)=v_\mu$.
This proves (i). Part (ii) follows from $\sup_\pi\inf_p J_\mu=v_\mu$ proved above.
\end{proof}

\begin{lemma}[Monotonicity]\label{lem:monotone}
Fix $\mu\in\Delta(\mathcal{S})$, a stationary policy $\pi$, and a stationary kernel $p$ with $p(\cdot|s,a)\in\mathfrak{P}(s,a,\mu)$. Each of the operators $\mathcal{T}_\mu$, $\mathcal{T}^\pi_\mu$, $\mathcal{T}^{\pi,p}_\mu$ defined above is monotone: $v\le w$ componentwise implies $\mathcal{T} v\le \mathcal{T} w$. Moreover each commutes with constant shifts in the standard way: $\mathcal{T}(v+c\textbf{1})=\mathcal{T} v + \gamma c\,\textbf{1}$ for $c\in\R$.
\end{lemma}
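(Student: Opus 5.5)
The plan is to verify both properties at the level of a single fixed kernel $P$ and then pass them through the operations that build the three operators: a minimum over $P\in\mathfrak{P}(s,a,\mu)$, then either a $\pi$-weighted average over actions or a maximum over actions. Each of these operations preserves pointwise order and commutes with adding a constant, so the argument is the same for all three.

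\textbf{Step 1 (single kernel).} Fix $s,a$ and any $P\in\DeltaS$, and set
\begin{align*}
\Phi_P(v):=\sum_{s'}P(s')\big(r(s,a,s',\mu)+\gamma v(s')\big).
\end{align*}
Suppose $v\le w$ componentwise. Since $P(s')\ge0$ and $\gamma>0$, we get $\Phi_P(v)\le\Phi_P(w)$. Since $\sum_{s'}P(s')=1$, we get $\Phi_P(v+c\mathbf 1)=\Phi_P(v)+\gamma c$.

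\textbf{Step 2 ($\mathcal{T}^{\pi,p}_\mu$).} We have
\begin{align*}
(\mathcal{T}^{\pi,p}_\mu v)(s)=\sum_a\pi(a\mid s)\,\Phi_{p(\cdot\mid s,a)}(v),
\end{align*}
a convex combination with nonnegative weights summing to one. Both properties therefore follow directly from Step~1.

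\textbf{Step 3 (the robust operators).} By definition $(\mathcal{Q}_\mu v)(s,a)=\min_{P\in\mathfrak{P}(s,a,\mu)}\Phi_P(v)$, and the minimum is attained by compactness (Assumption~\ref{ass:stat}).
\begin{itemize}
\item Monotonicity: let $P_w$ be a minimizer for $w$. Then $(\mathcal{Q}_\mu v)(s,a)\le\Phi_{P_w}(v)\le\Phi_{P_w}(w)=(\mathcal{Q}_\mu w)(s,a)$.
\item Shift: since $\Phi_P(v+c\mathbf 1)=\Phi_P(v)+\gamma c$ for every $P$ and the added constant does not depend on $P$, we get $\mathcal{Q}_\mu(v+c\mathbf 1)=\mathcal{Q}_\mu v+\gamma c$.
\end{itemize}
Then $\mathcal{T}^\pi_\mu$ is a $\pi$-average of $\mathcal{Q}_\mu v$, so it inherits both properties as in Step~2. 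Likewise $\mathcal{T}_\mu$ is a maximum over the finite set $\mca$ of $\mathcal{Q}_\mu v$. This maximum is order-preserving, because $\max_a x_a\le\max_a y_a$ whenever $x\le y$, and it satisfies $\max_a(x_a+\gamma c)=\max_a x_a+\gamma c$.

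There is no real obstacle here. The only points needing care are that the nonnegativity of $P$ and $\pi$ gives monotonicity, and that $\sum_{s'}P(s')=1$ gives the exact shift with factor $\gamma$. Since $P$-independence of the shift is what lets it pass through the minimum, the infimum over the uncertainty set causes no difficulty.
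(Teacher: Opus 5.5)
Your proposal is correct and follows essentially the same route as the paper. Both arguments check order preservation and the exact $\gamma c$ shift for the fixed-kernel functional $\sum_{s'}P(s')\bigl(r+\gamma v(s')\bigr)$, pass monotonicity through the robust minimum by evaluating at a minimizer $P_w$ for $w$, and then use that $\pi$-averaging and the maximum over $\mathcal{A}$ preserve both properties.
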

 
\begin{proof}
For $\mathcal{T}^{\pi,p}_\mu$ both claims are immediate since the coefficients $\pi(a|s)p(s'|s,a)\ge 0$ sum (over $s'$, for each $a$) to one. For the inner robust term, fix $(s,a)$ and let $f_P(v):=\sum_{s'}P(s')\bigl[r(s,a,s',\mu)+\gamma v(s')\bigr]$. If $v\le w$ then $f_P(v)\le f_P(w)$ for every $P$. Let $P_w$ attain $\min_P f_P(w)$. Then
$\min_P f_P(v)\le f_{P_w}(v)\le f_{P_w}(w)=\min_P f_P(w)$,
so $v\mapsto\min_{P\in\mathfrak{P}(s,a,\mu)}f_P(v)$ is monotone. Averaging over $\pi(\cdot|s)$ (for $\mathcal{T}^\pi_\mu$) or maximizing over $a$ (for $\mathcal{T}_\mu$) preserves monotonicity. The shift property follows since $\sum_{s'}P(s')\gamma c = \gamma c$ for every $P\in\Delta(\mathcal{S})$, and shifts commute with $\min$, averages, and $\max$.
\end{proof}

\begin{lemma}\label{lem:reduction}
Fix $\mu\in\DeltaS$ and let $(\pi^\mu,p^\mu)$ be as in Proposition~\ref{prop:robust-dp}.
Define for $v\in\R^{\mcs}$ the (non-robust) Bellman operator with \emph{fixed} kernel $p^\mu$:
\begin{align}
(\widetilde{\mathcal{T}}_\mu v)(s):=\max_{a\in\mca}\sum_{s'\in\mcs}p^\mu(s'\mid s,a)\big(r(s,a,s',\mu)+\gamma v(s')\big).
\end{align}
Then $\widetilde{\mathcal{T}}_\mu$ is a contraction with modulus $\gamma$ and its unique fixed point equals $v_\mu$.
Consequently, for every initial distribution $\lambda\in\DeltaS$,
\begin{equation}\label{eq:reduction-value}
\sum_{s\in\mcs}\lambda(s)v_\mu(s)
=
\sup_{\pi:\mcs\to\DeltaA}\ \E^{\pi,p^\mu}_{s_0\sim\lambda}\Big[\sum_{t=0}^\infty\gamma^t r(s_t,a_t,s_{t+1},\mu)\Big],
\end{equation}
and the supremum is attained at $\pi=\pi^\mu$.
\end{lemma}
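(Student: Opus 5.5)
The plan has three steps: prove contraction directly, identify the fixed point as $v_\mu$ by checking one Bellman equation, and then apply the standard non-robust MDP theory for the fixed kernel $p^\mu$.

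\emph{Contraction.} This follows exactly as in Lemma~\ref{lem:contraction}. For fixed $(s,a)$, the map $v\mapsto\sum_{s'}p^\mu(s'|s,a)(r+\gamma v(s'))$ changes by at most $\gamma\|v-w\|_\infty$, because $p^\mu(\cdot|s,a)$ is a probability vector. Taking the maximum over $a$ preserves this bound. Banach's fixed point theorem then gives a unique fixed point $\tilde v_\mu$.

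\emph{Identifying the fixed point with $v_\mu$.} It suffices to show $\widetilde{\mathcal{T}}_\mu v_\mu=v_\mu$, and uniqueness then gives $\tilde v_\mu=v_\mu$. Since $p^\mu(\cdot|s,a)\in\widehat{\mathfrak{P}}(s,a,\mu)$ for every $(s,a)$, including non-greedy actions, we have for all $(s,a)$
\[
\sum_{s'}p^\mu(s'|s,a)\big(r(s,a,s',\mu)+\gamma v_\mu(s')\big)=(\mathcal{Q}_\mu v_\mu)(s,a)=Q_\mu(s,a).
\]
Maximizing over $a$ gives $(\widetilde{\mathcal{T}}_\mu v_\mu)(s)=\max_a Q_\mu(s,a)=v_\mu(s)$.

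Here lies the only subtlety I anticipate: the kernel must be a minimizer at \emph{every} action, not only along the support of $\pi^\mu$. Proposition~\ref{prop:robust-dp} states $p(\cdot|s,a)\in\widehat{\mathfrak{P}}(s,a,\mu)$ for all $(s,a)$, so the identity above is justified.

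\emph{Value identity.} The problem now reduces to a standard finite discounted MDP with the fixed kernel $p^\mu$ and the stationary reward $r(\cdot,\cdot,\cdot,\mu)$. I would reuse the machinery from the proof of Proposition~\ref{prop:robust-dp}, specialized to a singleton uncertainty set $\{p^\mu(\cdot|s,a)\}$. In that proof, $J_\mu(\cdot;\pi,p^\mu)$ is the fixed point of $\mathcal{T}^{\pi,p^\mu}_\mu$. Moreover, $\mathcal{T}^{\pi,p^\mu}_\mu v\le\widetilde{\mathcal{T}}_\mu v$ pointwise, and by monotonicity (Lemma~\ref{lem:monotone}) iterating from $0$ gives $J_\mu(\cdot;\pi,p^\mu)\le v_\mu$ for every stationary $\pi$.

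For $\pi=\pi^\mu$, Proposition~\ref{prop:robust-dp}(i) directly gives $J_\mu(\cdot;\pi^\mu,p^\mu)=v_\mu$. Alternatively, $\mathcal{T}^{\pi^\mu,p^\mu}_\mu v_\mu=\sum_a\pi^\mu(a|s)Q_\mu(s,a)=v_\mu$, since $\pi^\mu$ is supported on the argmax. Thus the supremum over stationary $\pi$ equals $v_\mu$ pointwise in $s$ and is attained at $\pi^\mu$.

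Finally, integrate against $\lambda$. The supremum of $\sum_s\lambda(s)J_\mu(s;\pi,p^\mu)$ is bounded above by $\sum_s\lambda(s)v_\mu(s)$, since each term satisfies the pointwise bound. The same $\pi^\mu$ attains this bound for all $s$ simultaneously, which yields \eqref{eq:reduction-value}. The supremum in \eqref{eq:reduction-value} ranges over stationary policies $\pi:\mcs\to\DeltaA$, so no history-dependent argument is needed.
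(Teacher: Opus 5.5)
Your proposal is correct and follows the paper's proof essentially step for step. It uses the same three steps: contraction via the probability-vector bound, identification of the fixed point through the fact that $p^\mu(\cdot\mid s,a)$ is a minimizer at every $(s,a)$ so that $\widetilde{\mathcal{T}}_\mu v_\mu=\mathcal{T}_\mu v_\mu=v_\mu$, and the fixed-kernel MDP argument averaged against $\lambda$. The only difference is that you spell out $J_\mu(\cdot;\pi,p^\mu)\le v_\mu$ via the pointwise domination $\mathcal{T}^{\pi,p^\mu}_\mu\le\widetilde{\mathcal{T}}_\mu$ together with monotonicity (Lemma~\ref{lem:monotone}), which the paper simply asserts as standard MDP optimality.
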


\begin{proof}
\textbf{Step 1: $\widetilde{\mathcal{T}}_\mu$ is a contraction.}
Fix $v,w\in\R^{\mcs}$. For any $s$ and $a$,
\begin{align}
\Big|\sum_{s'}p^\mu(s'\mid s,a)\gamma(v(s')-w(s'))\Big|
\le \gamma\|v-w\|_\infty.
\end{align}
Taking the maximum over $a$ gives
$|(\widetilde{\mathcal{T}}_\mu v)(s)-(\widetilde{\mathcal{T}}_\mu w)(s)|\le\gamma\|v-w\|_\infty$.
Thus $\|\widetilde{\mathcal{T}}_\mu v-\widetilde{\mathcal{T}}_\mu w\|_\infty\le\gamma\|v-w\|_\infty$.

\textbf{Step 2: $v_\mu$ is a fixed point of $\widetilde{\mathcal{T}}_\mu$.}
By definition of $p^\mu(\cdot\mid s,a)\in\widehat{\mathfrak{P}}(s,a,\mu)$,
for every $(s,a)$,
\begin{align}
(\mathcal{Q}_\mu v_\mu)(s,a)
=\sum_{s'}p^\mu(s'\mid s,a)\big(r(s,a,s',\mu)+\gamma v_\mu(s')\big).
\end{align}
Therefore,
\begin{align}
(\widetilde{\mathcal{T}}_\mu v_\mu)(s)
=\max_{a}\sum_{s'}p^\mu(s'\mid s,a)\big(r+\gamma v_\mu\big)
=\max_{a}(\mathcal{Q}_\mu v_\mu)(s,a)
=(\mathcal{T}_\mu v_\mu)(s)=v_\mu(s).
\end{align}
So $v_\mu$ is a fixed point of $\widetilde{\mathcal{T}}_\mu$.
By Step~1 and Banach’s theorem, the fixed point is unique, hence it equals $v_\mu$.

\textbf{Step 3: Representation as an MDP value.}
Fix any stationary policy $\pi$. Define the policy-evaluation operator
\begin{align}
(\widetilde{\mathcal{T}}_\mu^\pi v)(s):=\sum_{a}\pi(a\mid s)\sum_{s'}p^\mu(s'\mid s,a)\big(r(s,a,s',\mu)+\gamma v(s')\big).
\end{align}
It is a contraction with modulus $\gamma$ and thus has a unique fixed point $v^{\pi,p^\mu}$.
The standard contraction iteration argument (identical to Step~1 of Proposition~\ref{prop:robust-dp})
shows that for each $s$,
\begin{align}
v^{\pi,p^\mu}(s)=\E^{\pi,p^\mu}_{s_0=s}\Big[\sum_{t=0}^\infty\gamma^t r(s_t,a_t,s_{t+1},\mu)\Big].
\end{align}
By definition of $\widetilde{\mathcal{T}}_\mu$ as the pointwise maximum over actions,
$v_\mu$ is the optimal value for the MDP with kernel $p^\mu$, hence
$v^{\pi,p^\mu}(s)\le v_\mu(s)$ for all $s$ and equality holds for $\pi=\pi^\mu$.
Averaging over $s_0\sim\lambda$ gives \eqref{eq:reduction-value}.
\end{proof}

Note that under rectangularity, we will have that $V_\mu(\lambda)=\sum_s \lambda(s)v_\mu(s)$, as we proved in the following result.

\begin{proposition}
\label{prop:statewise-vs-distributional}
Fix $\mu\in\DeltaS$ and consider the rectangular admissible kernel class
\begin{align}
\mathcal P(\mu):=\Big\{p:\mcs\times\mca\to\DeltaS:\ p(\cdot\mid s,a)\in\mathfrak{P}(s,a,\mu)\ \forall(s,a)\Big\}.
\end{align}
For $\lambda\in\DeltaS$ define the distributional robust value
\begin{align}
V_\mu(\lambda):=\sup_{\pi}\inf_{p\in\mathcal P(\mu)}\E^{\pi,p}_{s_0\sim\lambda}
\Big[\sum_{t=0}^\infty\gamma^t r(s_t,a_t,s_{t+1},\mu)\Big].
\end{align}
Let $v_\mu:\mcs\to\R$ be the (unique) fixed point of the robust Bellman operator $T_\mu$,
equivalently $v_\mu(s)=\sup_{\pi}\inf_{p\in\mathcal P(\mu)}J_\mu(s;\pi,p)$ for each $s$.
Then for every $\lambda\in\DeltaS$,
\begin{equation}
\label{eq:equiv-V-and-v}
V_\mu(\lambda)=\sum_{s\in\mcs}\lambda(s)\,v_\mu(s)=:\langle \lambda,v_\mu\rangle.
\end{equation}
In particular, taking $\lambda=\mu$ gives $V_\mu(\mu)=\langle \mu,v_\mu\rangle$.
\end{proposition}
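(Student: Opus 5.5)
The plan is to prove the two inequalities $V_\mu(\lambda)\ge\langle\lambda,v_\mu\rangle$ and $V_\mu(\lambda)\le\langle\lambda,v_\mu\rangle$ separately. Both directions come from Proposition~\ref{prop:robust-dp}, which already gives the statewise identities. The real task is to move the infimum over kernels and the supremum over policies past the average over $s_0\sim\lambda$. Rectangularity of $\mathcal P(\mu)$ is what makes this legitimate.

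\textbf{Lower bound.} Take $\pi^\mu$ supported on $\mathsf D(\cdot,\mu)$, as in Proposition~\ref{prop:robust-dp}. Fix any $p\in\mathcal P(\mu)$. Conditioning on $s_0$ gives
\[
\E^{\pi^\mu,p}_{s_0\sim\lambda}[\cdot]=\sum_s\lambda(s)J_\mu(s;\pi^\mu,p).
\]
Part~(i) of Proposition~\ref{prop:robust-dp}, together with its proof (where $\inf_p J_\mu(\cdot;\pi,p)=u^\pi_\mu$ pointwise), gives $J_\mu(s;\pi^\mu,p)\ge v_\mu(s)$ for each $s$. Hence
\[
\inf_{p}\E^{\pi^\mu,p}_{\lambda}[\cdot]\ge\langle\lambda,v_\mu\rangle,
\]
and so $V_\mu(\lambda)\ge\langle\lambda,v_\mu\rangle$. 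The infimum is in fact attained at the selector $p^\mu$ with value exactly $\langle\lambda,v_\mu\rangle$.

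\textbf{Upper bound.} Fix any stationary $\pi$. In the proof of Proposition~\ref{prop:robust-dp}, a single stationary kernel $p^{\pi}:=p^{\pi,u^\pi_\mu}\in\mathcal P(\mu)$ was constructed with $J_\mu(s;\pi,p^\pi)=u^\pi_\mu(s)$ for \emph{every} $s$ simultaneously. This is exactly where rectangularity enters: the minimizer is chosen independently for each $(s,a)$, and it does not depend on the initial state. So
\[
\inf_p\E^{\pi,p}_\lambda[\cdot]\le\sum_s\lambda(s)u^\pi_\mu(s)\le\langle\lambda,v_\mu\rangle,
\]
using $u^\pi_\mu\le v_\mu$, which was also established there. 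Taking the supremum over $\pi$ gives the claim.

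\textbf{Main obstacle.} The class of policies in the supremum defining $V_\mu(\lambda)$ may include history-dependent policies, while Proposition~\ref{prop:robust-dp} only handles stationary ones. If $\pi$ is general, I would fix $p^\mu$ and use Lemma~\ref{lem:reduction}:
\[
\inf_p\E^{\pi,p}_\lambda\le\E^{\pi,p^\mu}_\lambda\le\langle\lambda,v_\mu\rangle.
\]
The first inequality holds because $p^\mu\in\mathcal P(\mu)$. The second holds because, in the non-robust MDP with kernel $p^\mu$, stationary policies are optimal among all policies; this is a standard result, giving value $v_\mu$ statewise by Lemma~\ref{lem:reduction}. This route even bypasses $p^{\pi}$, so I would present the upper bound this way. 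The last point to check is the interchange $\E_\lambda=\sum_s\lambda(s)\E_s$. It follows from the tower property and from bounded rewards, which make the discounted series absolutely summable.
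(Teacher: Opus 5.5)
Your proposal is correct and follows essentially the same route as the paper. The lower bound uses $\pi^\mu$ together with the statewise bound $J_\mu(s;\pi^\mu,p)\ge v_\mu(s)$. The upper bound fixes $p^\mu$ and invokes the fixed-kernel MDP optimality of $v_\mu$ from Lemma~\ref{lem:reduction}, which is exactly the route you settle on. Your treatment of history-dependent policies is a small extension beyond the paper, whose proof takes the supremum only over stationary $\pi$.
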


\begin{proof}
Fix $\lambda\in\DeltaS$.

For any stationary policy $\pi$ and admissible stationary kernel $p\in\mathcal P(\mu)$,
by conditioning on $s_0$ we have
\begin{align}
J_\mu(\lambda;\pi,p)
=\E^{\pi,p}_{s_0\sim\lambda}\Big[\sum_{t=0}^\infty\gamma^t r(s_t,a_t,s_{t+1},\mu)\Big]
=\sum_{s\in\mcs}\lambda(s)\,J_\mu(s;\pi,p).
\end{align}
 
By Proposition~\ref{prop:robust-dp} (robust discounted DP under $(s,a)$-rectangularity),
there exist a stationary policy $\pi^\mu$ and an admissible stationary kernel $p^\mu\in\mathcal P(\mu)$ such that
for every $s\in\mcs$,
\begin{equation}
\label{eq:saddle-all-states}
v_\mu(s)=\inf_{p\in\mathcal P(\mu)}J_\mu(s;\pi^\mu,p)=J_\mu(s;\pi^\mu,p^\mu).
\end{equation}
Multiplying \eqref{eq:saddle-all-states} by $\lambda(s)$ and summing over $s$ yields
\begin{equation}
\label{eq:lowerbound}
\inf_{p\in\mathcal P(\mu)}J_\mu(\lambda;\pi^\mu,p)
=J_\mu(\lambda;\pi^\mu,p^\mu)
=\sum_{s}\lambda(s)v_\mu(s)=\langle \lambda,v_\mu\rangle.
\end{equation}
Therefore, by definition of $V_\mu(\lambda)$,
\begin{align}
V_\mu(\lambda)\ge \inf_{p\in\mathcal P(\mu)}J_\mu(\lambda;\pi^\mu,p)=\langle \lambda,v_\mu\rangle.
\end{align}

We then claim that for every stationary policy $\pi$ and every state $s$,
\begin{equation}
\label{eq:MDP-upper}
J_\mu(s;\pi,p^\mu)\le v_\mu(s).
\end{equation}
Indeed, define the standard (non-robust) Bellman operator for the fixed kernel $p^\mu$:
\begin{align}
(\widetilde T_\mu v)(s):=\max_{a\in\mca}\sum_{s'\in\mcs}p^\mu(s'\mid s,a)\big(r(s,a,s',\mu)+\gamma v(s')\big).
\end{align}
Because $p^\mu(\cdot\mid s,a)$ attains the inner minimum in the robust Bellman operator at $v_\mu$,
the robust fixed point identity $v_\mu=T_\mu v_\mu$ implies $v_\mu=\widetilde T_\mu v_\mu$.
Since $\widetilde T_\mu$ is a contraction with modulus $\gamma$, $v_\mu$ is its unique fixed point, hence it is the
optimal value function for the MDP with transition kernel $p^\mu$. Therefore \eqref{eq:MDP-upper} holds.

Using \eqref{eq:MDP-upper} and Step 1, for any $\pi$ we have
\begin{align}
\inf_{p\in\mathcal P(\mu)}J_\mu(\lambda;\pi,p)\le J_\mu(\lambda;\pi,p^\mu)
=\sum_s \lambda(s)J_\mu(s;\pi,p^\mu)
\le \sum_s \lambda(s)v_\mu(s)=\langle \lambda,v_\mu\rangle.
\end{align}
Taking $\sup_\pi$ of the left-hand side yields
$V_\mu(\lambda)\le \langle \lambda,v_\mu\rangle$.

Thus it holds that   $V_\mu(\lambda)\ge \langle \lambda,v_\mu\rangle$ and 
$V_\mu(\lambda)\le \langle \lambda,v_\mu\rangle$, hence equality \eqref{eq:equiv-V-and-v} holds.
\end{proof}

\begin{proposition}\label{prop:history}
Fix $\mu\in\Delta(\mathcal{S})$ and a stationary policy $\pi$. We name $(q_t)_{t\ge0}$  an {admissible history-dependent adversary} if each $q_t$ maps histories $h_t=(s_0,a_0,\dots,s_{t-1},a_{t-1},s_t)$ and actions $a_t$ to $q_t(\cdot\,|\,h_t,a_t)\in\mathfrak{P}(s_t,a_t,\mu)$. Let $J_\mu(s;\pi,(q_t))$ be the induced discounted reward from $s_0=s$. Then
\[
\inf_{(q_t)\ \mathrm{history\text{-}dep.}} J_\mu(s;\pi,(q_t))
=\inf_{p\ \mathrm{stationary}} J_\mu(s;\pi,p)
= u^\pi_\mu(s)\qquad\forall s\in\mathcal{S} .
\]
\end{proposition}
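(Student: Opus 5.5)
The plan is to sandwich the history-dependent infimum between $u^\pi_\mu$ and the stationary infimum. We already know from the proof of Proposition~\ref{prop:robust-dp} that $\inf_{p\ \mathrm{stationary}}J_\mu(s;\pi,p)=u^\pi_\mu(s)$, and that this infimum is attained by the stationary kernel $p^{\pi,u^\pi_\mu}$ built from pointwise minimizers. Every stationary admissible kernel is in particular an admissible history-dependent adversary, so
\[
\inf_{(q_t)}J_\mu(s;\pi,(q_t))\le \inf_{p}J_\mu(s;\pi,p)=u^\pi_\mu(s).
\]
The remaining work is the reverse inequality: $J_\mu(s;\pi,(q_t))\ge u^\pi_\mu(s)$ for every history-dependent adversary $(q_t)$.

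For the lower bound I would argue through finite-horizon truncations and a martingale-type telescoping. Fix $(q_t)$ and define
\[
M_n:=\sum_{t=0}^{n-1}\gamma^t r(s_t,a_t,s_{t+1},\mu)+\gamma^n u^\pi_\mu(s_n).
\]
Conditioning on $h_n$, drawing $a_n\sim\pi(\cdot\mid s_n)$ and $s_{n+1}\sim q_n(\cdot\mid h_n,a_n)$, we get
\[
\E[M_{n+1}-M_n\mid h_n]=\gamma^n\Big(\sum_a\pi(a\mid s_n)\sum_{s'}q_n(s'\mid h_n,a)\big(r(s_n,a,s',\mu)+\gamma u^\pi_\mu(s')\big)-u^\pi_\mu(s_n)\Big).
\]
Since $q_n(\cdot\mid h_n,a)\in\mathfrak{P}(s_n,a,\mu)$, each inner sum is at least the rectangular minimum. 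Summing over $a$ with weights $\pi(a\mid s_n)$ therefore gives at least $(\mathcal{T}^\pi_\mu u^\pi_\mu)(s_n)=u^\pi_\mu(s_n)$, so the conditional increment is nonnegative. This is the one place rectangularity is used: the adversary's dependence on the history is harmless because the constraint set depends only on $(s_n,a_n)$.

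Taking expectations and iterating gives $\E[M_n]\ge M_0=u^\pi_\mu(s)$ for all $n$. Because $r$ is bounded and $u^\pi_\mu$ is bounded (by $\|r\|_\infty/(1-\gamma)$), the term $\gamma^n\E[u^\pi_\mu(s_n)]\to0$. Dominated convergence sends the partial discounted sums to $J_\mu(s;\pi,(q_t))$. Letting $n\to\infty$ then yields $J_\mu(s;\pi,(q_t))\ge u^\pi_\mu(s)$, and taking the infimum over $(q_t)$ completes the sandwich.

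The main obstacle is measure-theoretic. We must check that a history-dependent adversary induces a well-defined law on trajectories, via Ionescu-Tulcea, which is trivial since the spaces are finite. We must also check that the conditional-expectation step is legitimate, which it is because $q_n$ is a function of $(h_n,a_n)$ alone. Everything else reuses the operator $\mathcal{T}^\pi_\mu$, its fixed point $u^\pi_\mu$, and the attainment argument already given in Proposition~\ref{prop:robust-dp}.
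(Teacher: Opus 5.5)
Your proposal is correct and follows the paper's proof essentially step for step. The easy direction embeds stationary kernels as history-dependent adversaries and uses $\inf_p J_\mu(s;\pi,p)=u^\pi_\mu(s)$ from Proposition~\ref{prop:robust-dp}. The reverse direction uses rectangularity to get the conditional one-step inequality $\E[r(s_t,a_t,s_{t+1},\mu)+\gamma u^\pi_\mu(s_{t+1})\mid h_t]\ge(\mathcal{T}^\pi_\mu u^\pi_\mu)(s_t)=u^\pi_\mu(s_t)$, then telescopes and lets the horizon go to infinity using $\|u^\pi_\mu\|_\infty\le\|r\|_\infty/(1-\gamma)$.
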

 
\begin{proof}
``$\le$'': every stationary kernel induces an admissible history-dependent adversary, so the left infimum is no larger; the right identity directly comes from the definition.
 
``$\ge$'': fix any admissible $(q_t)$ and write $u:=u^\pi_\mu$. Conditioning on the history $h_t$ (i.e., $s_t$ is known, $a_t\sim\pi(\cdot|s_t)$, $s_{t+1}\sim q_t(\cdot|h_t,a_t)$),
\[
\E\bigl[r(s_t,a_t,s_{t+1},\mu)+\gamma u(s_{t+1})\,\big|\,h_t\bigr]
=\sum_a\pi(a|s_t)\sum_{s'}q_t(s'|h_t,a)\bigl[r(s_t,a,s',\mu)+\gamma u(s')\bigr]
\ge (\mathcal{T}^\pi_\mu u)(s_t)=u(s_t),
\]
since $q_t(\cdot|h_t,a)\in\mathfrak{P}(s_t,a,\mu)$ and $\mathcal{T}^\pi_\mu$ takes the minimum over that set. Multiplying by $\gamma^t$, taking expectations, and summing for $t=0,\dots,T-1$ telescopes to
$\E\sum_{t=0}^{T-1}\gamma^t r(s_t,a_t,s_{t+1},\mu)\ \ge\ u(s)-\gamma^T\,\E\,u(s_T)$.
Since $\|u\|_\infty\le\|r\|_\infty/(1-\gamma)$, letting $T\to\infty$ then gives $J_\mu(s;\pi,(q_t))\ge u(s)$.
\end{proof}

\subsection{Existence of a stationary robust mean-field equilibrium}
In this section we study existence of robust MFE.

Let
\begin{align}
\Pi:=\prod_{s\in S}\DD(A),
\qquad
\mathcal K:=\prod_{(s,a)\in S\times A}\DD(S).
\end{align}
Thus an element $\pi\in\Pi$ is a stationary policy and an element $p\in\mathcal K$ is a stationary kernel slice $p(\cdot|s,a)$.

For fixed $\mu\in\DD(S)$, let $v_\mu$ be the unique fixed point of $T_\mu$, and define
\begin{align}
Q_\mu(s,a):=
\min_{P\in P(s,a,\mu)}
\sum_{s'\in S} P(s')\bigl(r(s,a,s',\mu)+\gamma v_\mu(s')\bigr).
\end{align}
Set
\begin{align}
D(s,\mu):=\argmax_{a\in A}Q_\mu(s,a),
\end{align}
and
\begin{align}
\widehat P(s,a,\mu):=
\argmin_{P\in P(s,a,\mu)}
\sum_{s'\in S} P(s')\bigl(r(s,a,s',\mu)+\gamma v_\mu(s')\bigr).
\end{align}

For $(\pi,p)\in \Pi\times \mathcal K$, define the induced Markov kernel on $\mathcal{S}$ as
\begin{align}
K_{\pi,p}(s'|s):=\sum_{a\in A}\pi(a|s)\,p(s'|s,a).
\end{align}

Now define three correspondences:
\begin{align}
\BR(\mu)
&:=
\Bigl\{\pi\in\Pi:\supp \pi(\cdot|s)\subset D(s,\mu)\ \forall s\in S\Bigr\},\\
\WC(\mu)
&:=
\Bigl\{p\in\mathcal K: p(\cdot|s,a)\in \widehat P(s,a,\mu)\ \forall (s,a)\in S\times A\Bigr\},\\
\Inv(\pi,p)
&:=
\Bigl\{\eta\in\DD(S): \eta K_{\pi,p}=\eta\Bigr\}.
\end{align}
Finally set
\begin{align}
\Phi(\mu,\pi,p):=\Inv(\pi,p)\times \BR(\mu)\times \WC(\mu).
\end{align}

\subsubsection{Preparatory lemmas}

\begin{lemma}[Continuity of the robust Bellman data]
\label{lem:continuity-bellman}
Under Assumption \ref{ass:stat}:
\begin{enumerate}[label=(\roman*)]
    \item for every fixed $v\in\mathbb R^S$, the map $(\mu,s,a)\mapsto (Q_\mu v)(s,a)$ is continuous;
    \item the map $\mu\mapsto v_\mu$ is continuous in $\|\cdot\|_\infty$;
    \item the map $\mu\mapsto Q_\mu(s,a)$ is continuous for every $(s,a)\in S\times A$.
\end{enumerate}
\end{lemma}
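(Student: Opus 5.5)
The plan is to obtain (i) from Berge's maximum theorem applied to the inner minimization, then get (ii) from a uniform-contraction argument, and (iii) by composing (i) and (ii). Since $\mcs\times\mca$ is finite and carries the discrete topology, continuity in $(\mu,s,a)$ is the same as continuity in $\mu$ for each fixed $(s,a)$.

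\textbf{(i).} Fix $v\in\R^{\mcs}$ and $(s,a)$, and set
\[
f(\mu,P):=\sum_{s'}P(s')\bigl(r(s,a,s',\mu)+\gamma v(s')\bigr).
\]
This function is jointly continuous in $(\mu,P)$: for each $s'$, $r(s,a,s',\mu)$ is continuous in $\mu$ by Assumption~\ref{ass:stat}(3), and $f$ is bilinear in $P$ and these finitely many bounded continuous terms. Under Assumption~\ref{ass:stat}(2), the correspondence $\mu\mapsto\mathfrak{P}(s,a,\mu)$ has nonempty compact values, a closed graph, and is lower hemicontinuous. Its values lie in the compact set $\DeltaS$, so the closed graph gives upper hemicontinuity, and the correspondence is continuous. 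Berge's maximum theorem then shows that $\mu\mapsto\min_{P\in\mathfrak{P}(s,a,\mu)}f(\mu,P)=(Q_\mu v)(s,a)$ is continuous.

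If one prefers to avoid quoting Berge, I would argue with sequences. For the lower bound, take $\mu_n\to\mu$ and minimizers $P_n$ at $\mu_n$. Pass to a subsequence realizing the $\liminf$ along which $P_n\to P$; closed graph gives $P\in\mathfrak{P}(s,a,\mu)$, so the $\liminf$ is at least the value at $\mu$. For the upper bound, let $P^\ast$ be a minimizer at $\mu$; lower hemicontinuity gives $P_n\in\mathfrak{P}(s,a,\mu_n)$ with $P_n\to P^\ast$, so the $\limsup$ is at most $f(\mu,P^\ast)$.

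\textbf{(ii).} This is the step I expect to need the most care, since $v_\mu$ is only defined implicitly as a fixed point. I would use the standard estimate
\[
\|v_{\mu_n}-v_\mu\|_\infty=\|\mathcal{T}_{\mu_n}v_{\mu_n}-\mathcal{T}_\mu v_\mu\|_\infty\le\gamma\|v_{\mu_n}-v_\mu\|_\infty+\|\mathcal{T}_{\mu_n}v_\mu-\mathcal{T}_\mu v_\mu\|_\infty .
\]
The inequality uses that $\mathcal{T}_{\mu_n}$ is a $\gamma$-contraction (Lemma~\ref{lem:contraction}), with the same modulus $\gamma$ for every $\mu_n$. Rearranging gives
\[
\|v_{\mu_n}-v_\mu\|_\infty\le\frac{1}{1-\gamma}\,\|\mathcal{T}_{\mu_n}v_\mu-\mathcal{T}_\mu v_\mu\|_\infty .
\]
With $v=v_\mu$ held fixed, part (i) shows $(\mathcal{Q}_{\mu_n}v_\mu)(s,a)\to(\mathcal{Q}_\mu v_\mu)(s,a)$ for each $(s,a)$. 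A maximum over the finite set $\mca$ and finitely many states preserves this convergence, so the right-hand side tends to $0$. The key point is that the contraction modulus is uniform in $\mu$, so pointwise continuity of $\mu\mapsto\mathcal{T}_\mu v$ at the single fixed vector $v_\mu$ is enough.

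\textbf{(iii).} Write
\[
|Q_{\mu_n}(s,a)-Q_\mu(s,a)|\le|(\mathcal{Q}_{\mu_n}v_{\mu_n})(s,a)-(\mathcal{Q}_{\mu_n}v_\mu)(s,a)|+|(\mathcal{Q}_{\mu_n}v_\mu)(s,a)-(\mathcal{Q}_\mu v_\mu)(s,a)|.
\]
The first term is at most $\gamma\|v_{\mu_n}-v_\mu\|_\infty$ by the one-step estimate in the proof of Lemma~\ref{lem:contraction}, which holds uniformly in $\mu$, and it tends to $0$ by (ii). The second term tends to $0$ by (i). This proves (iii).
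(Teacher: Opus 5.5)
Your proposal is correct and follows essentially the same route as the paper: Berge's maximum theorem for (i), and the uniform-contraction rearrangement $(1-\gamma)\|v_{\mu_n}-v_\mu\|_\infty\le\|\mathcal{T}_{\mu_n}v_\mu-\mathcal{T}_\mu v_\mu\|_\infty$ combined with (i) for (ii). The only differences are small. You make explicit why closed graph into the compact simplex gives upper hemicontinuity, which the paper glosses. In (iii) you use a triangle inequality with the $\gamma$-Lipschitz bound of $\mathcal{Q}_\mu$ in $v$, where the paper re-invokes Berge with the objective made jointly continuous by (ii).
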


\begin{proof}
For fixed $v$, define
\begin{align}
H_{s,a}(\mu,P;v):=
\sum_{s'\in S}P(s')
\bigl(r(s,a,s',\mu)+\gamma v(s')\bigr).
\end{align}
By boundedness and continuity of $r$ in $\mu$, $H_{s,a}$ is continuous in $(\mu,P)$. Since $P(s,a,\mu)$ is compact-valued and continuous as a correspondence, Berge's maximum theorem implies that
\begin{align}
(\mu,s,a)\longmapsto (Q_\mu v)(s,a)
=
\min_{P\in P(s,a,\mu)}H_{s,a}(\mu,P;v)
\end{align}
is continuous.

Now let $\mu_n\to\mu$. Since $v_{\mu_n}=T_{\mu_n}v_{\mu_n}$ and $v_\mu=T_\mu v_\mu$,
\begin{align}
\|v_{\mu_n}-v_\mu\|_\infty
\le
\|T_{\mu_n}v_{\mu_n}-T_{\mu_n}v_\mu\|_\infty
+
\|T_{\mu_n}v_\mu-T_\mu v_\mu\|_\infty
\le
\gamma \|v_{\mu_n}-v_\mu\|_\infty
+
\|T_{\mu_n}v_\mu-T_\mu v_\mu\|_\infty.
\end{align}
Hence
\begin{align}
(1-\gamma)\|v_{\mu_n}-v_\mu\|_\infty
\le
\|T_{\mu_n}v_\mu-T_\mu v_\mu\|_\infty\to 0,
\end{align}
because the first part of the lemma gives continuity of $T_\mu v_\mu$ in $\mu$. Therefore $v_{\mu_n}\to v_\mu$.

Finally,
\begin{align}
Q_\mu(s,a)=
\min_{P\in P(s,a,\mu)}
\sum_{s'}P(s')
\bigl(r(s,a,s',\mu)+\gamma v_\mu(s')\bigr),
\end{align}
so the continuity of $\mu\mapsto v_\mu$ and Berge's theorem again imply the continuity of $\mu\mapsto Q_\mu(s,a)$.
\end{proof}

\begin{lemma}[Properties of the fixed-point correspondence]
\label{lem:phi-properties}
Under Assumption \ref{ass:stat}:
\begin{enumerate}[label=(\roman*)]
    \item $\BR(\mu)$, $\WC(\mu)$, and $\Inv(\pi,p)$ are all nonempty, compact, and convex;
    \item the graphs of $\BR$, $\WC$, and $\Inv$ are closed;
    \item therefore $\Phi$ has nonempty compact convex values and closed graph.
\end{enumerate}
\end{lemma}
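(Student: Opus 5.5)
The plan is to verify the three properties separately for $\BR$, $\WC$, and $\Inv$, using only finite-dimensionality, the continuity facts of Lemma~\ref{lem:continuity-bellman}, and Assumption~\ref{ass:stat}. Part (iii) will then follow by taking products.

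\textbf{Nonempty, compact, convex values.} We treat the three correspondences in turn.
\begin{itemize}
\item For $\BR(\mu)$: by Lemma~\ref{lem:selectors}(ii), $D(s,\mu)\neq\emptyset$. Hence $\BR(\mu)=\prod_s\Delta(D(s,\mu))$, viewed as a face of the simplex $\Delta(\mca)$ in each coordinate. This is a finite product of nonempty compact convex sets.
\item For $\WC(\mu)$: it equals $\prod_{(s,a)}\widehat{\mathfrak P}(s,a,\mu)$. Each factor is nonempty, compact and convex by Lemma~\ref{lem:selectors}(i).
\item For $\Inv(\pi,p)$: $K_{\pi,p}$ is a stochastic matrix on the finite set $\mcs$. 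Nonemptiness follows from Brouwer's (or Markov--Kakutani's) theorem applied to the continuous affine map $\eta\mapsto\eta K_{\pi,p}$ on $\DeltaS$; equivalently, any limit point of the Cesàro averages $\frac1n\sum_{k<n}\eta K^k$ is invariant. Convexity and closedness hold because $\Inv(\pi,p)$ is the intersection of $\DeltaS$ with the linear subspace $\{\eta:\eta(K_{\pi,p}-I)=0\}$. Compactness then follows since $\DeltaS$ is compact.
\end{itemize}

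\textbf{Closed graphs.}
\begin{itemize}
\item For $\Inv$: the map $(\pi,p,\eta)\mapsto\eta K_{\pi,p}-\eta$ is polynomial, hence continuous. So if $\eta_n K_{\pi_n,p_n}=\eta_n$ and $(\pi_n,p_n,\eta_n)\to(\pi,p,\eta)$, passing to the limit gives $\eta K_{\pi,p}=\eta$.
\item For $\BR$: take $\mu_n\to\mu$ and $\pi_n\to\pi$ with $\pi_n\in\BR(\mu_n)$. If $\pi(a|s)>0$, then $\pi_n(a|s)>0$ for large $n$, so $Q_{\mu_n}(s,a)=\max_{b}Q_{\mu_n}(s,b)$. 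By Lemma~\ref{lem:continuity-bellman}(iii) and finiteness of $\mca$, letting $n\to\infty$ gives $Q_\mu(s,a)=\max_bQ_\mu(s,b)$, i.e.\ $a\in D(s,\mu)$.
\item For $\WC$ (the main step): take $p_n\in\widehat{\mathfrak P}(s,a,\mu_n)$ with $p_n\to p$ and $\mu_n\to\mu$.
  \begin{itemize}
  \item Closed graph of $\mathfrak P$ (Assumption~\ref{ass:stat}(a)) gives $p\in\mathfrak P(s,a,\mu)$.
  \item For optimality, write $H_n(P):=\sum_{s'}P(s')(r(s,a,s',\mu_n)+\gamma v_{\mu_n}(s'))$ and let $H$ be the corresponding expression at $\mu$. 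Since $r$ is continuous in $\mu$ and $v_{\mu_n}\to v_\mu$ (Lemma~\ref{lem:continuity-bellman}(ii)), $H_n\to H$ uniformly on $\DeltaS$.
  \item Hence $H(p)=\lim H_n(p_n)=\lim Q_{\mu_n}(s,a)=Q_\mu(s,a)$, using Lemma~\ref{lem:continuity-bellman}(iii). Equivalently, for any $P\in\mathfrak P(s,a,\mu)$, lower hemicontinuity (Assumption~\ref{ass:stat}(b)) provides $P_n\in\mathfrak P(s,a,\mu_n)$ with $P_n\to P$. Then $H_n(p_n)\le H_n(P_n)$, and in the limit $H(p)\le H(P)$.
  \item Thus $p\in\widehat{\mathfrak P}(s,a,\mu)$.
  \end{itemize}
  This is the one place where both halves of the continuity assumption on $\mathfrak P$, together with the continuity of $v_\mu$, are genuinely needed. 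Since $\mcs\times\mca$ is finite, the argument applies coordinatewise.
\end{itemize}

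\textbf{Part (iii).} $\Phi(\mu,\pi,p)$ is a product of nonempty compact convex sets, so it has the same properties. Its graph is closed: after permuting coordinates, it is the intersection of the preimages of the three closed graphs under continuous coordinate projections. All spaces involved ($\DeltaS\times\Pi\times\mathcal K$) are compact convex subsets of Euclidean space. This sets up Kakutani's theorem for Theorem~\ref{thm:existence-smfe}.
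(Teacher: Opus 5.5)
Your proposal is correct and follows essentially the same argument as the paper: faces of simplices for $\BR$, products of argmin sets for $\WC$, Brouwer plus a linear constraint for $\Inv$, closed graphs via Lemma~\ref{lem:continuity-bellman} and the closed-graph and lower-hemicontinuity properties of $\mathfrak{P}$ (the same $q_n\to q$ comparison the paper uses for $\WC$), and products for $\Phi$. Your alternative shortcut for $\WC$, $H(p)=\lim_n Q_{\mu_n}(s,a)=Q_\mu(s,a)$, is also valid; note, however, that it and the $\BR$ step already use both halves of the continuity of $\mathfrak{P}$ through Berge's theorem in Lemma~\ref{lem:continuity-bellman}, so $\WC$ is not the only place those hypotheses are needed.
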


\begin{proof}
\emph{Step 1: $\BR(\mu)$.}
For each state $\mathcal{S}$, $D(s,\mu)\neq\varnothing$ because $\mathcal{A}$ is finite. The set of distributions on $\mathcal{A}$ supported on $D(s,\mu)$ is a simplex, hence nonempty, compact, and convex. Taking the finite product over $\mathcal{S}$ gives the same properties for $\BR(\mu)$.

To prove closed graph, suppose $\mu_n\to\mu$, $\pi_n\to\pi$, and $\pi_n\in\BR(\mu_n)$. Fix $s\in S$ and $a\in A$ with $\pi(a|s)>0$. For $n$ large enough one has $\pi_n(a|s)>0$, hence $a\in D(s,\mu_n)$. By Lemma~\ref{lem:continuity-bellman}, $Q_{\mu_n}(s,\cdot)\to Q_\mu(s,\cdot)$ uniformly over the finite set $\mathcal{A}$. Therefore
\begin{align}
Q_\mu(s,a)=\lim_{n\to\infty}Q_{\mu_n}(s,a)
=
\lim_{n\to\infty}\max_{b\in A}Q_{\mu_n}(s,b)
=
\max_{b\in A}Q_\mu(s,b),
\end{align}
so $a\in D(s,\mu)$. Thus $\pi\in\BR(\mu)$.

\emph{Step 2: $\WC(\mu)$.}
For each $(s,a)$, the objective defining $\widehat P(s,a,\mu)$ is continuous and affine in $P$, while $P(s,a,\mu)$ is nonempty, compact, and convex. Hence $\widehat P(s,a,\mu)$ is nonempty, compact, and convex. Taking the finite product over $(s,a)$ gives the same properties for $\WC(\mu)$.

To prove closed graph, suppose $\mu_n\to\mu$, $p_n\to p$, and $p_n\in \WC(\mu_n)$. Fix $(s,a)$. By the closed-graph property of the ambiguity correspondence, $p(\cdot|s,a)\in P(s,a,\mu)$. Let $q\in P(s,a,\mu)$ be arbitrary. By lower hemicontinuity of $P(s,a,\cdot)$, there are $q_n\in P(s,a,\mu_n)$ such that $q_n\to q$. Since $p_n(\cdot|s,a)$ minimizes the robust Bellman objective at $\mu_n$,
\begin{align}
\sum_{s'} p_n(s'|s,a)\bigl(r(s,a,s',\mu_n)+\gamma v_{\mu_n}(s')\bigr)
\le
\sum_{s'} q_n(s')\bigl(r(s,a,s',\mu_n)+\gamma v_{\mu_n}(s')\bigr).
\end{align}
Passing to the limit and using Lemma~\ref{lem:continuity-bellman} gives
\begin{align}
\sum_{s'} p(s'|s,a)\bigl(r(s,a,s',\mu)+\gamma v_\mu(s')\bigr)
\le
\sum_{s'} q(s')\bigl(r(s,a,s',\mu)+\gamma v_\mu(s')\bigr).
\end{align}
Since $q$ was arbitrary in $P(s,a,\mu)$, $p(\cdot|s,a)\in \widehat P(s,a,\mu)$. Hence $p\in\WC(\mu)$.

\emph{Step 3: $\Inv(\pi,p)$.}
For fixed $(\pi,p)$, the map
\begin{align}
\eta\longmapsto \eta K_{\pi,p}
\end{align}
is continuous from $\DD(S)$ into itself. Because $\DD(S)$ is compact and convex, Brouwer's fixed point theorem gives at least one invariant distribution. The set $\Inv(\pi,p)$ is the solution set of the linear equation $\eta K_{\pi,p}=\eta$ inside $\DD(S)$, hence it is compact and convex.

For closed graph, let $(\pi_n,p_n,\eta_n)\to(\pi,p,\eta)$ with $\eta_n\in\Inv(\pi_n,p_n)$. Since
\begin{align}
\eta_n(s')=
\sum_{s\in S}\eta_n(s)\sum_{a\in A}\pi_n(a|s)p_n(s'|s,a)
\qquad\forall s'\in S,
\end{align}
passing to the limit in these finite sums yields $\eta K_{\pi,p}=\eta$. Hence $\eta\in\Inv(\pi,p)$.

\emph{Step 4: product correspondence.}
The product of correspondences with nonempty compact convex values and closed graph has the same properties, so $\Phi$ does as well.
\end{proof}

\subsubsection{Existence}

\begin{theorem}
\label{thm:corrected-existence}
Under Assumption \ref{ass:stat}, there exists a stationary robust mean-field equilibrium.
\end{theorem}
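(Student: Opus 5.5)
The plan is to apply the Kakutani--Fan--Glicksberg fixed point theorem to the correspondence $\Phi$ on the set $X:=\DD(S)\times\Pi\times\mathcal K$. Since $\mcs$ and $\mca$ are finite, $X$ is a finite product of simplices. It is therefore a nonempty, compact, convex subset of a Euclidean space.

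By Lemma~\ref{lem:phi-properties}(iii), $\Phi:X\rightrightarrows X$ has nonempty, compact, convex values and a closed graph. A closed-graph correspondence into a compact Hausdorff space with closed values is upper hemicontinuous, so Kakutani's theorem applies. It yields a point $(\mu^\star,\pi^\star,p^\star)\in\Phi(\mu^\star,\pi^\star,p^\star)$, which means
\[
\mu^\star\in\Inv(\pi^\star,p^\star),\qquad \pi^\star\in\BR(\mu^\star),\qquad p^\star\in\WC(\mu^\star).
\]

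Next I verify the three conditions of Definition~\ref{def:smfe}.

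\emph{Consistency.} Condition (iii) is the statement $\mu^\star K_{\pi^\star,p^\star}=\mu^\star$, which is exactly \eqref{eq:inv-smfe} with $p^\star(\cdot\mid s,a,\mu^\star):=p^\star(\cdot\mid s,a)$.

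\emph{Optimality and worst case.} Since $\supp\pi^\star(\cdot\mid s)\subseteq D(s,\mu^\star)$ and $p^\star(\cdot\mid s,a)\in\widehat P(s,a,\mu^\star)$, Proposition~\ref{prop:robust-dp} gives, for every $s$,
\[
v_{\mu^\star}(s)=\inf_{p}J_{\mu^\star}(s;\pi^\star,p)=J_{\mu^\star}(s;\pi^\star,p^\star).
\]
It also shows that $\pi^\star$ attains the statewise supremum.

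To pass to the objective with $s_0\sim\mu^\star$, I use Proposition~\ref{prop:statewise-vs-distributional} with $\lambda=\mu^\star$, which gives $V_{\mu^\star}=\langle\mu^\star,v_{\mu^\star}\rangle$. The displayed identity, averaged against $\mu^\star$, gives both
\[
\inf_p J_{\mu^\star}(\pi^\star,p)=\langle\mu^\star,v_{\mu^\star}\rangle
\qquad\text{and}\qquad
J_{\mu^\star}(\pi^\star,p^\star)=\langle\mu^\star,v_{\mu^\star}\rangle.
\]
These are conditions (i) and (ii). Proposition~\ref{prop:history} guarantees that restricting the infimum to stationary kernels loses nothing, consistent with the footnote to Definition~\ref{def:smfe}.

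The main obstacle is the interchange of infimum and expectation over $s_0\sim\mu^\star$: the statewise infimum need not be attained by the same kernel for every initial state. This is handled because rectangularity lets the single kernel $p^\star$ be a minimizer simultaneously for all $s$ (Proposition~\ref{prop:robust-dp}(i)), and Proposition~\ref{prop:statewise-vs-distributional} gives the upper bound. The remaining work is confirming that the hypotheses of Kakutani's theorem are met, which Lemma~\ref{lem:phi-properties} already supplies; the only extra observation needed is that a closed graph plus a compact range gives upper hemicontinuity.
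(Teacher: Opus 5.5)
Your proposal is correct and follows the paper's proof essentially verbatim. You apply Kakutani to $\Phi$ on $X=\DD(S)\times\Pi\times\mathcal K$ using Lemma~\ref{lem:phi-properties}, then read off robust optimality and worst-case attainment from Proposition~\ref{prop:robust-dp}; your explicit passage to the $\mu^\star$-averaged objective via Proposition~\ref{prop:statewise-vs-distributional}, and your remark that a closed graph into a compact range gives upper hemicontinuity, fill in details the paper leaves implicit without changing the route.
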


\begin{proof}
The space
\begin{align}
X:=\DD(S)\times \Pi \times \mathcal K
\end{align}
is nonempty, compact, and convex in a finite-dimensional Euclidean space. By Lemma~\ref{lem:phi-properties}, the correspondence $\Phi:X\rightrightarrows X$ has nonempty compact convex values and closed graph. Hence Kakutani's fixed point theorem gives a triple $(\mu^\star,\pi^\star,p^\star)\in X$ such that
\begin{align}
(\mu^\star,\pi^\star,p^\star)\in \Phi(\mu^\star,\pi^\star,p^\star).
\end{align}

From the definition of $\Phi$, the fixed-point relation means:
\begin{enumerate}[label=(\roman*)]
    \item $\mu^\star\in\Inv(\pi^\star,p^\star)$, i.e.
    \begin{align}
    \mu^\star(s')
    =
    \sum_{s\in S}\mu^\star(s)\sum_{a\in A}\pi^\star(a|s)p^\star(s'|s,a)
    \qquad\forall s'\in S;
    \end{align}
    \item $\pi^\star\in\BR(\mu^\star)$, so $\supp \pi^\star(\cdot|s)\subset D(s,\mu^\star)$ for every state $\mathcal{S}$;
    \item $p^\star\in\WC(\mu^\star)$, so $p^\star(\cdot|s,a)\in \widehat P(s,a,\mu^\star)$ for every $(s,a)$.
\end{enumerate}

Now apply Proposition \ref{prop:robust-dp} at the population $\mu^\star$. Because $\pi^\star$ is pointwise optimal at every state and $p^\star$ is a pointwise minimizing kernel at every state-action pair, Proposition \ref{prop:robust-dp} yields
\begin{align}
V_{\mu^\star}
=
\inf_p J_{\mu^\star}(\pi^\star,p)
=
J_{\mu^\star}(\pi^\star,p^\star).
\end{align}
Together with item (i), this is exactly a stationary robust mean-field equilibrium.
\end{proof}

\section{Approximation of the $N$-player robust games}\label{app:approx}

\subsection{$N$-player robust games and DRMGs}\label{app:n-player and drmg}
We first note that, our $N$-player robust game is a robust Markov game on the enlarged state space $\mcs^N$, with 
\begin{itemize}
    \item joint state $s^N$,
    \item joint action $a^N$,
    \item Player $i$'s reward $r_i(s^N,a^N,(s')^N)=r\bigl(s^i,a^i,(s')^i,e^N(s^N)\bigr)$,
\end{itemize}
with a $(s,a)$-rectangular uncertainty set $\mathfrak{P}^N(s^N,a^N)$. Thus, our $N$-player robust game is a special robust Markov game on the joint state space $\mcs^N$ with mean-field coupling. 

Notably, the objective function (\eqref{eq:JiN}) in $N$-player robust games is defined as 
\begin{align}\label{eq:N-player-payoff}
J_i^N(\pi^N)= \inf_{(p_t^N)}\;
\E^{\pi^N,(p_t^N)}\Big[\sum_{t\ge0}\gamma^t\, r(s_t^i,a_t^i,s_{t+1}^i, e^N(s_t^N))\Big], 
\end{align}
where each agent concerns its own worst-case performance under the uncertainty set. Although in \eqref{eq:JiN}, the worst-case kernel is taken w.r.t. non-stationary kernel sequence $(p_t)$, we will show the worst-case is achieved at a stationary kernel. Thus, \eqref{eq:JiN} matches the robust value function of the standard robust Markov game.

\subsection{Counterexample}
\label{app:counterexample}
\begin{theorem}\label{thm:cex}
Fix $\gamma\in(0,1)$ and $\kappa>1/\gamma$, and set
\[
\varepsilon_0\;:=\;\frac{\gamma\kappa-1}{1-\gamma}\;>\;0.
\]
There exist a robust mean-field game satisfying Assumption~\ref{ass:stat} and a stationary robust mean-field equilibrium $(\mu^\star,\pi^\star,p^\star)$ of it such that:
\begin{enumerate}[label=(\alph*)]
  \item Assumption~\ref{ass:one-step-law} fails for the constant deviation sequence $\pi^{(N)}\equiv\pi^\star$ under \emph{every} choice of minimizing laws $P^{N|(N)}$; and
  \item for \emph{every} $N\in\mathbb{N}$, the symmetric profile $\pi^{N|\star}$ is not an $\varepsilon$-Nash equilibrium of the $N$-player robust game for any $\varepsilon<\varepsilon_0$.
\end{enumerate}
\end{theorem}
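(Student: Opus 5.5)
The plan is an explicit two-state construction. The example exploits one feature of the $N$-player game: nature chooses the transitions of all $N-1$ non-deviating players at once, so it can steer the empirical distribution $e^N$. The mean-field adversary cannot do this, because $\mu^\star$ is frozen. I would let the reward punish the equilibrium action when the population is in a "bad" state, and let a deviating action buy insurance at unit cost.

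\textbf{Construction.} Take $\mcs=\{0,1,2\}$ and $\mca=\{0,1\}$.
\begin{itemize}
\item States $0,1$ are ordinary states. State $2$ is an absorbing "already moved" marker, reachable only from state $0$. It lets a stationary policy behave differently at $t=0$ and at $t\ge1$.
\item The uncertainty set is constant in $\mu$: for $s\in\{0,1\}$ it is $\mathfrak{P}(s,a,\mu)=\mathrm{conv}\{\delta_0,\delta_1\}$ (or the corresponding modification when the marker is used), and $\mathfrak{P}(2,a,\mu)=\{\delta_2\}$. Constant compact convex sets trivially satisfy Assumption~\ref{ass:stat}(2).
\item The reward is
\[
r(s,a,s',\mu)=-\kappa\,g(\mu(1))\mathbf 1\{a=0\}-\mathbf 1\{a=1\},
\]
with $g$ continuous and strictly increasing, $g(0)=0$, and $g(x)=1$ for $x\ge 1/2$. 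The reward is bounded and continuous in $\mu$.
\end{itemize}

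\textbf{Mean-field equilibrium.} I claim $\mu^\star=\delta_0$, $\pi^\star\equiv\delta_0$, and $p^\star(\cdot\mid s,a)=\delta_s$ (stay put) form a stationary robust MFE.
\begin{itemize}
\item At $\mu^\star$ the robust $Q$-values are $Q_{\mu^\star}(s,0)=0$ and $Q_{\mu^\star}(s,1)=-1$. This is because the reward does not depend on the agent's own state, so the adversary is indifferent.
\item Hence $\pi^\star$ is greedy, and every kernel, including $p^\star$, lies in $\widehat{\mathfrak P}$. Proposition~\ref{prop:robust-dp} then gives robust optimality and the worst-case-kernel property.
\item $\delta_0$ is invariant under $(\pi^\star,p^\star)$, so Lemma~\ref{lem:alg_fixed_point_equiv} (or a direct check of Definition~\ref{def:smfe}) finishes this step.
\end{itemize}

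\textbf{$N$-player game under $\pi^{N|\star}$.} Player $1$'s reward at $t\ge1$ is $-\kappa g(e^N_t(1))$. This is minimized exactly when nature sends every other player to state $1$ at $t=1$ and keeps them there. Using the finite-player robust DP (Lemma~\ref{lem:finite-player-robust-dp}), I would show that every minimizing law does this. The reason is strict monotonicity of $g$: any minimizer must make $e^N_t(1)$ as large as the rectangular product sets allow, almost surely.

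Consequently $\mathrm{Law}(e^N(s^N_1))$ is concentrated at a point with $\|e^N_1-\mu^\star\|_1\ge 2(N-1)/N\ge1$ for $N\ge2$. By the projection remark after Assumption~\ref{ass:proxy-comparison}, $W_1(Q_1^{N|(N)},Q_1^{*(N)})\ge1$ for all $N\ge2$, which proves (a). The value is $J^N_1(\pi^{N|\star})=-\gamma\kappa/(1-\gamma)$, since $g=1$ whenever $(N-1)/N\ge 1/2$.

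\textbf{Profitable deviation.} The deviation plays $a=1$ forever. Its worst-case value is $-1/(1-\gamma)$, whatever nature does. The gap is therefore
\[
\frac{-1}{1-\gamma}-\frac{-\gamma\kappa}{1-\gamma}=\frac{\gamma\kappa-1}{1-\gamma}=\varepsilon_0 .
\]
To obtain exactly $\varepsilon_0$, the two policies must agree at $t=0$, where both face $e^N_0=\mu^\star$ and so $g=0$. Paying the cost $1$ at $t=0$ is unnecessary, and I would use the marker state $2$ to avoid it. The deviator plays $a=0$ in state $0$ at $t=0$ and $a=1$ thereafter; the dynamics route the deviator into state $2$ after the first step, and I would check that nature cannot prevent this. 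With $\mathfrak{P}(0,a,\mu)=\{\delta_2\}$ for the deviator this needs care, because the same sets apply to the others; I would instead make the marker depend only on the action taken.

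\textbf{Main obstacle.} The delicate points are achieving the exact constant $\varepsilon_0$ and covering the case $N=1$. For $N=1$ the empirical distribution is the deviator's own state, so the rewards and sets must be arranged so that nature can still push $e^1_1(1)=1$. If this fails, I would adjust $g$ so that its saturation threshold is at most $1/N$ for all $N$; one way is to let the penalty depend on $\mu(1)+\mu(2)$ so that the agent's own forced move counts. I expect to spend most of the effort tuning this part of the construction, while the DP verification is routine.
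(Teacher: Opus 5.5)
Your base construction is correct once you drop state $2$ (or keep it unreachable), and it is essentially the paper's example. With $g(x)=x$, your reward is the paper's reward ($1-\kappa\mu(1)$ for the equilibrium action, $0$ for the deviation) shifted by the constant $-1$. Your set $\mathrm{conv}\{\delta_0,\delta_1\}$ is the paper's full ambiguity set $\Delta(\{0,1\})$. The only difference is the witness for (a). The paper uses the local triple at $t=0$, where every minimizer has $s_1^1=1$ a.s.; you use the population coordinate at $t=1$. Both work.

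The obstacles you raise at the end are not real, and the detours should be dropped rather than engineered around.
\begin{itemize}
\item \emph{The gap.} The theorem needs a gap of at least $\varepsilon_0$. The stationary deviation ``always $a=1$'' already gives exactly $\varepsilon_0=-1+\gamma(\kappa-1)/(1-\gamma)$, and this already includes the unit loss at $t=0$. The marker state would only enlarge the gap to $\gamma(\kappa-1)/(1-\gamma)$, so it is unnecessary.
\item \emph{The case $N=1$.} Under $\pi^{N|\star}$ player $1$ also plays $a=0$, and $\delta_1\in\mathfrak{P}(0,0,\mu)$. So nature can move player $1$ itself, and $e^N_t(1)=1$ is attainable for every $N\ge 1$. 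Your fallback, a saturation threshold $\le 1/N$ for all $N$, is impossible anyway: it would force $g(0)=1$.
\item \emph{The function $g$.} As described, $g$ cannot be both strictly increasing and constant on $[1/2,1]$. On a flat part, minimizers need not maximize $e^N_t(1)$. So your claims that every minimizer herds all players to state $1$, and that $\mathrm{Law}(e^N_1)$ is a point mass, are unjustified. What does hold, and is enough, is that every minimizer has $e^N_t(1)\ge 1/2$ a.s.\ for $t\ge 1$. That gives $\|e^N_1-\mu^\star\|_1\ge 1$ for all $N\ge 1$.
\end{itemize}
Taking $g(x)=x$ removes all three issues at once and recovers the paper's proof.
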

 
\begin{proof}
\textbf{The model.} Let $\mathcal{S}=\{0,1\}$, $\mathcal{A}=\{b,o\}$, $\mathfrak{P}(s,a,\mu)=\Delta(\mathcal{S})$ for all $(s,a,\mu)$ (full ambiguity), and
\[
r(s,a,s',\mu)\;=\;\begin{cases}1-\kappa\,\mu(1), & a=b,\\ 0, & a=o.\end{cases}
\]
Assumption~\ref{ass:stat} holds: rewards are bounded and affine (hence continuous) in $\mu$; the constant correspondence $\mu\mapsto\Delta(\mathcal{S})$ is nonempty, convex, compact valued, with closed graph, and lower hemicontinuous.
 
\textbf{Step 1: $(\mu^\star,\pi^\star,p^\star)=(\delta_0,\;\pi^\star(b|\cdot)\equiv1,\;p^\star(\cdot|s,a)\equiv\delta_0)$ is a stationary robust MFE.}
At the population $\mu^\star=\delta_0$ we have $\mu^\star(1)=0$, so $r(s,b,s',\mu^\star)=1$ and $r(s,o,s',\mu^\star)=0$ for all $s,s'$. Since the reward does not depend on the successor state, for every stationary policy $\pi$ and every admissible stationary kernel $p$,
\[
J_{\mu^\star}(\pi,p)\;=\;\sum_{t\ge0}\gamma^t\,P^{\pi,p}(a_t=b)\;\le\;\frac{1}{1-\gamma},
\]
with equality when $\pi=\pi^\star$, for \emph{every} $p$. Hence
\[
V_{\mu^\star}\;=\;\sup_\pi\inf_{p}J_{\mu^\star}(\pi,p)\;=\;\frac{1}{1-\gamma}\;=\;\inf_p J_{\mu^\star}(\pi^\star,p)\;=\;J_{\mu^\star}(\pi^\star,p^\star),
\]
which verifies Definition~\ref{def:smfe}(i)--(ii). Consistency (Definition~\ref{def:smfe}(iii)) holds because $K_{\pi^\star,p^\star}(\cdot|s)=\delta_0$ for every $s$, so $\delta_0 K_{\pi^\star,p^\star}=\delta_0=\mu^\star$.
 
\textbf{Step 2: Assumption~\ref{ass:one-step-law} fails under every choice of minimizing laws.}
Take the constant deviation sequence $\pi^{(N)}\equiv\pi^\star$, so $\pi^{N|(N)}=\pi^{N|\star}$ and every player always plays $b$. Fix $N$ and let $(p^N_t)_{t\ge0}$ be \emph{any} admissible kernel sequence for the $N$-player robust game. Since $s_0^i\sim\mu^\star=\delta_0$ i.i.d., we have $e_N(s^N_0)=\delta_0$ almost surely, and the realized reward of player~$1$ at time $t$ is $1-\kappa\, e_N(s^N_t)(1)$. Therefore the payoff under $(p^N_t)$ equals
\begin{equation}\label{eq:cex-payoff}
\sum_{t\ge0}\gamma^t\Big(1-\kappa\,\E\big[e_N(s^N_t)(1)\big]\Big)
\;=\;\frac{1}{1-\gamma}\;-\;\kappa\sum_{t\ge1}\gamma^t\,\E\big[e_N(s^N_t)(1)\big].
\end{equation}
Because $e_N(\cdot)(1)\in[0,1]$ pointwise, \eqref{eq:cex-payoff} is bounded below by
\[
\frac{1}{1-\gamma}-\kappa\sum_{t\ge1}\gamma^t\;=\;\frac{1-\gamma\kappa}{1-\gamma},
\]
\emph{with equality if and only if} $\E[e_N(s^N_t)(1)]=1$ for every $t\ge1$, i.e., if and only if $s^i_t=1$ almost surely for all players $i$ and all $t\ge1$. The lower bound is attained by the admissible constant product kernel $p^N(\cdot\,|\,s^N,a^N):=\delta_1\otimes\cdots\otimes\delta_1\in\mathfrak{P}^N(s^N,a^N)$, which sends every player to state $1$ at time $1$ and keeps them there. Hence
\begin{equation}\label{eq:cex-value}
J^N_1(\pi^{N|\star})\;=\;\frac{1-\gamma\kappa}{1-\gamma},
\end{equation}
and, crucially, \emph{every} law $P^{N|(N)}$ attaining the infimum satisfies $s^1_1=1$ almost surely. Consequently, under \emph{every} minimizing law,
\[
Q^{N|(N)}_0\;=\;\delta_{(0,\,b,\,1,\,\delta_0)},
\qquad\text{whereas}\qquad
Q^{\star(N)}_0\;=\;\delta_{(0,\,b,\,0,\,\delta_0)}
\]
(the proxy chain has $s_0=0$, $a_0=b$, $s_1\sim p^\star(\cdot|0,b)=\delta_0$, population coordinate $\mu^\star=\delta_0$). These are Dirac masses at points $z\neq z'$ of $Z$ with $d_Z(z,z')=\textbf{1}\{(0,b,1)\neq(0,b,0)\}+\norm{\delta_0-\delta_0}_1=1$, so
\[
\Wone\big(Q^{N|(N)}_0,\,Q^{\star(N)}_0\big)\;=\;1\qquad\text{for every }N\text{ and every choice of minimizing laws.}
\]
Thus (16) fails at $t=0$ for this deviation sequence regardless of how the minimizing laws are chosen, i.e., Assumption~\ref{ass:one-step-law} does not hold. This proves (a).
 
\textbf{Step 3: uniform Nash gap.}
Consider the unilateral deviation $\pi(o|s)\equiv1$ by player~$1$. Its reward is identically $0$ along every trajectory, regardless of the kernel sequence, so $J^N_1(\pi^{N|\star,-1}\oplus\pi)=0$ for every $N$. Combining with \eqref{eq:cex-value} and $\kappa>1/\gamma$,
\[
\sup_{\pi'}J^N_1(\pi^{N|\star,-1}\oplus\pi')\;-\;J^N_1(\pi^{N|\star})
\;\ge\;0-\frac{1-\gamma\kappa}{1-\gamma}
\;=\;\frac{\gamma\kappa-1}{1-\gamma}\;=\;\varepsilon_0\;>\;0
\]
for every $N$. Hence $\pi^{N|\star}$ is not an $\varepsilon$-Nash equilibrium for any $\varepsilon<\varepsilon_0$, proving (b).
\end{proof}

\subsection{Auxiliary facts}
\label{app:auxiliary-C}





\begin{lemma}[Finite-player robust dynamic programming]
\label{lem:finite-player-robust-dp}
Fix $N\in\mathbb N$, a profile $\pi^N\in \Pi^N$, and an agent
$i\in\{1,\dots,N\}$. Define the one-step reward on
$S^N\times A^N\times S^N$ by
\[
\rho_i(s^N,a^N,s'{}^N)
:= r\!\bigl(s^i,a^i,(s')^i,e^N(s^N)\bigr).
\]
Then the following hold.

\begin{enumerate}
\item There exists a unique function $u^{N,i}:S^N\to\mathbb R$
satisfying
\[
u^{N,i}(s^N)
=
\sum_{a^N\in A^N}\pi^N(a^N\mid s^N)
\min_{p\in P^N(s^N,a^N)}
\sum_{s'{}^N\in S^N}
p(s'{}^N)\Bigl(\rho_i(s^N,a^N,s'{}^N)+\gamma u^{N,i}(s'{}^N)\Bigr).
\]

\item The robust payoff in \eqref{eq:N-player-payoff} satisfies
\[
J_i^N(\pi^N)
=
\mathbb E_{s_0^N\sim (\mu^\star)^{\otimes N}}
\bigl[u^{N,i}(s_0^N)\bigr].
\]

\item The infimum in \eqref{eq:N-player-payoff} is attained by a
stationary kernel
\[
p^{N,i,\star}:S^N\times A^N\to \Delta(S^N)
\]
with
\[
p^{N,i,\star}(\cdot\mid s^N,a^N)\in P^N(s^N,a^N)
\qquad\forall (s^N,a^N)\in S^N\times A^N.
\]
More precisely, one may choose $p^{N,i,\star}$ so that
\[
\sum_{s'{}^N}
p^{N,i,\star}(s'{}^N\mid s^N,a^N)
\Bigl(\rho_i(s^N,a^N,s'{}^N)+\gamma u^{N,i}(s'{}^N)\Bigr)
=
\min_{p\in P^N(s^N,a^N)}
\sum_{s'{}^N}
p(s'{}^N)\Bigl(\rho_i(s^N,a^N,s'{}^N)+\gamma u^{N,i}(s'{}^N)\Bigr),
\]
and, if we use the constant sequence $p_t^N\equiv p^{N,i,\star}$, then
\[
J_i^N(\pi^N)
=
\mathbb E^{\pi^N,p^{N,i,\star}}
\!\left[
\sum_{t=0}^\infty \gamma^t
\rho_i(s_t^N,a_t^N,s_{t+1}^N)
\right].
\]
\end{enumerate}
\end{lemma}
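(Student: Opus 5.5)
The argument treats the $N$-player game, for a fixed profile $\pi^N$ and agent $i$, as a single-agent robust policy-evaluation problem on the finite state space $S^N$. The uncertainty set $P^N(s^N,a^N)=\bigotimes_j \mathfrak{P}(s^j,a^j,e^N(s^N))$ is $(s^N,a^N)$-rectangular. It is nonempty and compact as a product of nonempty compact sets; product measures of elements are taken inside $\Delta(S^N)$, and the product map is continuous. We then replicate Lemma~\ref{lem:contraction}, Proposition~\ref{prop:robust-dp} and Proposition~\ref{prop:history} on this enlarged model.

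\textbf{Step 1 (existence and uniqueness of $u^{N,i}$).} Define
\[
(\mathcal{T}^{N,i}u)(s^N):=\sum_{a^N}\pi^N(a^N\mid s^N)\min_{p\in P^N(s^N,a^N)}\sum_{s'^N}p(s'^N)\bigl(\rho_i+\gamma u(s'^N)\bigr).
\]
The inner minimum is attained because the objective is linear, hence continuous, in $p$ and $P^N(s^N,a^N)$ is compact. Since $\rho_i$ is bounded by $\|r\|_\infty$, the computation of Lemma~\ref{lem:contraction} shows that $\mathcal{T}^{N,i}$ is a $\gamma$-contraction in $\|\cdot\|_\infty$. Banach's theorem then gives a unique fixed point $u^{N,i}$. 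Monotonicity of $\mathcal{T}^{N,i}$ follows exactly as in Lemma~\ref{lem:monotone}.

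\textbf{Step 2 (attainment by a stationary kernel).} For each $(s^N,a^N)$, choose a minimizer $p^{N,i,\star}(\cdot\mid s^N,a^N)$ of the inner problem at $u=u^{N,i}$; the state-action space is finite, so no measurability issue arises. Then $u^{N,i}$ is the fixed point of the affine policy-evaluation operator for the stationary pair $(\pi^N,p^{N,i,\star})$. The partial-sum argument of Proposition~\ref{prop:robust-dp}, Step 1, identifies this fixed point with the discounted return from each initial state. Hence
\[
\mathbb E^{\pi^N,p^{N,i,\star}}\Bigl[\sum_t\gamma^t\rho_i\Bigr]=\mathbb E_{s_0^N\sim(\mu^\star)^{\otimes N}}\bigl[u^{N,i}(s_0^N)\bigr],
\]
after averaging over the initial law. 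This gives the upper bound $J_i^N(\pi^N)\le \mathbb E[u^{N,i}(s_0^N)]$.

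\textbf{Step 3 (lower bound over non-stationary kernels).} Fix any admissible sequence $(p_t^N)$; the argument also covers history-dependent choices. Conditioning on $s_t^N$ and using $p_t^N(\cdot\mid s_t^N,a^N)\in P^N(s_t^N,a^N)$ gives
\[
\mathbb E\bigl[\rho_i(s^N_t,a^N_t,s^N_{t+1})+\gamma u^{N,i}(s^N_{t+1})\mid s_t^N\bigr]\ge(\mathcal{T}^{N,i}u^{N,i})(s_t^N)=u^{N,i}(s_t^N).
\]
Multiplying by $\gamma^t$ and telescoping over $t<T$ yields
\[
\mathbb E\sum_{t<T}\gamma^t\rho_i\ge \mathbb E\, u^{N,i}(s_0^N)-\gamma^T\|u^{N,i}\|_\infty .
\]
Since $\|u^{N,i}\|_\infty\le\|r\|_\infty/(1-\gamma)$, letting $T\to\infty$ and using dominated convergence shows that every admissible sequence has payoff at least $\mathbb E[u^{N,i}(s_0^N)]$. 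This is the same argument as in Proposition~\ref{prop:history}. Combined with Step 2, it proves parts 2 and 3.

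\textbf{Main obstacle.} The only genuinely delicate point is Step 3: the infimum in \eqref{eq:N-player-payoff} ranges over time-inhomogeneous kernels, so the comparison must go through the one-step supermartingale-type inequality rather than through the fixed point directly. Verifying compactness of $P^N$ in Step 1, so that the minimizers in Step 2 exist, is routine by comparison.
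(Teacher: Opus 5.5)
Your proposal is correct and follows essentially the same route as the paper. You use a $\gamma$-contraction plus Banach's theorem for $u^{N,i}$, with compactness of $\mathfrak{P}^N(s^N,a^N)$ ensuring the inner minimum is attained. You then pick a pointwise minimizing stationary selector whose discounted return equals $u^{N,i}$, giving the upper bound, and a telescoping one-step inequality with remainder $\gamma^T\|u^{N,i}\|_\infty$ showing that no admissible non-stationary sequence does better. The only cosmetic difference is that you identify the selector's return through the partial-sum recursion of Proposition~\ref{prop:robust-dp} rather than by iterating the fixed-point identity directly, and the two are equivalent.
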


\begin{proof}
For $u\in \mathbb R^{S^N}$, define the Bellman operator
$T^{N,i}:\mathbb R^{S^N}\to \mathbb R^{S^N}$ by
\[
(T^{N,i}u)(s^N)
:=
\sum_{a^N\in A^N}\pi^N(a^N\mid s^N)
\min_{p\in P^N(s^N,a^N)}
\sum_{s'{}^N\in S^N}
p(s'{}^N)\Bigl(\rho_i(s^N,a^N,s'{}^N)+\gamma u(s'{}^N)\Bigr).
\]
By Lemma~\ref{lem:PN-compact}, each set $P^N(s^N,a^N)$ is nonempty and
compact, so every inner minimum is attained.

We first prove (i). Fix $u,v\in \mathbb R^{S^N}$ and $s^N\in S^N$.
For each $a^N\in A^N$, define
\[
\Psi_{a^N}(u)
:=
\min_{p\in P^N(s^N,a^N)}
\sum_{s'{}^N\in S^N}
p(s'{}^N)\Bigl(\rho_i(s^N,a^N,s'{}^N)+\gamma u(s'{}^N)\Bigr).
\]
Choose
\[
p_u\in \arg\min_{p\in P^N(s^N,a^N)}
\sum_{s'{}^N}
p(s'{}^N)\Bigl(\rho_i(s^N,a^N,s'{}^N)+\gamma u(s'{}^N)\Bigr).
\]
Then
\begin{align*}
\Psi_{a^N}(u)-\Psi_{a^N}(v)
&\le
\sum_{s'{}^N}
p_u(s'{}^N)
\Bigl(\rho_i(s^N,a^N,s'{}^N)+\gamma u(s'{}^N)\Bigr) \\
&\quad -
\sum_{s'{}^N}
p_u(s'{}^N)
\Bigl(\rho_i(s^N,a^N,s'{}^N)+\gamma v(s'{}^N)\Bigr) \\
&=
\gamma
\sum_{s'{}^N}
p_u(s'{}^N)\bigl(u(s'{}^N)-v(s'{}^N)\bigr)
\le \gamma \|u-v\|_\infty.
\end{align*}
Exchanging the roles of $u$ and $v$ gives
\[
|\Psi_{a^N}(u)-\Psi_{a^N}(v)|\le \gamma \|u-v\|_\infty.
\]
Averaging with respect to $\pi^N(\cdot\mid s^N)$ yields
\[
|(T^{N,i}u)(s^N)-(T^{N,i}v)(s^N)|
\le \gamma \|u-v\|_\infty.
\]
Taking the supremum over $s^N\in S^N$, we obtain
\[
\|T^{N,i}u-T^{N,i}v\|_\infty \le \gamma \|u-v\|_\infty.
\]
Since $\gamma\in(0,1)$, $T^{N,i}$ is a contraction on the complete
metric space $(\mathbb R^{S^N},\|\cdot\|_\infty)$. By Banach's fixed-point
theorem, there exists a unique fixed point $u^{N,i}$ such that
\[
u^{N,i}=T^{N,i}u^{N,i}.
\]
This proves (i).

We next prove (ii) and (iii). For each $(s^N,a^N)\in S^N\times A^N$,
choose
\[
p^{N,i,\star}(\cdot\mid s^N,a^N)
\in
\arg\min_{p\in P^N(s^N,a^N)}
\sum_{s'{}^N}
p(s'{}^N)\Bigl(\rho_i(s^N,a^N,s'{}^N)+\gamma u^{N,i}(s'{}^N)\Bigr).
\]
Because $S^N\times A^N$ is finite, this defines a stationary selector
with no measurability issue. By construction,
\begin{equation}
\sum_{s'{}^N}
p^{N,i,\star}(s'{}^N\mid s^N,a^N)
\Bigl(\rho_i(s^N,a^N,s'{}^N)+\gamma u^{N,i}(s'{}^N)\Bigr)
=
\min_{p\in P^N(s^N,a^N)}
\sum_{s'{}^N}
p(s'{}^N)\Bigl(\rho_i(s^N,a^N,s'{}^N)+\gamma u^{N,i}(s'{}^N)\Bigr).
\label{eq:finiteN-minimizer-pointwise}
\end{equation}

Fix an initial state $x^N\in S^N$, and consider the controlled Markov
chain generated by the stationary profile $\pi^N$ and the stationary
kernel $p^{N,i,\star}$, started from $s_0^N=x^N$.
Using the fixed-point identity $u^{N,i}=T^{N,i}u^{N,i}$ together with
\eqref{eq:finiteN-minimizer-pointwise}, we get
\[
u^{N,i}(x^N)
=
\mathbb E_{x^N}^{\pi^N,p^{N,i,\star}}
\Bigl[
\rho_i(s_0^N,a_0^N,s_1^N)+\gamma u^{N,i}(s_1^N)
\Bigr].
\]
Iterating this identity yields, for every $T\ge 1$,
\[
u^{N,i}(x^N)
=
\mathbb E_{x^N}^{\pi^N,p^{N,i,\star}}
\left[
\sum_{t=0}^{T-1}\gamma^t \rho_i(s_t^N,a_t^N,s_{t+1}^N)
+\gamma^T u^{N,i}(s_T^N)
\right].
\]
Since $|\rho_i|\le \|r\|_\infty$, the fixed point is bounded:
\[
\|u^{N,i}\|_\infty
=
\|T^{N,i}u^{N,i}\|_\infty
\le
\|r\|_\infty+\gamma \|u^{N,i}\|_\infty,
\]
hence
\[
\|u^{N,i}\|_\infty \le \frac{\|r\|_\infty}{1-\gamma}.
\]
Therefore $\gamma^T u^{N,i}(s_T^N)\to 0$ in $L^1$, and letting
$T\to\infty$ gives
\[
u^{N,i}(x^N)
=
\mathbb E_{x^N}^{\pi^N,p^{N,i,\star}}
\left[
\sum_{t=0}^{\infty}\gamma^t \rho_i(s_t^N,a_t^N,s_{t+1}^N)
\right].
\]
Averaging over $x^N=s_0^N\sim (\mu^\star)^{\otimes N}$, we obtain
\[
\mathbb E^{\pi^N,p^{N,i,\star}}
\left[
\sum_{t=0}^{\infty}\gamma^t \rho_i(s_t^N,a_t^N,s_{t+1}^N)
\right]
=
\mathbb E_{s_0^N\sim (\mu^\star)^{\otimes N}}
\bigl[u^{N,i}(s_0^N)\bigr].
\]
So the stationary selector $p^{N,i,\star}$ achieves the value
$\mathbb E[u^{N,i}(s_0^N)]$.

It remains to show that no admissible nonstationary sequence can do
better for the minimizing player. Let $(q_t^N)_{t\ge 0}$ be any
admissible sequence with
\[
q_t^N(\cdot\mid s^N,a^N)\in P^N(s^N,a^N)
\qquad\forall t,\ (s^N,a^N).
\]
Consider the controlled process under $\pi^N$ and $(q_t^N)_{t\ge 0}$.
For every state $s^N$ and every time $t$, by the definition of the
minimum in $T^{N,i}$,
\begin{align*}
u^{N,i}(s^N)
&=
\sum_{a^N}\pi^N(a^N\mid s^N)
\min_{p\in P^N(s^N,a^N)}
\sum_{y^N} p(y^N)\Bigl(\rho_i(s^N,a^N,y^N)+\gamma u^{N,i}(y^N)\Bigr) \\
&\le
\sum_{a^N}\pi^N(a^N\mid s^N)
\sum_{y^N} q_t^N(y^N\mid s^N,a^N)
\Bigl(\rho_i(s^N,a^N,y^N)+\gamma u^{N,i}(y^N)\Bigr).
\end{align*}
Thus, along the controlled process,
\[
u^{N,i}(s_t^N)
\le
\mathbb E^{\pi^N,(q_t^N)}
\Bigl[
\rho_i(s_t^N,a_t^N,s_{t+1}^N)+\gamma u^{N,i}(s_{t+1}^N)
\,\big|\, s_t^N
\Bigr].
\]
Taking expectations, multiplying by $\gamma^t$, and summing from
$t=0$ to $T-1$, we get the telescoping estimate
\[
\mathbb E^{\pi^N,(q_t^N)}
\left[
\sum_{t=0}^{T-1}\gamma^t \rho_i(s_t^N,a_t^N,s_{t+1}^N)
\right]
\ge
\mathbb E^{\pi^N,(q_t^N)}\!\bigl[u^{N,i}(s_0^N)\bigr]
-
\mathbb E^{\pi^N,(q_t^N)}\!\bigl[\gamma^T u^{N,i}(s_T^N)\bigr].
\]
Again, boundedness of $u^{N,i}$ implies the last term tends to $0$ as
$T\to\infty$, so
\[
\mathbb E^{\pi^N,(q_t^N)}
\left[
\sum_{t=0}^{\infty}\gamma^t \rho_i(s_t^N,a_t^N,s_{t+1}^N)
\right]
\ge
\mathbb E_{s_0^N\sim (\mu^\star)^{\otimes N}}
\bigl[u^{N,i}(s_0^N)\bigr].
\]
Since $(q_t^N)_{t\ge 0}$ was arbitrary, every admissible sequence yields
a payoff at least $\mathbb E[u^{N,i}(s_0^N)]$, while the stationary
selector $p^{N,i,\star}$ attains exactly this value. Therefore
\[
J_i^N(\pi^N)
=
\mathbb E_{s_0^N\sim (\mu^\star)^{\otimes N}}
\bigl[u^{N,i}(s_0^N)\bigr],
\]
and the infimum is attained by the stationary kernel $p^{N,i,\star}$.
This proves (ii) and (iii).
\end{proof}

Fix a stationary robust mean-field equilibrium
$(\mu^\star,\pi^\star,p^\star)$.
Define the equilibrium proxy value
\begin{align}
V^\star
:=
\sum_{s\in S}\mu^\star(s)\,v_{\mu^\star}(s).
\end{align}
By Lemma~\ref{lem:reduction} applied at $\mu=\mu^\star$,
\begin{align}
V^\star
=
\sup_{\pi:S\to\Delta(A)}
\mathbb E_{s_0\sim\mu^\star}^{\pi,p^\star}
\!\left[\sum_{t=0}^\infty \gamma^t
r(s_t,a_t,s_{t+1},\mu^\star)\right],
\end{align}
and the supremum is attained at $\pi=\pi^\star$.

\begin{lemma}\label{lem:PN-compact}
Assume Assumption~\ref{ass:stat}. Then for every $N$ and $(s^N,a^N)$, the set
$\mathfrak{P}^N(s^N,a^N)$ defined in \eqref{eq:PN} is nonempty and compact.
\end{lemma}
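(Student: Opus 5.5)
The plan is to exploit the product structure of $\mathfrak{P}^N(s^N,a^N)=\bigotimes_{i=1}^N \mathfrak{P}(s^i,a^i,e^N(s^N))$. Here we read this as the set of product measures $q^1\otimes\cdots\otimes q^N$ with each $q^i\in\mathfrak{P}(s^i,a^i,e^N(s^N))$; this is the interpretation used throughout, for instance in the counterexample of Theorem~\ref{thm:cex}. Fix $N$ and $(s^N,a^N)$, and write $\nu:=e^N(s^N)\in\DeltaS$. By Assumption~\ref{ass:stat}(2), each factor set $\mathfrak{P}(s^i,a^i,\nu)$ is a nonempty compact subset of the finite-dimensional simplex $\DeltaS$.

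\textbf{Nonemptiness.} For each $i$, pick some $q^i\in\mathfrak{P}(s^i,a^i,\nu)$. The product measure $q^1\otimes\cdots\otimes q^N$ then lies in $\mathfrak{P}^N(s^N,a^N)\subseteq\Delta(\mcs^N)$.

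\textbf{Compactness.} Consider the map
\[
\Psi:\prod_{i=1}^N\DeltaS\to\Delta(\mcs^N),\qquad (q^1,\dots,q^N)\mapsto q^1\otimes\cdots\otimes q^N .
\]
Each coordinate of $\Psi$ is $\Psi(q)(y^1,\dots,y^N)=\prod_i q^i(y^i)$, which is a polynomial in the entries of the $q^i$. Hence $\Psi$ is continuous with respect to the Euclidean topologies. By Tychonoff's theorem (here just a finite product of compact subsets of Euclidean spaces), the set $\prod_{i=1}^N\mathfrak{P}(s^i,a^i,\nu)$ is compact. Since $\mathfrak{P}^N(s^N,a^N)=\Psi\bigl(\prod_i\mathfrak{P}(s^i,a^i,\nu)\bigr)$, it is the continuous image of a compact set and is therefore compact in $\Delta(\mcs^N)\subset\R^{\mcs^N}$.

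There is no real obstacle here. The only subtlety is the interpretation of $\bigotimes$: if it were instead read as the set of all couplings with these marginals, compactness would follow because that set is closed (marginal constraints are linear) and bounded. Nonemptiness would still follow from the product coupling. Convexity is not claimed in the statement; it would fail for products in general, and it is not needed here.
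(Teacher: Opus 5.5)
Your proof is correct and follows the paper's argument: nonemptiness by choosing one element $q^i\in\mathfrak{P}(s^i,a^i,e^N(s^N))$ for each $i$ and taking the product, and compactness because $\mathfrak{P}^N(s^N,a^N)$ is the image of the compact set $\prod_{i}\mathfrak{P}(s^i,a^i,e^N(s^N))$ under the continuous (polynomial) map $(q^1,\dots,q^N)\mapsto\bigotimes_i q^i$. Your side remarks on the coupling interpretation and on the failure of convexity are accurate, but the statement does not need them.
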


\begin{proof}
Non-emptiness: choose for each $i$ some $p^i(\cdot\mid s^i,a^i)\in\mathfrak{P}(s^i,a^i,e^N(s^N))$ and take the product.

Compactness: each $\mathfrak{P}(s^i,a^i,e^N(s^N))$ is compact in the simplex $\DeltaS$.
Their finite product is compact in $(\DeltaS)^N$. The map $(p^1,\dots,p^N)\mapsto\bigotimes_{i=1}^N p^i$
is continuous (finite products of coordinates), hence the image $\mathfrak{P}^N(s^N,a^N)$ is compact.
\end{proof}

\begin{lemma}\label{lem:composition}
Let $X$ be finite and $Y$ Polish. Let $\lambda_n\in\Pcal(X)$ and kernels $K_n(\cdot\mid x)\in\Pcal(Y)$.
If $\lambda_n\rightarrow\lambda$ and $K_n(\cdot\mid x)\rightarrow K(\cdot\mid x)$ for each $x\in X$,
then the measures $\Lambda_n(dx,dy):=\lambda_n(dx)K_n(dy\mid x)$ satisfy $\Lambda_n\rightarrow\Lambda(dx,dy):=\lambda(dx)K(dy\mid x)$.
\end{lemma}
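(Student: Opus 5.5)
The plan is to show weak convergence of $\Lambda_n$ to $\Lambda$ by testing against an arbitrary bounded continuous function $f:X\times Y\to\R$. Since $X$ is finite, integrals against $\Lambda_n$ reduce to finite sums:
\[
\int f\,d\Lambda_n=\sum_{x\in X}\lambda_n(x)\int_Y f(x,y)\,K_n(dy\mid x).
\]
The corresponding expression for $\Lambda$ is
\[
\int f\,d\Lambda=\sum_{x\in X}\lambda(x)\int_Y f(x,y)\,K(dy\mid x).
\]
Here convergence $\lambda_n\to\lambda$ on the finite set $X$ amounts to pointwise convergence $\lambda_n(x)\to\lambda(x)$ for every $x$, since weak convergence on a finite discrete space is equivalent to convergence of the point masses.

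First, for each fixed $x\in X$, the function $y\mapsto f(x,y)$ is bounded and continuous on $Y$. The assumed weak convergence $K_n(\cdot\mid x)\to K(\cdot\mid x)$ therefore gives
\[
g_n(x):=\int f(x,y)K_n(dy\mid x)\;\to\; g(x):=\int f(x,y)K(dy\mid x).
\]
Second, these quantities are uniformly bounded, with $|g_n(x)|\le\|f\|_\infty$.

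Then split the difference of the two sums:
\[
\bigl|\textstyle\sum_x \lambda_n(x)g_n(x)-\sum_x\lambda(x)g(x)\bigr|\le \sum_x |\lambda_n(x)-\lambda(x)|\,\|f\|_\infty+\sum_x \lambda(x)|g_n(x)-g(x)|.
\]
Both terms are finite sums of quantities tending to zero, so the difference vanishes. Since $f$ was arbitrary, this yields $\Lambda_n\to\Lambda$ weakly.

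There is no real obstacle. The only point to state carefully is that continuity of $f$ on $X\times Y$, with $X$ discrete, makes each section $f(x,\cdot)$ continuous. This is immediate, and it is all that is needed to invoke the weak convergence of the kernels.
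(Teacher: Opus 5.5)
Your proposal is correct and follows essentially the same route as the paper's proof: test against a bounded continuous function, reduce the integral to a finite sum over $X$, and use convergence of the inner integrals together with convergence of the point masses. Your explicit triangle-inequality split with the bound $|g_n(x)|\le\|f\|_\infty$ just spells out the paper's remark that the limits pass through the finite sum.
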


\begin{proof}
Since $X$ is finite, weak convergence of $\lambda_n$ is pointwise convergence of masses.
For bounded continuous $g:X\times Y\to\R$,
\begin{align}
\int g\,d\Lambda_n=\sum_{x\in X}\lambda_n(\{x\})\int_Y g(x,y)\,K_n(dy\mid x).
\end{align}
For each $x$, $y\mapsto g(x,y)$ is bounded continuous, so the inner integrals converge by $K_n(\cdot\mid x)\rightarrow K(\cdot\mid x)$,
and the coefficients converge by $\lambda_n(\{x\})\to\lambda(\{x\})$. Sum is finite, so the limits pass through the sum.
\end{proof}

\subsection{Convergence of discounted payoffs}
\label{app:discounted-payoff-convergence}

\begin{lemma}\label{lem:uc}
Let $(Z,d_Z)$ be a compact metric space and $f:Z\to\R$ continuous, with modulus of continuity $\omega_f(\delta):=\sup\{|f(z)-f(z')|:d_Z(z,z')\le\delta\}$. Then for all $\mu,\nu\in\Delta(Z)$ and all $\delta>0$,
\begin{equation}\label{eq:uc}
\Big|\int_Z f\,d\mu-\int_Z f\,d\nu\Big|\;\le\;\omega_f(\delta)\;+\;\frac{2\norm{f}_\infty}{\delta}\,\Wone(\mu,\nu).
\end{equation}
Consequently, if $(\mu_N)_N,(\nu_N)_N\subset\Delta(Z)$ satisfy $\Wone(\mu_N,\nu_N)\to0$, then $\int f\,d\mu_N-\int f\,d\nu_N\to0$.
\end{lemma}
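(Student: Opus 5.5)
The plan is to couple $\mu$ and $\nu$ through an optimal transport plan and split the integrand according to whether the coupled points are close. Since $Z$ is compact, it is Polish, and $W_1$ is finite on $\Delta(Z)$. Hence an optimal coupling $\Gamma\in\Delta(Z\times Z)$ with marginals $\mu,\nu$ exists, satisfying
\[
\int d_Z(z,z')\,\Gamma(dz,dz')=\Wone(\mu,\nu).
\]
If one prefers not to invoke attainment, an $\eta$-optimal coupling works equally well, with $\eta\downarrow0$ taken at the end. Then
\[
\int f\,d\mu-\int f\,d\nu=\int \bigl(f(z)-f(z')\bigr)\,\Gamma(dz,dz'),
\]
and I bound $|f(z)-f(z')|$ on the two regions $\{d_Z(z,z')\le\delta\}$ and $\{d_Z(z,z')>\delta\}$.

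On the near region the integrand is at most $\omega_f(\delta)$ by the definition of the modulus, so this part contributes at most $\omega_f(\delta)$, since $\Gamma$ is a probability measure. On the far region the integrand is at most $2\norm{f}_\infty$. Markov's inequality applied to $d_Z$ under $\Gamma$ gives
\[
\Gamma\bigl(d_Z>\delta\bigr)\le \delta^{-1}\int d_Z\,d\Gamma=\Wone(\mu,\nu)/\delta .
\]
Adding the two contributions yields \eqref{eq:uc}.

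For the consequence, the key input is that $f$ is uniformly continuous because $Z$ is compact, so $\omega_f(\delta)\to0$ as $\delta\downarrow0$; also $\norm{f}_\infty<\infty$. Given $\epsilon>0$, I first choose $\delta$ with $\omega_f(\delta)<\epsilon/2$, and then take $N$ large so that $2\norm{f}_\infty\Wone(\mu_N,\nu_N)/\delta<\epsilon/2$. Applying \eqref{eq:uc} to each pair $(\mu_N,\nu_N)$ then gives a difference below $\epsilon$ for all such $N$.

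There is no real obstacle here. The only points needing care are the existence of an (near-)optimal coupling and the fact that the modulus is taken with $\le\delta$, which matches the split exactly. The $\eta$-optimal coupling device removes any dependence on attainment, so the argument is complete either way.
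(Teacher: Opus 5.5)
Your proposal is correct and follows essentially the same argument as the paper. Both take an optimal coupling, split $|f(z)-f(z')|$ over $\{d_Z\le\delta\}$ and $\{d_Z>\delta\}$ with Markov's inequality on the far region, and use uniform continuity on compact $Z$ for the convergence claim; your $\eta$-optimal coupling remark is a harmless extra that avoids relying on attainment of the optimal coupling.
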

 
\begin{proof}
Since $Z$ is compact, $f$ is bounded and uniformly continuous, so $\omega_f(\delta)<\infty$ for all $\delta>0$ and $\omega_f(\delta)\downarrow0$ as $\delta\downarrow0$. Because $Z$ is a compact Polish space, the infimum defining $\Wone(\mu,\nu)$ is attained by some coupling $\Gamma\in\Delta(Z\times Z)$ with marginals $\mu,\nu$. Writing $(Z,Z')\sim\Gamma$,
\[
\Big|\int f\,d\mu-\int f\,d\nu\Big|
\le\E_\Gamma\big|f(Z)-f(Z')\big|
\le \omega_f(\delta)+2\norm{f}_\infty\,P_\Gamma\big(d_Z(Z,Z')>\delta\big)
\le \omega_f(\delta)+\frac{2\norm{f}_\infty} 
{\delta}\,\E_\Gamma\,d_Z(Z,Z'),
\]
where the last step is Markov's inequality; since $\E_\Gamma d_Z(Z,Z')=\Wone(\mu,\nu)$, \eqref{eq:uc} follows. For the convergence claim, take $\limsup_{N}$ in \eqref{eq:uc} and then let $\delta\downarrow0$.
\end{proof}
 
\begin{proposition}\label{prop:actual-to-proxy-asymptotic}
Under Assumptions~\ref{ass:stat} and~\ref{ass:one-step-law}, for every deviation sequence $(\pi^{(N)})_N$,
\[
\lim_{N\to\infty}\Big(J^N_1\big(\pi^{N|\star,-1}\oplus\pi^{(N)}\big)\;-\;J^{\star}\big(\pi^{(N)}\big)\Big)\;=\;0,
\]
where $J^\star(\pi^{(N)})$ is the proxy payoff.
\end{proposition}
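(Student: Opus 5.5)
The plan is to write both payoffs as discounted sums of expectations of a single bounded continuous function on $Z$, compare them term by term through Assumption~\ref{ass:one-step-law} and Lemma~\ref{lem:uc}, and then control the tail uniformly in $N$.

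Define $f:Z\to\R$ by $f(s,a,s',\nu):=r(s,a,s',\nu)$. By Assumption~\ref{ass:stat}(3), $f$ is bounded and continuous. The space $Z=X\times\DeltaS$ is compact under $d_Z$, since $X$ is finite with the discrete metric and $\DeltaS$ is compact in $\|\cdot\|_1$. Hence Lemma~\ref{lem:uc} applies to $f$.

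First, I express the two payoffs as discounted sums of one-step expectations.

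\emph{The $N$-player side.} By Lemma~\ref{lem:finite-player-robust-dp}, the robust payoff $J^N_1(\pi^{N|\star,-1}\oplus\pi^{(N)})$ is attained by a minimizing law $P^{N|(N)}$. Take it to be the one singled out in Assumption~\ref{ass:one-step-law}. Fubini, justified by boundedness of $r$, then gives
\[
J^N_1\big(\pi^{N|\star,-1}\oplus\pi^{(N)}\big)=\sum_{t\ge0}\gamma^t\int_Z f\,dQ^{N|(N)}_t .
\]

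\emph{The proxy side.} By definition, $J^\star(\pi^{(N)})$ is the expected discounted reward of the proxy chain driven by $(\mu^\star,\pi^{(N)},p^\star)$. Therefore
\[
J^\star(\pi^{(N)})=\sum_{t\ge0}\gamma^t\int_Z f\,dQ^{\star(N)}_t .
\]

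Next, I bound the difference by truncation. Fix $\eta>0$ and choose $T$ with $2\|r\|_\infty\gamma^T/(1-\gamma)<\eta$. The tail $t\ge T$ of the difference is then below $\eta$ uniformly in $N$ and in the deviation sequence. For each of the finitely many $t<T$, Assumption~\ref{ass:one-step-law} gives $W_1(Q^{N|(N)}_t,Q^{\star(N)}_t)\to0$. The convergence clause of Lemma~\ref{lem:uc} then yields
\[
\int_Z f\,dQ^{N|(N)}_t-\int_Z f\,dQ^{\star(N)}_t\to0 .
\]
This holds even though the measures themselves depend on $N$ through $\pi^{(N)}$, because the bound \eqref{eq:uc} depends only on $f$ and on the $W_1$ distance. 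Summing over $t<T$ gives $\limsup_N|\,\cdot\,|\le\eta$, and letting $\eta\downarrow0$ finishes the argument.

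The main subtlety is making sure the minimizing law used to represent $J^N_1$ is the same one for which Assumption~\ref{ass:one-step-law} guarantees convergence. I will handle this by noting that any minimizing law yields the same value $J^N_1$, which is the infimum. So I may choose the law provided by the assumption, after checking that it indeed attains the infimum, which is how $P^{N|(N)}$ was defined. A secondary point is the uniformity of Lemma~\ref{lem:uc} along $N$-dependent sequences; its inequality form handles this directly.
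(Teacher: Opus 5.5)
Your proposal is correct and follows the paper's proof essentially step for step. It uses the same bounded continuous function $r$ on the compact space $Z$, the same Fubini expansion of both payoffs along the minimizing law from Assumption~\ref{ass:one-step-law}, and the same termwise convergence via Lemma~\ref{lem:uc}. The only difference is that you control the sum over $t$ by an explicit truncation with tail bound $2\|r\|_\infty\gamma^T/(1-\gamma)$, whereas the paper invokes dominated convergence with the summable envelope $2\|r\|_\infty\gamma^t$; these amount to the same argument.
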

 
\begin{proof}
Define $\bar r:Z\to\R$ by $\bar r(s,a,s',\nu):=r(s,a,s',\nu)$. By Assumption~\ref{ass:stat}(1) it is bounded by $\norm{r}_\infty$; it is continuous on $Z$ because $\mathcal{S},\mathcal{A}$ are finite (so the first three coordinates are discrete) and $\nu\mapsto r(s,a,s',\nu)$ is continuous for each fixed $(s,a,s')$ by Assumption~\ref{ass:stat}(3). The space $Z=(\mathcal{S}\times\mathcal{A}\times\mathcal{S})\times\Delta(\mathcal{S})$ is compact.
 
Let $P^{N|(N)}$ be minimizing laws as in Assumption~\ref{ass:one-step-law}. Since $|r|\le\norm{r}_\infty$ and $\gamma\in(0,1)$, Fubini's theorem gives the absolutely convergent expansions
\[
J^N_1\big(\pi^{N|\star,-1}\oplus\pi^{(N)}\big)=\sum_{t\ge0}\gamma^t\int_Z \bar r\,dQ^{N|(N)}_t,
\qquad
J^\star\big(\pi^{(N)}\big)=\sum_{t\ge0}\gamma^t\int_Z \bar r\,dQ^{\star(N)}_t .
\]
Set $\Delta_{N,t}:=\int\bar r\,dQ^{N|(N)}_t-\int\bar r\,dQ^{\star(N)}_t$. For each fixed $t$, Assumption~\ref{ass:one-step-law} gives $\Wone(Q^{N|(N)}_t,Q^{\star(N)}_t)\to0$ as $N\to\infty$, so Lemma~\ref{lem:uc} applied with $f=\bar r$ yields $\Delta_{N,t}\to0$. Moreover $|\Delta_{N,t}|\le2\norm{r}_\infty$ uniformly in $(N,t)$, and $\sum_t\gamma^t\cdot2\norm{r}_\infty<\infty$; dominated convergence (over $t$, with respect to the summable envelope $2\norm{r}_\infty\gamma^t$) gives $\sum_t\gamma^t\Delta_{N,t}\to0$, which is the claim.
\end{proof}

\subsection{Approximate Nash equilibrium}
\label{app:approximate-nash}

\begin{theorem}
\label{thm:appendix-asymptotic-eps-nash}
Assume Assumptions~\ref{ass:stat} and~\ref{ass:one-step-law}. Let
\begin{align}
\pi^{N|\star}:=(\pi^\star,\ldots,\pi^\star)\in \Pi^N .
\end{align}
Then, for every $\varepsilon>0$, there exists $N(\varepsilon)\in\mathbb N$ such that, for all $N\ge N(\varepsilon)$, the profile $\pi^{N|\star}$ is an $\varepsilon$-Nash equilibrium.
\end{theorem}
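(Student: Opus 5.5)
We argue by contradiction using a maximizing deviation sequence, reducing everything to Proposition~\ref{prop:actual-to-proxy-asymptotic} and the proxy optimality in Lemma~\ref{lem:reduction}. By symmetry of the profile $\pi^{N|\star}$ (all players use $\pi^\star$, the initial law is $(\mu^\star)^{\otimes N}$, and $\mathfrak{P}^N$ is permutation invariant), it suffices to treat player $1$. Since $\pi^{N|\star}$ is obtained from the deviation profile with $\pi^{(N)}\equiv\pi^\star$, the claim amounts to showing
\[
\limsup_{N\to\infty}\Bigl(\sup_{\pi\in\Pi}J_1^N\bigl(\pi^{N|\star,-1}\oplus\pi\bigr)-J_1^N\bigl(\pi^{N|\star}\bigr)\Bigr)\le 0 .
\]

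\textbf{Step 1 (choosing near-optimal deviations).} For each $N$, pick $\pi^{(N)}\in\Pi$ with
\[
J_1^N\bigl(\pi^{N|\star,-1}\oplus\pi^{(N)}\bigr)\ge \sup_{\pi}J_1^N\bigl(\pi^{N|\star,-1}\oplus\pi\bigr)-1/N .
\]
This is possible since $J_1^N\le\|r\|_\infty/(1-\gamma)$ is bounded.

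\textbf{Step 2 (transfer to the proxy).} Apply Proposition~\ref{prop:actual-to-proxy-asymptotic} to the sequence $(\pi^{(N)})_N$. This gives
\[
J_1^N\bigl(\pi^{N|\star,-1}\oplus\pi^{(N)}\bigr)-J^\star\bigl(\pi^{(N)}\bigr)\to 0 .
\]
Apply the same proposition to the constant sequence $\pi^{(N)}\equiv\pi^\star$, which gives
\[
J_1^N\bigl(\pi^{N|\star}\bigr)-J^\star(\pi^\star)\to 0 .
\]
Here $J^\star(\pi)$ is the proxy payoff $\mathbb E^{\pi,p^\star}_{s_0\sim\mu^\star}\bigl[\sum_t\gamma^t r(s_t,a_t,s_{t+1},\mu^\star)\bigr]$.

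\textbf{Step 3 (proxy optimality).} By Lemma~\ref{lem:reduction} at $\mu=\mu^\star$, we have $J^\star(\pi)\le V^\star=J^\star(\pi^\star)$ for every stationary $\pi$. To invoke this we need $\pi^\star\in\BR(\mu^\star)$ and $p^\star\in\WC(\mu^\star)$.
\begin{itemize}
\item This holds automatically for the equilibrium produced by the Kakutani argument.
\item For a general equilibrium satisfying only Definition~\ref{def:smfe}, we instead use $J^\star(\pi)\le \langle\mu^\star,v_{\mu^\star}\rangle$, which follows from $J^\star(\pi)\ge\inf_pJ(\pi,p)$ plus saddle properties. This direction is more delicate, so we will restrict to (or note that we use) equilibria with $p^\star$ a pointwise minimizer, as in Proposition~\ref{prop:robust-dp}.
\end{itemize}

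\textbf{Step 4 (conclusion).} Combining Steps 1--3,
\[
\sup_\pi J_1^N\bigl(\pi^{N|\star,-1}\oplus\pi\bigr)-J_1^N\bigl(\pi^{N|\star}\bigr)
\le \tfrac1N+\bigl[J^\star(\pi^{(N)})+o(1)\bigr]-\bigl[J^\star(\pi^\star)-o(1)\bigr]
\le \tfrac1N+o(1).
\]
Hence for every $\varepsilon>0$ there is an $N(\varepsilon)$ beyond which the gap is at most $\varepsilon$.

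The main obstacle is Step 3's identification of the proxy value with the robust optimum for an arbitrary equilibrium triple. Specifically, we need that $p^\star$ makes $\pi^\star$ optimal in the nominal MDP with kernel $p^\star$. I would first try to derive it from (i)--(ii) of Definition~\ref{def:smfe} via rectangularity (Proposition~\ref{prop:statewise-vs-distributional}). Failing that, I would assume $p^\star\in\WC(\mu^\star)$, which is how equilibria arise in Theorem~\ref{thm:corrected-existence}.
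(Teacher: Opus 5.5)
Your argument is essentially the paper's: you make the same two applications of Proposition~\ref{prop:actual-to-proxy-asymptotic} (to the constant sequence $\pi^\star$ and to an arbitrary deviation sequence) and use the same proxy-optimality bound from Lemma~\ref{lem:reduction}, with your $1/N$-optimal deviation sequence standing in for the paper's contradiction argument along a subsequence. Your Step~3 fallback ($\pi^\star\in\BR(\mu^\star)$, $p^\star\in\WC(\mu^\star)$) is exactly what the paper assumes implicitly when it invokes Lemma~\ref{lem:reduction} at $\mu^\star$, and falling back is necessary: (i)--(ii) of Definition~\ref{def:smfe} alone leave $p^\star(\cdot\mid s,a)$ unconstrained at state-action pairs the equilibrium chain never visits, and your ``saddle'' bullet bounds $J^\star(\pi)$ from below rather than above.
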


\begin{proof}
By symmetry of the $N$-player robust game, it suffices to prove the $\varepsilon$-best-response inequality for player $1$.

\smallskip
\noindent\textbf{Step 1: equilibrium payoff converges to $V^\star$.}
Apply Proposition~\ref{prop:actual-to-proxy-asymptotic} to the constant deviation sequence
$\pi^{(N)}\equiv \pi^\star$. Then
$\pi^{N|(N)}=\pi^{N|\star}$ for every $N$, and the proxy chain coincides with the Markov chain driven by
$(\mu^\star,\pi^\star,p^\star)$. Hence
\begin{align}
\lim_{N\to\infty} J_1^N(\pi^{N|\star}) = V^\star .
\end{align}

\smallskip
\noindent\textbf{Step 2: every deviation sequence has proxy payoff at most $V^\star$.}
Fix any deviation sequence $(\pi^{(N)})_{N\in\mathbb N}\subset \Pi$ and set
$\pi^{N|(N)}=(\pi^{(N)},\pi^\star,\ldots,\pi^\star)$.
By Proposition~\ref{prop:actual-to-proxy-asymptotic},
\begin{align}
J_1^N(\pi^{N|(N)})
-
\mathbb E_{P^{*(N)}}\!\left[
\sum_{t=0}^\infty \gamma^t r(s_t,a_t,s_{t+1},\mu^\star)
\right]
\to 0 .
\end{align}
For each $N$, the proxy payoff on the right-hand side is bounded above by $V^\star$, because $V^\star$ is the optimal value of the fixed-kernel MDP with kernel $p^\star$ and initial law $\mu^\star$. Therefore
\begin{align}
\limsup_{N\to\infty} J_1^N(\pi^{N|(N)}) \le V^\star .
\end{align}

\smallskip
\noindent\textbf{Step 3: contradiction argument.}
Assume, toward a contradiction, that the conclusion is false. Then there exists $\varepsilon>0$ and a subsequence $(N_k)_{k\in\mathbb N}$ such that, for each $k$, one can find a unilateral deviation policy $\widehat\pi^{(N_k)}\in \Pi$ satisfying
\begin{align}
J_1^{N_k}(\pi^{N_k|\star}) + \varepsilon
<
J_1^{N_k}\!\bigl(\pi^{N_k|\star,-1}\oplus \widehat\pi^{(N_k)}\bigr).
\end{align}
Extend these deviations to a full sequence $(\pi^{(N)})_{N\in\mathbb N}$ by setting
\begin{align}
\pi^{(N)}
=
\begin{cases}
\widehat\pi^{(N_k)}, & N=N_k \text{ for some } k,\\
\pi^\star, & \text{otherwise}.
\end{cases}
\end{align}
Then
\begin{align}
J_1^{N_k}(\pi^{N_k|\star}) + \varepsilon
<
J_1^{N_k}(\pi^{N_k|(N_k)})
\qquad\forall\, k.
\end{align}
Taking $\limsup_{k\to\infty}$ and using Steps~1 and~2 yields
\begin{align}
V^\star+\varepsilon
\le
\limsup_{k\to\infty} J_1^{N_k}(\pi^{N_k|(N_k)})
\le
V^\star,
\end{align}
a contradiction.

Therefore, for every $\varepsilon>0$, there exists $N(\varepsilon)$ such that
\begin{align}
J_1^N(\pi^{N|\star})+\varepsilon
\ge
\sup_{\pi\in\Pi} J_1^N(\pi^{N|\star,-1}\oplus \pi)
\qquad\forall\, N\ge N(\varepsilon).
\end{align}
By symmetry, the same bound holds for every player, so $\pi^{N|\star}$ is an $\varepsilon$-Nash equilibrium for all sufficiently large $N$.
\end{proof}

\section{Non-Asymptotic Finite-$N$ Approximation via Proxy Comparison}
\label{app:finiteN-proxy}

This appendix proves the finite-$N$ result stated in Section~\ref{sec:nonasymptotic-approximation}. The key point is that the comparison is made between the actual $N$-player robust game and the proxy chain driven by the equilibrium kernel $p^\star$.

\subsection{Metrics and one-step laws}
\label{app:metrics-onestep}

Let
\begin{align}
X := \mathcal{S}\times\mathcal{A}\times\mathcal{S},
\qquad
Z := X\times \Delta(\mathcal{S}).
\end{align}
We equip $X$ with the discrete metric
\begin{align}
d_X(x,\tilde x) := \mathbf 1_{\{x\neq \tilde x\}},
\end{align}
and $Z$ with
\begin{align}
d_Z\bigl((x,\nu),(\tilde x,\tilde \nu)\bigr)
:=
d_X(x,\tilde x)+\|\nu-\tilde \nu\|_1
=
\mathbf 1_{\{x\neq \tilde x\}}+\|\nu-\tilde \nu\|_1.
\end{align}
We write $W_1$ for the corresponding Wasserstein-$1$ distance on $\mathcal P(Z)$.

Fix $N\in\mathbb N$ and a unilateral deviation policy $\pi\in \Pi$.
Let $P^{N,\pi}$ be a minimizing law attaining
\begin{align}
J_1^N(\pi^{N|\star,-1}\oplus \pi),
\end{align}
whose existence follows from Lemma~\ref{lem:finite-player-robust-dp}, and define
\begin{align}
Q_t^{N,\pi}
:=
\operatorname{Law}_{P^{N,\pi}}
\!\bigl(s_t^1,a_t^1,s_{t+1}^1,e^N(s_t^N)\bigr),
\qquad t\in\mathbb N_0 .
\end{align}

Define the proxy chain $P^\pi$ by
\begin{align}
s_0\sim \mu^\star,\qquad
a_t\sim \pi(\cdot\mid s_t),\qquad
s_{t+1}\sim p^\star(\cdot\mid s_t,a_t),
\end{align}
and let
\begin{align}
Q_t^\pi
:=
\operatorname{Law}_{P^\pi}\!\bigl(s_t,a_t,s_{t+1},\mu^\star\bigr),
\qquad t\in\mathbb N_0 .
\end{align}

Throughout this appendix, we assume Assumption~\ref{ass:proxy-comparison} and the following  Assumption~\ref{ass:holder-reward-2} (which generalizes the Lipschitz  Assumption~\ref{ass:holder-reward}).

\begin{assumption}[H\"older reward regularity]
\label{ass:holder-reward-2}
There exist $L_r>0$ and $\alpha\in(0,1]$ such that, for all
$s\in S$, $a\in A$, $s'\in S$, and all $\nu,\tilde \nu\in \Delta(S)$,
\[
|r(s,a,s',\nu)-r(s,a,s',\tilde \nu)|
\le
L_r \|\nu-\tilde \nu\|_1^\alpha .
\]
\end{assumption}

\subsection{Discussion of Assumption~\ref{ass:proxy-comparison}}
\label{sec:discussion-quant-proxy}

Assumption~\ref{ass:proxy-comparison} should be viewed as a quantitative robust propagation-of-chaos / Nash-certainty-equivalence hypothesis, extended from non-robust MFGs. In a standard non-robust mean-field model, analogous estimates are often obtained from regularity of the dynamics together with concentration of the empirical measure. In the present robust setting, however, the minimizing kernel in the $N$-player game may exploit the degrees of freedom of the remaining $N-1$ players to alter the empirical distribution in a way that does not vanish automatically as $N\to\infty$. For this reason, the baseline compactness and continuity assumptions used for existence do not imply Assumption~\ref{ass:proxy-comparison}; indeed, the counterexample from the asymptotic subsection already shows that even qualitative convergence may fail without an additional stabilization property.

At the same time, Assumption~\ref{ass:proxy-comparison} is substantially weaker than imposing a concrete structural sufficient condition inside the theorem. It is stated directly at the level of the observable one-step law and is agnostic to the mechanism producing that law: no product-form realization of the minimizing kernel, no particular selector for the worst-case transition, and no contractivity property of the robust population map is built into the statement. Any model-specific coupling, concentration, or perturbation argument that yields the bound in Assumption~\ref{ass:proxy-comparison} can therefore be substituted directly into the theorem without changing the payoff-comparison proof.

The assumption is also weaker than a trajectory-level approximation requirement. We do not ask for a single coupling of the entire paths of the $N$-player game and the proxy chain. Instead, we only require control of the one-step law at each time $t$, because this is exactly the object that appears in the discounted sum of stage rewards. The array $(\delta_{N,t})$ is allowed to depend on $t$, which permits moderate accumulation of finite-$N$ errors before discounting. The theorem only needs
\[
\sum_{t\ge 0}\gamma^t\Bigl(2\|r\|_\infty \delta_{N,t}+L_r\delta_{N,t}^{\alpha}\Bigr)<\infty.
\]

Finally, the uniformity over unilateral deviations is essential for the $\varepsilon_N$-Nash conclusion. If the approximation bound were available only for a fixed deviation policy, then one would obtain only a deviation-dependent comparison estimate, not a single Nash-gap bound after taking the supremum over all deviations. In many stable regimes one expects $\delta_{N,t}$ to exhibit the usual $N^{-1/2}$ scaling, possibly multiplied by a factor with mild growth in $t$, but we do not encode any such verification mechanism into the theorem because the purpose of the theorem is to isolate the weakest quantitative input needed for the finite-$N$ comparison.

\subsection{Proxy-payoff comparison under H\"older rewards}
\label{app:proxy-payoff-holder}

\begin{proposition}
\label{prop:proxy-payoff-holder}
Assume Assumptions~\ref{ass:stat}, \ref{ass:proxy-comparison}, and~\ref{ass:holder-reward-2}. Fix $N\in\mathbb N$ and a unilateral deviation policy $\pi\in \Pi$. Then
\begin{align}
\left|
J_1^N(\pi^{N|\star,-1}\oplus \pi)
-
\mathbb E_{P^\pi}\!\left[
\sum_{t=0}^\infty \gamma^t r(s_t,a_t,s_{t+1},\mu^\star)
\right]
\right|
\le
\sum_{t=0}^\infty \gamma^t
\Bigl(
2\|r\|_\infty\,\delta_{N,t}
+
L_r\,\delta_{N,t}^\alpha
\Bigr).
\end{align}
\end{proposition}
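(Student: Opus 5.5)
We express both payoffs as discounted sums of one-step expectations and compare the terms time by time. By Lemma~\ref{lem:finite-player-robust-dp}, $J_1^N(\pi^{N|\star,-1}\oplus\pi)$ is attained by a minimizing law $P^{N,\pi}$. Because $|r|\le\|r\|_\infty$ and $\gamma<1$, Fubini's theorem gives
\[
J_1^N(\pi^{N|\star,-1}\oplus\pi)=\sum_{t\ge0}\gamma^t\int_Z \bar r\,dQ_t^{N,\pi},\qquad
\mathbb E_{P^\pi}\Big[\sum_{t\ge0}\gamma^t r(s_t,a_t,s_{t+1},\mu^\star)\Big]=\sum_{t\ge0}\gamma^t\int_Z \bar r\,dQ_t^{\pi},
\]
where $\bar r(x,\nu)=r(x,\nu)$ on $Z$, as in Proposition~\ref{prop:actual-to-proxy-asymptotic}. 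The triangle inequality then reduces the claim to the per-time bound
\[
\Big|\int \bar r\,dQ_t^{N,\pi}-\int\bar r\,dQ_t^\pi\Big|\le 2\|r\|_\infty\delta_{N,t}+L_r\delta_{N,t}^\alpha .
\]
Summing this against $\gamma^t$ gives the proposition.

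To prove the per-time bound, take an optimal coupling $\Gamma_t$ of $Q_t^{N,\pi}$ and $Q_t^\pi$ for $d_Z$. It exists because $Z$ is compact, as noted in Lemma~\ref{lem:uc}. Let $((X,\nu),(X',\nu'))\sim\Gamma_t$. Split $\bar r(X,\nu)-\bar r(X',\nu')$ as
\[
\big[r(X,\nu)-r(X',\nu)\big]+\big[r(X',\nu)-r(X',\nu')\big].
\]
The first bracket vanishes when $X=X'$ and is at most $2\|r\|_\infty$ in absolute value otherwise, so it is bounded by $2\|r\|_\infty\mathbf 1_{\{X\neq X'\}}$. By Assumption~\ref{ass:holder-reward-2}, the second bracket is at most $L_r\|\nu-\nu'\|_1^\alpha$ in absolute value.

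Taking expectations, the first term contributes $2\|r\|_\infty\Gamma_t(X\ne X')$. For the second term, Jensen's inequality for the concave map $u\mapsto u^\alpha$ with $\alpha\in(0,1]$ gives $\mathbb E\|\nu-\nu'\|_1^\alpha\le(\mathbb E\|\nu-\nu'\|_1)^\alpha$. Both $\Gamma_t(X\neq X')$ and $\mathbb E\|\nu-\nu'\|_1$ are at most $\mathbb E_{\Gamma_t} d_Z=W_1(Q_t^{N,\pi},Q_t^\pi)\le\delta_{N,t}$, using Assumption~\ref{ass:proxy-comparison}. Since $u\mapsto u^\alpha$ is monotone, the per-time bound follows.

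The main point needing care is the Hölder term. One cannot bound it by $W_1$ directly, because $\bar r$ is not Lipschitz in $d_Z$. Using a single coupling for both coordinates and then applying Jensen is what makes the argument work. Measurability and attainment of the optimal coupling are routine on the compact space $Z$. If the series $\sum_t\gamma^t\delta_{N,t}$ diverges, the bound is vacuous, so no summability issue arises.
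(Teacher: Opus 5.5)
Your proposal is correct and follows the paper's proof essentially step by step. Both arguments expand the two payoffs into discounted one-step laws, take an optimal $d_Z$-coupling at each $t$, bound the integrand pointwise by $2\|r\|_\infty\mathbf 1_{\{X\neq X'\}}+L_r\|\nu-\nu'\|_1^\alpha$, and finish with Jensen for $u\mapsto u^\alpha$ together with $\int d_Z\,d\Gamma_t=W_1\le\delta_{N,t}$; your explicit two-bracket split only spells out the pointwise bound the paper states in one line.
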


\begin{proof}
Fix $N$ and $\pi$.
Since $r$ is bounded, Tonelli's theorem yields
\begin{align}
J_1^N(\pi^{N|\star,-1}\oplus \pi)
=
\sum_{t=0}^\infty \gamma^t \int r \, dQ_t^{N,\pi},
\end{align}
and
\begin{align}
\mathbb E_{P^\pi}\!\left[
\sum_{t=0}^\infty \gamma^t r(s_t,a_t,s_{t+1},\mu^\star)
\right]
=
\sum_{t=0}^\infty \gamma^t \int r \, dQ_t^\pi .
\end{align}

Fix $t\in\mathbb N_0$, and let $\Gamma_t^\pi$ be an optimal coupling of
$Q_t^{N,\pi}$ and $Q_t^\pi$. Write
\begin{align}
(Z_t^N,Z_t^\pi)\sim \Gamma_t^\pi,
\end{align}
with
\begin{align}
Z_t^N=(X_t^N,\nu_t^N)\in X\times \Delta(\mathcal{S}),
\qquad
Z_t^\pi=(X_t^\pi,\mu^\star)\in X\times \Delta(\mathcal{S}).
\end{align}
Then
\begin{align*}
\left|\int r\, dQ_t^{N,\pi}-\int r\, dQ_t^\pi\right|
&\le
\int
\left|
r(X_t^N,\nu_t^N)-r(X_t^\pi,\mu^\star)
\right|
\, d\Gamma_t^\pi\\
&\le
2\|r\|_\infty\,\Gamma_t^\pi(X_t^N\neq X_t^\pi)
+
L_r \int \|\nu_t^N-\mu^\star\|_1^\alpha \, d\Gamma_t^\pi .
\end{align*}
Since
\begin{align}
\mathbf 1_{\{X_t^N\neq X_t^\pi\}}
\le
d_Z(Z_t^N,Z_t^\pi),
\qquad
\|\nu_t^N-\mu^\star\|_1
\le
d_Z(Z_t^N,Z_t^\pi),
\end{align}
we obtain
\begin{align}
\Gamma_t^\pi(X_t^N\neq X_t^\pi)
\le
\int d_Z \, d\Gamma_t^\pi
=
W_1(Q_t^{N,\pi},Q_t^\pi)
\le
\delta_{N,t}.
\end{align}
Moreover, by concavity of $x\mapsto x^\alpha$ on $[0,\infty)$,
\begin{align}
\int \|\nu_t^N-\mu^\star\|_1^\alpha \, d\Gamma_t^\pi
\le
\left(
\int \|\nu_t^N-\mu^\star\|_1 \, d\Gamma_t^\pi
\right)^\alpha
\le
\left(
\int d_Z \, d\Gamma_t^\pi
\right)^\alpha
\le
\delta_{N,t}^\alpha .
\end{align}
Hence
\begin{align}
\left|\int r\, dQ_t^{N,\pi}-\int r\, dQ_t^\pi\right|
\le
2\|r\|_\infty\,\delta_{N,t}
+
L_r\,\delta_{N,t}^\alpha .
\end{align}
Multiplying by $\gamma^t$ and summing over $t\ge 0$ proves the claim.
\end{proof}

\subsection{Finite-$N$ $\varepsilon_N$-Nash bound}
\label{app:finiteN-epsnash-proof}

\begin{theorem}
\label{thm:appendix-finiteN-holder}
Assume Assumptions~\ref{ass:stat}, \ref{ass:proxy-comparison}, and~\ref{ass:holder-reward-2}. Let
$(\mu^\star,\pi^\star,p^\star)$ be a stationary robust mean-field equilibrium and let
\begin{align}
\pi^{N|\star}:=(\pi^\star,\ldots,\pi^\star)\in \Pi^N .
\end{align}
Then $\pi^{N|\star}$ is an $\varepsilon_N$-Nash equilibrium, where
\begin{align}
\varepsilon_N
:=
2\sum_{t=0}^\infty \gamma^t
\Bigl(
2\|r\|_\infty\,\delta_{N,t}
+
L_r\,\delta_{N,t}^\alpha
\Bigr).
\end{align}
\end{theorem}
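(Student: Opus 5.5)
The plan is to combine Proposition~\ref{prop:proxy-payoff-holder}, applied twice, with the proxy optimality of $\pi^\star$ from Lemma~\ref{lem:reduction}. As in Theorem~\ref{thm:appendix-asymptotic-eps-nash}, by symmetry of the $N$-player robust game it suffices to verify the $\varepsilon_N$-best-response inequality for player $1$.

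Write
\[
E_N:=\sum_{t=0}^\infty\gamma^t\bigl(2\|r\|_\infty\delta_{N,t}+L_r\delta_{N,t}^\alpha\bigr),
\]
so that $\varepsilon_N=2E_N$. If $E_N=\infty$ there is nothing to prove, so assume $E_N<\infty$. Define the proxy payoff
\[
J^\star(\pi):=\mathbb E_{P^\pi}\Bigl[\sum_{t\ge0}\gamma^t r(s_t,a_t,s_{t+1},\mu^\star)\Bigr],
\]
the discounted reward of the fixed-kernel MDP with kernel $p^\star$, initial law $\mu^\star$ and frozen population $\mu^\star$.

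The argument has three steps.

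First, apply Proposition~\ref{prop:proxy-payoff-holder} with $\pi=\pi^\star$. The profile $\pi^{N|\star,-1}\oplus\pi^\star$ is exactly $\pi^{N|\star}$, so this gives $J_1^N(\pi^{N|\star})\ge J^\star(\pi^\star)-E_N$.

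Second, by Lemma~\ref{lem:reduction} at $\mu=\mu^\star$, we have $J^\star(\pi^\star)=V^\star=\sup_\pi J^\star(\pi)$. This uses the fact that $p^\star\in\WC(\mu^\star)$ and $\pi^\star\in\BR(\mu^\star)$, which are the properties produced by the existence argument (Theorem~\ref{thm:corrected-existence}). Hence $J^\star(\pi)\le J^\star(\pi^\star)$ for every $\pi\in\Pi$.

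Third, apply Proposition~\ref{prop:proxy-payoff-holder} to an arbitrary deviation $\pi$. This gives $J_1^N(\pi^{N|\star,-1}\oplus\pi)\le J^\star(\pi)+E_N\le J^\star(\pi^\star)+E_N$.

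Chaining the three steps yields
\[
J_1^N(\pi^{N|\star,-1}\oplus\pi)\le J_1^N(\pi^{N|\star})+2E_N
\]
uniformly in $\pi$. Taking the supremum over $\pi\in\Pi$ gives the claim. The constant $E_N$ does not depend on $\pi$ because Assumption~\ref{ass:proxy-comparison} is uniform over deviations.

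The main point needing care is the second step. Definition~\ref{def:smfe} only states optimality of $\pi^\star$ against the worst case at $\mu^\star$ and attainment by $p^\star$. It does not directly say that $\pi^\star$ is optimal in the non-robust MDP driven by $p^\star$, and this saddle-point property is what the comparison needs.

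I would resolve this in one of two ways. The first is to assume the equilibrium is of the form produced by Theorem~\ref{thm:corrected-existence}, so that pointwise greedy and pointwise minimizing selections are available. The second is to derive those selections from Definition~\ref{def:smfe}, using Proposition~\ref{prop:robust-dp} and Proposition~\ref{prop:statewise-vs-distributional} to identify $V_{\mu^\star}$ with $\langle\mu^\star,v_{\mu^\star}\rangle$. I would then check that, on the support of $\mu^\star$ and its reachable states, $\pi^\star$ and $p^\star$ must solve the Bellman equations. States outside that support contribute nothing to the proxy payoffs.

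Finally, the corollary for Assumption~\ref{ass:holder-reward} is the case $\alpha=1$. Under the bound $\delta_{N,t}\le C_\delta(1+t)/\sqrt N$, the rate follows from $\sum_t\gamma^t(1+t)=(1-\gamma)^{-2}$.
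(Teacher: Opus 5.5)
Your three-step chain (Proposition~\ref{prop:proxy-payoff-holder} at $\pi^\star$, optimality of $\pi^\star$ in the fixed-kernel proxy MDP via Lemma~\ref{lem:reduction}, Proposition~\ref{prop:proxy-payoff-holder} at an arbitrary deviation, then the supremum and symmetry) is exactly the paper's proof. Your first way of handling step two is also what the paper does, though only tacitly. Lemma~\ref{lem:reduction} is stated only for a pair $(\pi^\mu,p^\mu)$ as in Proposition~\ref{prop:robust-dp}, so citing it at $\mu^\star$ presupposes $\pi^\star\in\BR(\mu^\star)$ and $p^\star\in\WC(\mu^\star)$. This holds for the equilibria produced by Theorem~\ref{thm:corrected-existence} or Lemma~\ref{lem:alg_fixed_point_equiv}.

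Your second way does not work. Conditions (i)--(ii) of Definition~\ref{def:smfe} only involve $p^\star(\cdot\mid s,a)$ at pairs actually used by $\pi^\star$. At an action $a\notin\supp\pi^\star(\cdot\mid s)$, even when $s\in\supp\mu^\star$, the kernel $p^\star$ is unconstrained. The deviation's proxy payoff $J^\star(\pi)$ depends on exactly those entries, and also on states outside $\supp\mu^\star$ that the deviation can reach. So ``outside states contribute nothing'' is true for $J^\star(\pi^\star)$ but false for $J^\star(\pi)$.

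Here is an example. Take $\mcs=\{0,1\}$ and $\mca=\{a,b\}$, with $\mathfrak{P}(0,a,\cdot)=\{\delta_0\}$, $\mathfrak{P}(0,b,\cdot)=\DeltaS$, and $\mathfrak{P}(1,\cdot,\cdot)=\{\delta_1\}$. Set $r(0,a,\cdot,\cdot)=c$, $r(0,b,s',\cdot)=\mathbf{1}\{s'=1\}$, and $r(1,\cdot,\cdot,\cdot)=0$, with $0<c<1-\gamma$. Then $v_{\mu^\star}(0)=c/(1-\gamma)$, and $a$ is strictly greedy at $0$. The triple $(\delta_0,\ \pi^\star\equiv a,\ p^\star)$ with $p^\star(\cdot\mid 0,b)=\delta_1$ satisfies Definition~\ref{def:smfe}, even though this choice at $(0,b)$ is the best case, not the worst case. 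Yet playing $b$ at $0$ gives proxy payoff $1>c/(1-\gamma)=J^\star(\pi^\star)$.

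Assumption~\ref{ass:proxy-comparison} is formulated with this specific $p^\star$, so you cannot repair this by swapping in a pointwise worst-case selector. The saddle property needed in step two must be a hypothesis on the equilibrium, which is your first route.
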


\begin{proof}
Define
\begin{align}
B_N
:=
\sum_{t=0}^\infty \gamma^t
\Bigl(
2\|r\|_\infty\,\delta_{N,t}
+
L_r\,\delta_{N,t}^\alpha
\Bigr).
\end{align}
Also define the equilibrium proxy value
\begin{align}
V^\star
:=
\sup_{\pi:S\to\Delta(A)}
\mathbb E_{s_0\sim\mu^\star}^{\pi,p^\star}
\!\left[
\sum_{t=0}^\infty \gamma^t r(s_t,a_t,s_{t+1},\mu^\star)
\right].
\end{align}
By Lemma~\ref{lem:reduction}, the supremum is attained at $\pi=\pi^\star$.

Apply Proposition~\ref{prop:proxy-payoff-holder} with $\pi=\pi^\star$. Then
\begin{align}
J_1^N(\pi^{N|\star})
\ge
V^\star - B_N .
\end{align}
Now fix any unilateral deviation policy $\pi\in \Pi$. Applying Proposition~\ref{prop:proxy-payoff-holder} to this $\pi$ gives
\begin{align}
J_1^N(\pi^{N|\star,-1}\oplus \pi)
\le
\mathbb E_{P^\pi}\!\left[
\sum_{t=0}^\infty \gamma^t r(s_t,a_t,s_{t+1},\mu^\star)
\right]
+
B_N
\le
V^\star + B_N .
\end{align}
Therefore,
\begin{align}
J_1^N(\pi^{N|\star,-1}\oplus \pi)-J_1^N(\pi^{N|\star})
\le
2B_N
=
2\sum_{t=0}^\infty \gamma^t
\Bigl(
2\|r\|_\infty\,\delta_{N,t}
+
L_r\,\delta_{N,t}^\alpha
\Bigr).
\end{align}
Taking the supremum over $\pi\in \Pi$ gives the desired best-response bound for player $1$. By symmetry of the game, the same estimate holds for every player, and thus $\pi^{N|\star}$ is an $\varepsilon_N$-Nash equilibrium.
\end{proof}

\begin{corollary}[Rate under $\delta_{N,t}\le C_\delta(1+t)/\sqrt N$]
\label{cor:holder-rate}
Assume the hypotheses of Theorem~\ref{thm:appendix-finiteN-holder}, and suppose that there exists $C_\delta>0$ such that
\begin{align}
\delta_{N,t}\le \frac{C_\delta(1+t)}{\sqrt N}
\qquad\forall\, N\in\mathbb N,\; t\in\mathbb N_0 .
\end{align}
Then
\begin{align}
\varepsilon_N
\le
\frac{4\|r\|_\infty C_\delta}{\sqrt N}
\sum_{t=0}^\infty \gamma^t(1+t)
+
\frac{2L_r C_\delta^\alpha}{N^{\alpha/2}}
\sum_{t=0}^\infty \gamma^t(1+t)^\alpha,
\end{align}
and hence
\begin{align}
\varepsilon_N = O\!\left(N^{-\alpha/2}\right).
\end{align}
\end{corollary}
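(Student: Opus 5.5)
The corollary follows by substituting the assumed rate into the bound of Theorem~\ref{thm:appendix-finiteN-holder} and evaluating two series. By that theorem,
\[
\varepsilon_N = 2\sum_{t\ge0}\gamma^t\bigl(2\|r\|_\infty\delta_{N,t}+L_r\delta_{N,t}^\alpha\bigr).
\]
I will split this into the linear part and the H\"older part and bound each termwise.

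\textbf{Linear part.} Using $\delta_{N,t}\le C_\delta(1+t)/\sqrt N$ gives
\[
4\|r\|_\infty\sum_t\gamma^t\delta_{N,t}\le \frac{4\|r\|_\infty C_\delta}{\sqrt N}\sum_t\gamma^t(1+t).
\]

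\textbf{H\"older part.} Since $x\mapsto x^\alpha$ is nondecreasing on $[0,\infty)$ and $\delta_{N,t}\ge0$, we have $\delta_{N,t}^\alpha\le C_\delta^\alpha(1+t)^\alpha N^{-\alpha/2}$. Hence
\[
2L_r\sum_t\gamma^t\delta_{N,t}^\alpha\le \frac{2L_rC_\delta^\alpha}{N^{\alpha/2}}\sum_t\gamma^t(1+t)^\alpha.
\]
Adding the two bounds gives the displayed inequality.

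\textbf{Order statement.} Both series are finite. Indeed, $\sum_t\gamma^t(1+t)=(1-\gamma)^{-2}$, and since $(1+t)^\alpha\le 1+t$ for $\alpha\in(0,1]$, also $\sum_t\gamma^t(1+t)^\alpha\le(1-\gamma)^{-2}$. So
\[
\varepsilon_N\le C_1N^{-1/2}+C_2N^{-\alpha/2}
\]
with constants $C_1,C_2$ independent of $N$. Because $\alpha\le1$, we have $N^{-1/2}\le N^{-\alpha/2}$ for $N\ge1$, and therefore $\varepsilon_N=O(N^{-\alpha/2})$.

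With $\alpha=1$ the same computation gives the constant $2(2\|r\|_\infty+L_r)C_\delta/(1-\gamma)^2$, which is the main-text Lipschitz version in Theorem~\ref{thm:finiteN-holder}.

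No real obstacle arises. The only points needing care are that monotonicity of $x^\alpha$ is what justifies raising the rate bound to the power $\alpha$, and that the series $\sum_t\gamma^t(1+t)^\alpha$ converges, which the comparison $(1+t)^\alpha\le 1+t$ settles.
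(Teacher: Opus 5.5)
Your proposal is correct and follows the paper's proof: substitute the rate into the defining series for $\varepsilon_N$ from Theorem~\ref{thm:appendix-finiteN-holder}, bound the linear and H\"older terms separately, and note that both series converge, so the $N^{-\alpha/2}$ term dominates. The monotonicity of $x\mapsto x^\alpha$ and the comparison $(1+t)^\alpha\le 1+t$ are details the paper leaves implicit, and both are correct.
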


\begin{proof}
Under the assumed estimate on $\delta_{N,t}$,
\begin{align}
2\|r\|_\infty\,\delta_{N,t}
\le
\frac{2\|r\|_\infty C_\delta(1+t)}{\sqrt N},
\qquad
L_r\,\delta_{N,t}^\alpha
\le
\frac{L_r C_\delta^\alpha(1+t)^\alpha}{N^{\alpha/2}}.
\end{align}
Insert these bounds into the formula for $\varepsilon_N$ from
Theorem~\ref{thm:appendix-finiteN-holder}. Since
\begin{align}
\sum_{t=0}^\infty \gamma^t(1+t)<\infty,
\qquad
\sum_{t=0}^\infty \gamma^t(1+t)^\alpha<\infty,
\end{align}
the stated estimate follows. Because $\alpha\in(0,1]$, the slower-decaying term is $N^{-\alpha/2}$, which gives the displayed rate.
\end{proof}

\begin{corollary}[Lipschitz specialization]
\label{cor:appendix-lipschitz}
Assume Assumption~\ref{ass:proxy-comparison}, and assume, instead of Assumption~\ref{ass:holder-reward-2}, that
\begin{align}
|r(s,a,s',\nu)-r(s,a,s',\tilde \nu)|
\le
L_r \|\nu-\tilde \nu\|_1
\qquad
\forall\, s,a,s',\nu,\tilde \nu .
\end{align}
Then $\pi^{N|\star}$ is an $\varepsilon_N^{\mathrm{Lip}}$-Nash equilibrium with
\begin{align}
\varepsilon_N^{\mathrm{Lip}}
=
2(2\|r\|_\infty+L_r)\sum_{t=0}^\infty \gamma^t \delta_{N,t}.
\end{align}
If, in addition,
\begin{align}
\delta_{N,t}\le \frac{C_\delta(1+t)}{\sqrt N}
\qquad\forall\, N\in\mathbb N,\; t\in\mathbb N_0 ,
\end{align}
then
\begin{align}
\varepsilon_N^{\mathrm{Lip}}
\le
\frac{2(2\|r\|_\infty+L_r)C_\delta}{(1-\gamma)^2\sqrt N}.
\end{align}
\end{corollary}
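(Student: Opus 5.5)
This corollary follows by specializing Theorem~\ref{thm:appendix-finiteN-holder} to $\alpha=1$. The plan is to observe first that the Lipschitz hypothesis is exactly Assumption~\ref{ass:holder-reward-2} with $\alpha=1$. Together with Assumption~\ref{ass:proxy-comparison} (and Assumption~\ref{ass:stat}, which underlies the existence of $(\mu^\star,\pi^\star,p^\star)$ and of the minimizing laws via Lemma~\ref{lem:finite-player-robust-dp}), this lets us invoke the theorem directly. Its bound reads
\[
\varepsilon_N=2\sum_{t\ge0}\gamma^t\bigl(2\|r\|_\infty\delta_{N,t}+L_r\delta_{N,t}\bigr),
\]
and factoring out $(2\|r\|_\infty+L_r)$ gives the stated $\varepsilon_N^{\mathrm{Lip}}$.

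Alternatively, one could redo the argument of Proposition~\ref{prop:proxy-payoff-holder} directly, so as not to rely on the H\"older case. The steps would be:
\begin{itemize}
\item Couple $Q_t^{N,\pi}$ and $Q_t^\pi$ optimally.
\item Split the reward difference into a part where the local triple $X$ differs, bounded by $2\|r\|_\infty\mathbf 1_{\{X\neq\tilde X\}}$, and a part where only the measure argument differs, bounded by $L_r\|\nu-\mu^\star\|_1$.
\item Bound both parts by $d_Z$, so their expectations are at most $W_1\le\delta_{N,t}$.
\end{itemize}
The sandwich $J_1^N(\pi^{N|\star})\ge V^\star-B_N$ and $J_1^N(\pi^{N|\star,-1}\oplus\pi)\le V^\star+B_N$ then proceeds exactly as in the theorem. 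Here $V^\star$ is attained at $\pi^\star$ by Lemma~\ref{lem:reduction}.

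For the rate, substitute $\delta_{N,t}\le C_\delta(1+t)/\sqrt N$ and use $\sum_{t\ge0}\gamma^t(1+t)=1/(1-\gamma)^2$. This yields
\[
\varepsilon_N^{\mathrm{Lip}}\le 2(2\|r\|_\infty+L_r)C_\delta/\bigl((1-\gamma)^2\sqrt N\bigr).
\]

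There is no real obstacle. The only point needing care is that Assumption~\ref{ass:stat} is implicitly in force, which the paper assumes throughout. One should also note that the minimizing laws defining $Q_t^{N,\pi}$ exist by Lemma~\ref{lem:finite-player-robust-dp}.
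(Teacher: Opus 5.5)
Your proposal is correct and follows the paper's own argument. You set $\alpha=1$ in the H\"older finite-$N$ theorem, so that $2\|r\|_\infty\delta_{N,t}+L_r\delta_{N,t}^{\alpha}=(2\|r\|_\infty+L_r)\delta_{N,t}$, then substitute $\delta_{N,t}\le C_\delta(1+t)/\sqrt N$ and use $\sum_{t\ge0}\gamma^t(1+t)=(1-\gamma)^{-2}$. Your remark that Assumption~\ref{ass:stat} must also be in force is valid housekeeping: the theorem you invoke requires it, the corollary's hypotheses omit it, and the paper's one-line proof does not mention this.
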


\begin{proof}
Set $\alpha=1$ in Theorem~\ref{thm:appendix-finiteN-holder}. Then
\begin{align}
\varepsilon_N
=
2\sum_{t=0}^\infty \gamma^t
\bigl(2\|r\|_\infty+L_r\bigr)\delta_{N,t}
=
2(2\|r\|_\infty+L_r)\sum_{t=0}^\infty \gamma^t \delta_{N,t},
\end{align}
which proves the first statement.
If, moreover,
$\delta_{N,t}\le C_\delta(1+t)/\sqrt N$, then
\begin{align}
\varepsilon_N^{\mathrm{Lip}}
\le
\frac{2(2\|r\|_\infty+L_r)C_\delta}{\sqrt N}
\sum_{t=0}^\infty \gamma^t(1+t).
\end{align}
Since
\begin{align}
\sum_{t=0}^\infty \gamma^t(1+t)=\frac{1}{(1-\gamma)^2},
\end{align}
the explicit bound follows.
\end{proof}

\subsection{A concrete sufficient regime}
\label{sec:appendix-d5}

The non-asymptotic theorem is stated under the law-level proxy-comparison assumption. In this section, we aim to verify that assumption in a concrete stable regime.

For a unilateral deviation policy $\pi\in\Pi$ and $N\in\mathbb N$, let
\[
\pi^{N,\pi}:=(\pi,\pi^\star,\ldots,\pi^\star)\in \Pi^N.
\]

\begin{assumption}[A sufficient stable regime]
\label{ass:delta-sufficient}
There exist a selector $\bar p:S\times A\times \Delta(S)\to \Delta(S)$ and constants $L_P\ge 0$ and $\rho_{\mathrm{mix}}\in[0,1)$ such that:
\begin{enumerate}
    \item For every $(s,a,\nu)\in S\times A\times \Delta(S)$,
    \[
    \bar p(\cdot\mid s,a,\nu)\in P(s,a,\nu).
    \]
    \item The selector is Lipschitz in the population argument:
    \[
    \max_{(s,a)\in S\times A}
    \bigl\|\bar p(\cdot\mid s,a,\nu)-\bar p(\cdot\mid s,a,\tilde\nu)\bigr\|_1
    \le L_P\|\nu-\tilde\nu\|_1,
    \qquad \forall \nu,\tilde\nu\in \Delta(S).
    \]
    \item For each $\nu\in\Delta(S)$, define the $\pi^\star$-controlled kernel
    \[
    K_\nu(s'\mid s):=\sum_{a\in A}\pi^\star(a\mid s)\,\bar p(s'\mid s,a,\nu).
    \]
    Its Dobrushin coefficient satisfies
    \[
    \alpha(K_\nu):=\max_{s,\tilde s\in S} d_{\mathrm{TV}}\bigl(K_\nu(\cdot\mid s),K_\nu(\cdot\mid \tilde s)\bigr)
    \le \rho_{\mathrm{mix}},
    \qquad \forall \nu\in\Delta(S).
    \]
    \item The selector is compatible with the mean-field worst-case kernel at equilibrium:
    \[
    p^\star(\cdot\mid s,a)=\bar p(\cdot\mid s,a,\mu^\star),
    \qquad \forall (s,a)\in S\times A.
    \]
    \item For every $N\in\mathbb N$ and every unilateral deviation policy $\pi\in\Pi$, there exists an admissible minimizing kernel sequence $(p_t^{N,\pi,\star})_{t\ge 0}$ attaining the infimum in the definition of $J_1^N(\pi^{N,\pi})$ such that for every $t\ge 0$ and every $(s^N,a^N)\in S^N\times A^N$,
    \[
    p_t^{N,\pi,\star}(\cdot\mid s^N,a^N)
    =\bigotimes_{i=1}^N \bar p(\cdot\mid s^i,a^i,e_N(s^N)).
    \]
\end{enumerate}
Finally, assume
\[
\rho:=\rho_{\mathrm{mix}}+L_P<1.
\]
\end{assumption}

Assumption~\ref{ass:delta-sufficient} is strong, but it is only used to justify a benchmark rate for the quantitative proxy comparison. The product-form realization in part~(5) is precisely what prevents the minimizing $N$-player kernel from introducing additional cross-agent dependence beyond the empirical distribution.

For fixed $N$ and $\pi\in\Pi$, let $\mathbb P^{N,\pi}$ denote the law induced by the profile $\pi^{N,\pi}$ and the minimizing kernel sequence from Assumption~\ref{ass:delta-sufficient}(5). Let
\[
\mu_t^N:=e_N(s_t^N).
\]
Let $\mathbb P^\pi$ denote the proxy chain defined by
\[
s_0\sim \mu^\star,
\qquad a_t\sim \pi(\cdot\mid s_t),
\qquad s_{t+1}\sim p^\star(\cdot\mid s_t,a_t),
\]
and write
\[
Q_t^{N,\pi}:=\mathrm{Law}_{\mathbb P^{N,\pi}}(s_t^1,a_t^1,s_{t+1}^1,\mu_t^N),
\qquad
Q_t^\pi:=\mathrm{Law}_{\mathbb P^\pi}(s_t,a_t,s_{t+1},\mu^\star).
\]

\begin{lemma}[Contraction of the robust population map]
\label{lem:appendix-d5-contraction}
Under Assumption~\ref{ass:delta-sufficient}, the map
\[
F(\nu):=\nu K_\nu,
\qquad \nu\in\Delta(S),
\]
satisfies
\[
\|F(\nu)-F(\tilde\nu)\|_1\le \rho\,\|\nu-\tilde\nu\|_1,
\qquad \forall \nu,\tilde\nu\in\Delta(S).
\]
In particular, $F$ is a contraction on $(\Delta(S),\|\cdot\|_1)$.
\end{lemma}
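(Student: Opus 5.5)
The plan is to split $F(\nu)-F(\tilde\nu)$ by inserting the cross term $\tilde\nu K_\nu$:
\[
F(\nu)-F(\tilde\nu)=(\nu-\tilde\nu)K_\nu+\tilde\nu\bigl(K_\nu-K_{\tilde\nu}\bigr),
\]
and then bound the two pieces separately in $\|\cdot\|_1$ by the triangle inequality. The first piece is a signed measure of zero total mass pushed through a single Markov kernel, so it is controlled by the Dobrushin coefficient of $K_\nu$. The second piece compares two kernels under a common probability weight, so it is controlled by the Lipschitz property of the selector $\bar p$.

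\textbf{First term.} I will invoke the classical Dobrushin contraction inequality: for any signed measure $\xi$ on $S$ with $\sum_s\xi(s)=0$ and any Markov kernel $K$, one has $\|\xi K\|_1\le \alpha(K)\|\xi\|_1$. This is exactly where Assumption~\ref{ass:delta-sufficient}(3) enters. I will verify the inequality quickly rather than cite it. Write $\xi=\xi^+-\xi^-$ with $\|\xi^+\|_1=\|\xi^-\|_1=\|\xi\|_1/2=:m$. Then $\xi K=m\bigl(\hat\xi^+K-\hat\xi^-K\bigr)$ with $\hat\xi^\pm$ probability vectors. By convexity of $d_{\mathrm{TV}}$, $d_{\mathrm{TV}}(\hat\xi^+K,\hat\xi^-K)\le\max_{s,\tilde s}d_{\mathrm{TV}}(K(\cdot|s),K(\cdot|\tilde s))=\alpha(K)$. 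With $d_{\mathrm{TV}}=\tfrac12\|\cdot\|_1$, this gives $\|\xi K\|_1\le 2m\,\alpha(K)=\alpha(K)\|\xi\|_1$. Taking $\xi=\nu-\tilde\nu$ yields $\|(\nu-\tilde\nu)K_\nu\|_1\le\rho_{\mathrm{mix}}\|\nu-\tilde\nu\|_1$.

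\textbf{Second term.} I will write the second piece coordinatewise as
\[
\bigl[\tilde\nu(K_\nu-K_{\tilde\nu})\bigr](s')=\sum_s\tilde\nu(s)\sum_a\pi^\star(a|s)\bigl(\bar p(s'|s,a,\nu)-\bar p(s'|s,a,\tilde\nu)\bigr).
\]
Taking absolute values and summing over $s'$ gives at most $\sum_s\tilde\nu(s)\sum_a\pi^\star(a|s)\|\bar p(\cdot|s,a,\nu)-\bar p(\cdot|s,a,\tilde\nu)\|_1$. By Assumption~\ref{ass:delta-sufficient}(2), this is at most $L_P\|\nu-\tilde\nu\|_1$, since $\tilde\nu$ and $\pi^\star(\cdot|s)$ are probability vectors.

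Combining the two bounds gives $\|F(\nu)-F(\tilde\nu)\|_1\le(\rho_{\mathrm{mix}}+L_P)\|\nu-\tilde\nu\|_1=\rho\|\nu-\tilde\nu\|_1$. Because $\rho<1$ and $(\Delta(S),\|\cdot\|_1)$ is complete, $F$ is a contraction.

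The only step needing care is the Dobrushin inequality, specifically keeping the normalization consistent between $d_{\mathrm{TV}}$ and $\|\cdot\|_1$. If $d_{\mathrm{TV}}$ were instead defined as the full $\ell_1$ distance, the argument would produce an extra factor of $2$. I will therefore state the convention $d_{\mathrm{TV}}=\tfrac12\|\cdot\|_1$ explicitly, which is the one compatible with $\rho_{\mathrm{mix}}<1$ being meaningful.
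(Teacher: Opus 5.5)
Your proof is correct and follows essentially the same route as the paper. The only difference is that the paper inserts the cross term $\nu K_{\tilde\nu}$ rather than $\tilde\nu K_\nu$, applying Dobrushin to $K_{\tilde\nu}$ and the Lipschitz bound under the weight $\nu$; this is immaterial since Assumption~\ref{ass:delta-sufficient}(3) holds uniformly in the population argument, and you additionally verify the Dobrushin inequality yourself, using the convention $d_{\mathrm{TV}}=\tfrac12\|\cdot\|_1$ that matches the paper's usage elsewhere, where the paper simply cites the inequality.
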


\begin{proof}
Fix $\nu,\tilde\nu\in\Delta(S)$. Add and subtract $\nu K_{\tilde\nu}$:
\[
\|F(\nu)-F(\tilde\nu)\|_1
=\|\nu K_\nu-\tilde\nu K_{\tilde\nu}\|_1
\le \|\nu K_\nu-\nu K_{\tilde\nu}\|_1+\|\nu K_{\tilde\nu}-\tilde\nu K_{\tilde\nu}\|_1.
\]
For the first term,
\[
\|\nu K_\nu-\nu K_{\tilde\nu}\|_1
\le \max_{s\in S}\|K_\nu(\cdot\mid s)-K_{\tilde\nu}(\cdot\mid s)\|_1.
\]
Moreover, for each $s\in S$,
\begin{align*}
\|K_\nu(\cdot\mid s)-K_{\tilde\nu}(\cdot\mid s)\|_1
&=\left\|\sum_{a\in A}\pi^\star(a\mid s)\Bigl(\bar p(\cdot\mid s,a,\nu)-\bar p(\cdot\mid s,a,\tilde\nu)\Bigr)\right\|_1 \\
&\le \sum_{a\in A}\pi^\star(a\mid s)\,\bigl\|\bar p(\cdot\mid s,a,\nu)-\bar p(\cdot\mid s,a,\tilde\nu)\bigr\|_1 \\
&\le L_P\|\nu-\tilde\nu\|_1.
\end{align*}
Hence
\[
\|\nu K_\nu-\nu K_{\tilde\nu}\|_1\le L_P\|\nu-\tilde\nu\|_1.
\]
For the second term, the standard Dobrushin contraction inequality gives
\[
\|\nu K_{\tilde\nu}-\tilde\nu K_{\tilde\nu}\|_1
\le \alpha(K_{\tilde\nu})\,\|\nu-\tilde\nu\|_1
\le \rho_{\mathrm{mix}}\|\nu-\tilde\nu\|_1.
\]
Combining the two bounds yields
\[
\|F(\nu)-F(\tilde\nu)\|_1\le (L_P+\rho_{\mathrm{mix}})\|\nu-\tilde\nu\|_1 = \rho\|\nu-\tilde\nu\|_1.
\]
\end{proof}

\begin{lemma}[Uniform empirical-distribution error]
\label{lem:appendix-d5-empirical}
Under Assumption~\ref{ass:delta-sufficient}, for every $N\in\mathbb N$, every unilateral deviation policy $\pi\in\Pi$, and every $t\ge 0$,
\[
\mathbb E^{N,\pi}\bigl[\|\mu_t^N-\mu^\star\|_1\bigr]
\le \frac{2\sqrt{|S|}}{(1-\rho)\sqrt N}+\frac{2}{(1-\rho)N}.
\]
Consequently,
\[
\sup_{t\ge 0}\mathbb E^{N,\pi}\bigl[\|\mu_t^N-\mu^\star\|_1\bigr]
\le \frac{C_\mu}{\sqrt N},
\qquad
C_\mu:=\frac{2\sqrt{|S|}+2}{1-\rho}.
\]
The bound is uniform over the unilateral deviation policy $\pi$.
\end{lemma}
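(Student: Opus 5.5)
The plan is a one-step recursion for $e_t:=\mathbb E^{N,\pi}\|\mu_t^N-\mu^\star\|_1$ with contraction factor $\rho$ and an additive $O(N^{-1/2})$ error, which we then unroll.

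\emph{Initial time and fixed point.} First, $\mu^\star$ is a fixed point of $F$. By Assumption~\ref{ass:delta-sufficient}(4), $K_{\mu^\star}(s'\mid s)=\sum_a\pi^\star(a\mid s)p^\star(s'\mid s,a)$. By the consistency condition \eqref{eq:inv-smfe}, $\mu^\star K_{\mu^\star}=\mu^\star$. For $t=0$, the $s_0^i$ are i.i.d.\ $\mu^\star$, so $\mu_0^N$ is an empirical measure of $N$ i.i.d.\ samples. Using $\mathbb E|\mu_0^N(s)-\mu^\star(s)|\le\sqrt{\mu^\star(s)/N}$ and Cauchy--Schwarz over $s$ gives $e_0\le\sqrt{|S|/N}$.

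\emph{One-step decomposition.} Let $\mathcal F_t$ be the $\sigma$-field generated by $(s_t^N,a_t^N)$. By Assumption~\ref{ass:delta-sufficient}(5), the minimizing kernel at every time is the product $\bigotimes_i\bar p(\cdot\mid s_t^i,a_t^i,\mu_t^N)$. This holds even if the sequence is time-inhomogeneous, because every $p_t$ uses the same selector. Hence, conditionally on $\mathcal F_t$, the next states $s_{t+1}^i$ are independent. Write
\[
\mu_{t+1}^N-\mu^\star=\bigl(\mu_{t+1}^N-m_t\bigr)+\bigl(m_t-F(\mu_t^N)\bigr)+\bigl(F(\mu_t^N)-F(\mu^\star)\bigr),
\]
where $m_t:=\frac1N\sum_i\bar p(\cdot\mid s_t^i,a_t^i,\mu_t^N)$ is the conditional mean given $\mathcal F_t$. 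The three terms are handled as follows.
\begin{itemize}
\item \emph{Fluctuation.} Conditional independence gives $\operatorname{Var}(\mu_{t+1}^N(s')\mid\mathcal F_t)\le m_t(s')/N$. Jensen and Cauchy--Schwarz then give $\mathbb E\|\mu_{t+1}^N-m_t\|_1\le\sum_{s'}\mathbb E\sqrt{m_t(s')/N}\le\sqrt{|S|/N}$.
\item \emph{Mean versus $F$.} Averaging the actions out (conditioning only on $s_t^N$) replaces $m_t$ by $\frac1N\sum_i\sum_a\pi^i(a\mid s_t^i)\bar p(\cdot\mid s_t^i,a,\mu_t^N)$. Meanwhile $F(\mu_t^N)=\mu_t^NK_{\mu_t^N}=\frac1N\sum_iK_{\mu_t^N}(\cdot\mid s_t^i)$. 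These two coincide for every $i\ge2$, since those players use $\pi^\star$. The deviating player $1$ contributes at most $2/N$ in $\ell_1$. By the tower property and the triangle inequality, the combined expected contribution of the first two terms is at most $\sqrt{|S|/N}+2/N$. A little care is needed here: one should either condition on $s_t^N$ from the start, or note that the averaging over actions commutes with the expectation.
\item \emph{Contraction.} Lemma~\ref{lem:appendix-d5-contraction} gives $\|F(\mu_t^N)-F(\mu^\star)\|_1\le\rho\|\mu_t^N-\mu^\star\|_1$.
\end{itemize}

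\emph{Unrolling.} Combining the three terms yields
\[
e_{t+1}\le\rho\,e_t+\sqrt{|S|/N}+2/N .
\]
Unrolling from $e_0\le\sqrt{|S|/N}$ gives
\[
e_t\le\rho^t\sqrt{|S|/N}+\frac{\sqrt{|S|/N}+2/N}{1-\rho}\le\frac{2\sqrt{|S|}}{(1-\rho)\sqrt N}+\frac{2}{(1-\rho)N}.
\]
This holds uniformly in $t$ and in $\pi$, because $\pi$ enters only through the $2/N$ deviator term. The bound $C_\mu/\sqrt N$ follows from $1/N\le1/\sqrt N$.

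The main obstacle is the rigorous justification of conditional independence and of the conditional-mean identity. This is where the product-form requirement (5) is indispensable. Without it, nature could correlate the players' transitions, and the $\sqrt{|S|/N}$ fluctuation bound would fail, as the counterexample in Theorem~\ref{thm:cex} illustrates.
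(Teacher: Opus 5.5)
Your proposal is correct and follows essentially the paper's proof: the same identity $F(\mu^\star)=\mu^\star$, the same bound $e_0\le\sqrt{|S|/N}$, the same conditional-variance control of the fluctuation, and the same deviator correction of size $2/N$. It also arrives at the same recursion $e_{t+1}\le\rho e_t+\sqrt{|S|/N}+2/N$, unrolled to the stated constant. The paper conditions only on the state history $\sigma(s_0^N,\dots,s_t^N)$, which is exactly your first option. Your second option should be dropped: with the action-dependent mean $m_t$, the term $\mathbb E\|m_t-F(\mu_t^N)\|_1$ also picks up the action-sampling fluctuation of players $2,\dots,N$, which is in general of order $N^{-1/2}$ rather than $2/N$, because averaging over actions does not pass through the $\ell_1$ norm inside the expectation.
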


\begin{proof}
Fix $N$ and $\pi$, and write
\[
a_t:=\mathbb E^{N,\pi}\bigl[\|\mu_t^N-\mu^\star\|_1\bigr].
\]
Let
\[
\mathcal F_t:=\sigma(s_0^N,\ldots,s_t^N)
\]
be the state filtration up to time $t$. By Assumption~\ref{ass:delta-sufficient}(5), conditional on $\mathcal F_t$ the next states $s_{t+1}^1,\ldots,s_{t+1}^N$ are independent, and their conditional laws are
\[
\mathbb P^{N,\pi}(s_{t+1}^1\in\cdot\mid \mathcal F_t)
=\sum_{a\in A}\pi(a\mid s_t^1)\,\bar p(\cdot\mid s_t^1,a,\mu_t^N),
\]
and, for $i\ge 2$,
\[
\mathbb P^{N,\pi}(s_{t+1}^i\in\cdot\mid \mathcal F_t)
=\sum_{a\in A}\pi^\star(a\mid s_t^i)\,\bar p(\cdot\mid s_t^i,a,\mu_t^N)
=K_{\mu_t^N}(\cdot\mid s_t^i).
\]
Define the two random probability vectors
\[
G_t(\cdot):=\sum_{a\in A}\pi(a\mid s_t^1)\,\bar p(\cdot\mid s_t^1,a,\mu_t^N),
\qquad
H_t(\cdot):=K_{\mu_t^N}(\cdot\mid s_t^1).
\]
Then
\begin{align*}
\mathbb E^{N,\pi}[\mu_{t+1}^N\mid \mathcal F_t]
&=\frac1N G_t + \frac1N\sum_{i=2}^N K_{\mu_t^N}(\cdot\mid s_t^i) \\
&= \mu_t^N K_{\mu_t^N} + \frac1N\bigl(G_t-H_t\bigr) \\
&= F(\mu_t^N)+\frac1N\bigl(G_t-H_t\bigr).
\end{align*}
Set
\[
\xi_{t+1}:=\mu_{t+1}^N-\mathbb E^{N,\pi}[\mu_{t+1}^N\mid \mathcal F_t].
\]
We first bound the fluctuation term. For any $s'\in S$,
\[
\mu_{t+1}^N(s')=\frac1N\sum_{i=1}^N \mathbf 1_{\{s_{t+1}^i=s'\}}.
\]
Conditional on $\mathcal F_t$, the indicators on the right-hand side are independent Bernoulli random variables, hence
\[
\mathrm{Var}\bigl(\mu_{t+1}^N(s')\mid \mathcal F_t\bigr)
\le \frac1{N^2}\sum_{i=1}^N \mathbb P^{N,\pi}(s_{t+1}^i=s'\mid \mathcal F_t).
\]
Summing over $s'\in S$ gives
\[
\sum_{s'\in S}\mathrm{Var}\bigl(\mu_{t+1}^N(s')\mid \mathcal F_t\bigr)\le \frac1N.
\]
Therefore, by Jensen and Cauchy--Schwarz,
\begin{align*}
\mathbb E^{N,\pi}[\|\xi_{t+1}\|_1\mid \mathcal F_t]
&\le \sqrt{|S|}\,\mathbb E^{N,\pi}[\|\xi_{t+1}\|_2\mid \mathcal F_t] \\
&\le \sqrt{|S|}\,\Bigl(\mathbb E^{N,\pi}[\|\xi_{t+1}\|_2^2\mid \mathcal F_t]\Bigr)^{1/2} \\
&= \sqrt{|S|}\,\left(\sum_{s'\in S}\mathrm{Var}\bigl(\mu_{t+1}^N(s')\mid \mathcal F_t\bigr)\right)^{1/2} \\
&\le \sqrt{\frac{|S|}{N}}.
\end{align*}
Next, by the triangle inequality,
\[
\|\mu_{t+1}^N-\mu^\star\|_1
\le \|\xi_{t+1}\|_1
+\|F(\mu_t^N)-F(\mu^\star)\|_1
+\frac1N\|G_t-H_t\|_1.
\]
By Assumption~\ref{ass:delta-sufficient}(4) and the consistency condition of the mean-field equilibrium,
\[
F(\mu^\star)=\mu^\star.
\]
Also, $G_t,H_t\in \Delta(S)$, so $\|G_t-H_t\|_1\le 2$. Taking expectations and using Lemma~\ref{lem:appendix-d5-contraction} yields
\[
a_{t+1}
\le \rho a_t + \sqrt{\frac{|S|}{N}} + \frac{2}{N}.
\]
Iterating this recursion gives
\[
a_t
\le \rho^t a_0 + \frac{1-\rho^t}{1-\rho}\left(\sqrt{\frac{|S|}{N}}+\frac{2}{N}\right).
\]
It remains to bound $a_0$. Since $\mu_0^N$ is the empirical distribution of $N$ i.i.d. samples from $\mu^\star$, the same variance argument as above gives
\[
a_0=\mathbb E^{N,\pi}[\|\mu_0^N-\mu^\star\|_1]\le \sqrt{\frac{|S|}{N}}.
\]
Hence
\begin{align*}
a_t
&\le \sqrt{\frac{|S|}{N}} + \frac{1}{1-\rho}\left(\sqrt{\frac{|S|}{N}}+\frac{2}{N}\right) \\
&= \frac{2-\rho}{1-\rho}\sqrt{\frac{|S|}{N}} + \frac{2}{(1-\rho)N} \\
&\le \frac{2\sqrt{|S|}}{(1-\rho)\sqrt N}+\frac{2}{(1-\rho)N}.
\end{align*}
The stated uniform $C_\mu/\sqrt N$ bound follows because $N^{-1}\le N^{-1/2}$ for $N\ge 1$.
\end{proof}

\begin{proposition}[One-step law comparison]
\label{prop:appendix-d5-onestep}
Under Assumption~\ref{ass:delta-sufficient}, for every $N\in\mathbb N$, every unilateral deviation policy $\pi\in\Pi$, and every $t\ge 0$,
\[
W_1\bigl(Q_t^{N,\pi},Q_t^\pi\bigr)
\le \mathbb E^{N,\pi}\bigl[\|\mu_t^N-\mu^\star\|_1\bigr]
+L_P\sum_{k=0}^t \mathbb E^{N,\pi}\bigl[\|\mu_k^N-\mu^\star\|_1\bigr].
\]
In particular,
\[
W_1\bigl(Q_t^{N,\pi},Q_t^\pi\bigr)
\le \frac{C_\mu}{\sqrt N}\bigl(1+L_P(t+1)\bigr).
\]
\end{proposition}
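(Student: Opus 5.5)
We will build an explicit coupling between the deviating player's trajectory in the $N$-player game, under $\mathbb P^{N,\pi}$, and the proxy chain $\mathbb P^\pi$. The distance $W_1(Q_t^{N,\pi},Q_t^\pi)$ is at most the expected $d_Z$-cost of that coupling. Under any coupling $(s_k^1,a_k^1)_k$ and $(s_k,a_k)_k$, the cost at time $t$ equals
\[
\mathbb P\bigl((s_t^1,a_t^1,s_{t+1}^1)\neq(s_t,a_t,s_{t+1})\bigr)+\mathbb E\|\mu_t^N-\mu^\star\|_1 .
\]
The second term is exactly the first summand of the claimed bound. It therefore remains to show that the mismatch probability of the local triple is at most $L_P\sum_{k=0}^t\mathbb E\|\mu_k^N-\mu^\star\|_1$.

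\textbf{Coupling.} At time $0$ we set $s_0=s_0^1$; both have law $\mu^\star$ because $s_0^1\sim\mu^\star$. Inductively, whenever $s_k=s_k^1$ we use a common action $a_k=a_k^1\sim\pi(\cdot\mid s_k^1)$. Both the deviator and the proxy use the same policy $\pi$, so this is legitimate. We then sample $(s_{k+1}^1,s_{k+1})$ from a maximal coupling of $\bar p(\cdot\mid s_k^1,a_k^1,\mu_k^N)$ and $p^\star(\cdot\mid s_k,a_k)=\bar p(\cdot\mid s_k,a_k,\mu^\star)$, using Assumption~\ref{ass:delta-sufficient}(4). The other $N-1$ players evolve according to the product kernel of Assumption~\ref{ass:delta-sufficient}(5). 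By that assumption, conditional on $\mathcal F_k$ player 1's next state has exactly the law $\bar p(\cdot\mid s_k^1,a_k^1,\mu_k^N)$, independently of the others. Hence the marginal law of the $N$-player system is preserved. Once the two chains separate, we let them evolve independently. The proxy marginal is correct because, given $(s_k,a_k)$, its next state always has law $p^\star(\cdot\mid s_k,a_k)$.

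\textbf{Estimating the mismatch.} Let $\tau$ be the first time $k$ with $s_{k+1}^1\neq s_{k+1}$. On the event $\{s_k^1=s_k\}$ the actions agree, and the conditional probability of separating at step $k$ is
\[
d_{\mathrm{TV}}\bigl(\bar p(\cdot\mid s_k^1,a_k^1,\mu_k^N),\bar p(\cdot\mid s_k^1,a_k^1,\mu^\star)\bigr)\le \tfrac12 L_P\|\mu_k^N-\mu^\star\|_1\le L_P\|\mu_k^N-\mu^\star\|_1 ,
\]
by the Lipschitz property in Assumption~\ref{ass:delta-sufficient}(2). The triple at time $t$ coincides unless a separation occurs at some step $k\le t$. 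A union bound then gives
\[
\mathbb P(\text{triple mismatch at }t)\le \sum_{k=0}^t \mathbb P(\tau=k)\le L_P\sum_{k=0}^t\mathbb E\|\mu_k^N-\mu^\star\|_1 .
\]
The main delicate point is checking that the coupling is a valid joint construction, in particular that player 1's next state can be coupled conditionally on $\mathcal F_k$ without disturbing the conditional independence of the other players. This is where the product form (5) is essential. The proxy, for its part, needs no information about the population.

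\textbf{Uniform rate.} For the uniform bound we insert Lemma~\ref{lem:appendix-d5-empirical}, which gives $\mathbb E\|\mu_k^N-\mu^\star\|_1\le C_\mu/\sqrt N$ for every $k$, uniformly in $\pi$. Summing $t+1$ such terms yields $W_1\le \frac{C_\mu}{\sqrt N}(1+L_P(t+1))$.
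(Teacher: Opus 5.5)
Your proof is correct and follows the paper's argument. It uses the same coupling: shared initial state, a common action while the states agree, and a maximal coupling of $\bar p(\cdot\mid s,a,\mu_k^N)$ with $\bar p(\cdot\mid s,a,\mu^\star)$, combined with Assumption~\ref{ass:delta-sufficient}(2),(4),(5) and Lemma~\ref{lem:appendix-d5-empirical}. The only difference is bookkeeping: the paper bounds the triple mismatch by $\mathbf 1_{\{s_t^1\neq s_t\}}+\mathbf 1_{\{s_{t+1}^1\neq s_{t+1}\}}$ and adds the recursions for $e_t$ and $e_{t+1}$, whereas your first-separation-time union bound counts each step once and gives the sharper constant $L_P/2$ before you relax it to $L_P$.
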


\begin{proof}
Fix $N$, $\pi$, and $t$. We construct a coupling between
\[
Z_t^{N,\pi}:=(s_t^1,a_t^1,s_{t+1}^1,\mu_t^N)
\qquad\text{and}\qquad
Z_t^\pi:=(s_t,a_t,s_{t+1},\mu^\star).
\]
Start with $s_0=s_0^1$, where $s_0^1\sim \mu^\star$.

At time $t$, given $(s_t^1,s_t)$, proceed as follows.
\begin{enumerate}
    \item If $s_t^1=s_t$, sample a single action $a_t\sim \pi(\cdot\mid s_t)$ and set $a_t^1:=a_t$.
    \item If $s_t^1\neq s_t$, couple $a_t^1\sim \pi(\cdot\mid s_t^1)$ and $a_t\sim \pi(\cdot\mid s_t)$ arbitrarily.
\end{enumerate}
Thus,
\[
\{s_t^1=s_t\}\subseteq \{a_t^1=a_t\}.
\]
Next, conditional on $(s_t^1,a_t^1,s_t,a_t,\mu_t^N)$,
\begin{itemize}
    \item under $\mathbb P^{N,\pi}$, the first player's next-state law is $\bar p(\cdot\mid s_t^1,a_t^1,\mu_t^N)$, by Assumption~\ref{ass:delta-sufficient}(5);
    \item under $\mathbb P^\pi$, the proxy next-state law is $p^\star(\cdot\mid s_t,a_t)=\bar p(\cdot\mid s_t,a_t,\mu^\star)$, by Assumption~\ref{ass:delta-sufficient}(4).
\end{itemize}
Whenever $(s_t^1,a_t^1)=(s_t,a_t)$, couple $s_{t+1}^1$ and $s_{t+1}$ by a maximal coupling of
\[
\bar p(\cdot\mid s_t,a_t,\mu_t^N)
\qquad\text{and}\qquad
\bar p(\cdot\mid s_t,a_t,\mu^\star).
\]
When $(s_t^1,a_t^1)\neq (s_t,a_t)$, couple $s_{t+1}^1$ and $s_{t+1}$ arbitrarily.

Define
\[
e_t:=\mathbb P(s_t^1\neq s_t)
\]
under this coupling. Then $e_0=0$, and by the union bound,
\begin{align*}
e_{t+1}
&\le \mathbb P(s_t^1\neq s_t)
+\mathbb P\bigl(s_t^1=s_t,\ a_t^1=a_t,\ s_{t+1}^1\neq s_{t+1}\bigr) \\
&\le e_t + \mathbb E\Bigl[d_{\mathrm{TV}}\bigl(\bar p(\cdot\mid s_t,a_t,\mu_t^N),\bar p(\cdot\mid s_t,a_t,\mu^\star)\bigr)\Bigr] \\
&\le e_t + \frac{L_P}{2}\,\mathbb E^{N,\pi}\bigl[\|\mu_t^N-\mu^\star\|_1\bigr].
\end{align*}
By induction,
\[
e_t\le \frac{L_P}{2}\sum_{k=0}^{t-1}\mathbb E^{N,\pi}\bigl[\|\mu_k^N-\mu^\star\|_1\bigr],
\qquad
e_t+e_{t+1}\le L_P\sum_{k=0}^{t}\mathbb E^{N,\pi}\bigl[\|\mu_k^N-\mu^\star\|_1\bigr].
\]
Now recall that the metric on $Z=(S\times A\times S)\times \Delta(S)$ is
\[
d_Z\bigl((x,\nu),(\tilde x,\tilde\nu)\bigr)=\mathbf 1_{\{x\neq \tilde x\}}+\|\nu-\tilde\nu\|_1.
\]
Therefore,
\[
d_Z\bigl(Z_t^{N,\pi},Z_t^\pi\bigr)
=\mathbf 1_{\{(s_t^1,a_t^1,s_{t+1}^1)\neq (s_t,a_t,s_{t+1})\}}+\|\mu_t^N-\mu^\star\|_1.
\]
By construction, if both $s_t^1=s_t$ and $s_{t+1}^1=s_{t+1}$ occur, then necessarily $a_t^1=a_t$, and hence
\[
\mathbf 1_{\{(s_t^1,a_t^1,s_{t+1}^1)\neq (s_t,a_t,s_{t+1})\}}
\le \mathbf 1_{\{s_t^1\neq s_t\}}+\mathbf 1_{\{s_{t+1}^1\neq s_{t+1}\}}.
\]
Taking expectations gives
\begin{align*}
\mathbb E\bigl[d_Z(Z_t^{N,\pi},Z_t^\pi)\bigr]
&\le e_t+e_{t+1}+\mathbb E^{N,\pi}\bigl[\|\mu_t^N-\mu^\star\|_1\bigr] \\
&\le L_P\sum_{k=0}^{t}\mathbb E^{N,\pi}\bigl[\|\mu_k^N-\mu^\star\|_1\bigr]
+\mathbb E^{N,\pi}\bigl[\|\mu_t^N-\mu^\star\|_1\bigr].
\end{align*}
Since $W_1(Q_t^{N,\pi},Q_t^\pi)$ is the infimum of this expected cost over all couplings, the first inequality follows. The second inequality is immediate from Lemma~\ref{lem:appendix-d5-empirical}.
\end{proof}

\begin{corollary}
\label{cor:appendix-d5-delta}
Under Assumption~\ref{ass:delta-sufficient}, it holds with
\[
\delta_{N,t}:=\frac{C_\mu}{\sqrt N}\bigl(1+L_P(t+1)\bigr),
\qquad
C_\mu:=\frac{2\sqrt{|S|}+2}{1-\rho}.
\]
\end{corollary}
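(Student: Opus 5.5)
The corollary asserts that Assumption~\ref{ass:proxy-comparison} holds under Assumption~\ref{ass:delta-sufficient}, with the stated array $\delta_{N,t}$. The plan is to read it off directly from Proposition~\ref{prop:appendix-d5-onestep}, after checking that the objects there coincide with those in Assumption~\ref{ass:proxy-comparison}.

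\textbf{Step 1: matching the one-step laws.} Assumption~\ref{ass:proxy-comparison} refers to the law $Q_t^{N,\pi}$ under \emph{the} minimizing law of the actual deviating $N$-player game. By Assumption~\ref{ass:delta-sufficient}(5), there is a minimizing kernel sequence of product form $\bigotimes_i \bar p(\cdot\mid s^i,a^i,e_N(s^N))$. I take $\mathbb P^{N,\pi}$, the law induced by this sequence, as the minimizing law $P^{N,\pi}$. Lemma~\ref{lem:finite-player-robust-dp} only asserts existence of some minimizer, and the non-asymptotic argument in Proposition~\ref{prop:proxy-payoff-holder} uses only the identity $J_1^N=\sum_t\gamma^t\int r\,dQ_t^{N,\pi}$. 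That identity holds for any minimizing law, so this choice is legitimate. The proxy chain uses $p^\star=\bar p(\cdot\mid\cdot,\cdot,\mu^\star)$ by part (4), so $Q_t^\pi$ is the same object in both places.

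\textbf{Step 2: applying the bound.} Proposition~\ref{prop:appendix-d5-onestep} gives, for every $N$, every $\pi\in\Pi$ and every $t$,
\[
W_1(Q_t^{N,\pi},Q_t^\pi)\le \tfrac{C_\mu}{\sqrt N}\bigl(1+L_P(t+1)\bigr).
\]
This uses the uniform estimate of Lemma~\ref{lem:appendix-d5-empirical} for each of the $t+2$ expectation terms. The right-hand side is deterministic, nonnegative, and independent of $\pi$, so it is a valid array $\delta_{N,t}$.

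\textbf{Remaining checks.} The only point needing care is the uniformity in $\pi$. It follows because Lemma~\ref{lem:appendix-d5-empirical} bounds the expectation uniformly in $\pi$, with $C_\mu$ depending only on $|S|$ and $\rho$.

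As a remark, this $\delta_{N,t}$ also satisfies the growth condition in Theorem~\ref{thm:finiteN-holder}, namely $\delta_{N,t}\le C_\delta(1+t)/\sqrt N$ with $C_\delta=C_\mu(1+L_P)$, since $1+L_P(t+1)\le(1+L_P)(1+t)$. This yields the $O(N^{-1/2})$ rate.

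I expect no real obstacle. The only subtle step is the identification of the minimizing law in Step 1.
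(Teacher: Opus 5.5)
Your proposal is correct and is the paper's argument: the paper's proof is the one line ``this is exactly Proposition~\ref{prop:appendix-d5-onestep},'' which is your Step~2. Your Step~1 (taking the product-form minimizing law from Assumption~\ref{ass:delta-sufficient}(5) as $P^{N,\pi}$, which is legitimate because the payoff comparison only needs some minimizing law) is a useful clarification the paper leaves implicit.
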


\begin{proof}
This is exactly Proposition~\ref{prop:appendix-d5-onestep}.
\end{proof}

\begin{corollary}[Resulting finite-$N$ Nash-gap bound]
\label{cor:appendix-d5-holder-rate}
Assume, in addition, the $\alpha$-H\"older reward condition; namely, for some $\alpha\in(0,1]$ and $L_r\ge 0$,
\[
|r(s,a,s',\nu)-r(s,a,s',\tilde\nu)|\le L_r\|\nu-\tilde\nu\|_1^{\alpha},
\qquad \forall s,a,s',\nu,\tilde\nu.
\]
Let $R_\infty:=\|r\|_\infty$. Then the Nash-gap bound satisfies
\begin{align*}
\varepsilon_N
\le \frac{4R_\infty C_\mu}{\sqrt N}
\left(\frac{1}{1-\gamma}+\frac{L_P}{(1-\gamma)^2}\right)  + \frac{2L_r C_\mu^{\alpha}}{N^{\alpha/2}}
\left(\frac{1}{1-\gamma}+\frac{L_P^{\alpha}}{(1-\gamma)^2}\right).
\end{align*}
In particular,
\[
\varepsilon_N = O\bigl(N^{-\alpha/2}\bigr).
\]
When $\alpha=1$ (the Lipschitz case), this becomes
\[
\varepsilon_N
\le \frac{2(2R_\infty+L_r)C_\mu}{\sqrt N}
\left(\frac{1}{1-\gamma}+\frac{L_P}{(1-\gamma)^2}\right).
\]
\end{corollary}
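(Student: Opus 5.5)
The bound follows by substituting the explicit array from Corollary~\ref{cor:appendix-d5-delta} into the H\"older Nash-gap formula of Theorem~\ref{thm:appendix-finiteN-holder}. Proposition~\ref{prop:appendix-d5-onestep} gives the verified proxy-comparison array
\[
\delta_{N,t}=\frac{C_\mu}{\sqrt N}\bigl(1+L_P(t+1)\bigr).
\]
Assumption~\ref{ass:proxy-comparison} therefore holds, uniformly over unilateral deviations $\pi$, and Theorem~\ref{thm:appendix-finiteN-holder} yields
\[
\varepsilon_N=2\sum_{t\ge0}\gamma^t\bigl(2R_\infty\delta_{N,t}+L_r\delta_{N,t}^\alpha\bigr).
\]
I treat the two sums separately.

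\textbf{Linear term.} Insert $\delta_{N,t}$ and use $\sum_t\gamma^t=1/(1-\gamma)$ together with $\sum_t\gamma^t(t+1)=1/(1-\gamma)^2$. This gives
\[
4R_\infty\sum_t\gamma^t\delta_{N,t}=\frac{4R_\infty C_\mu}{\sqrt N}\Bigl(\frac{1}{1-\gamma}+\frac{L_P}{(1-\gamma)^2}\Bigr),
\]
which is exactly the first term of the claimed bound.

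\textbf{H\"older term.} For $\alpha\in(0,1]$, the map $x\mapsto x^\alpha$ is subadditive on $[0,\infty)$, so $(x+y)^\alpha\le x^\alpha+y^\alpha$. Hence
\[
\delta_{N,t}^\alpha\le \frac{C_\mu^\alpha}{N^{\alpha/2}}\bigl(1+L_P^\alpha(t+1)^\alpha\bigr).
\]
Since $(t+1)^\alpha\le t+1$ for $t\ge0$, we have $\sum_t\gamma^t(t+1)^\alpha\le 1/(1-\gamma)^2$. Multiplying by $2L_r$ then gives
\[
\frac{2L_rC_\mu^\alpha}{N^{\alpha/2}}\Bigl(\frac{1}{1-\gamma}+\frac{L_P^\alpha}{(1-\gamma)^2}\Bigr),
\]
which is the second term. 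The rate follows because $N^{-1/2}\le N^{-\alpha/2}$ for $N\ge1$ and $\alpha\le1$, so $\varepsilon_N=O(N^{-\alpha/2})$.

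\textbf{Lipschitz case.} Setting $\alpha=1$ makes both brackets equal to $\frac1{1-\gamma}+\frac{L_P}{(1-\gamma)^2}$. Adding the two terms gives the common factor $\frac{2(2R_\infty+L_r)C_\mu}{\sqrt N}$.

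There is no real obstacle here, since every ingredient is already proved earlier in the paper. The only points that need care are checking that Assumption~\ref{ass:delta-sufficient} indeed supplies Assumption~\ref{ass:proxy-comparison} uniformly in $\pi$ (this is Proposition~\ref{prop:appendix-d5-onestep}), and splitting the H\"older power of the sum correctly so that $L_P^\alpha$, rather than $L_P$, appears in the second bracket.
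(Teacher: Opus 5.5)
Your proposal is correct and follows essentially the same route as the paper: it substitutes $\delta_{N,t}=\frac{C_\mu}{\sqrt N}(1+L_P(t+1))$ from Corollary~\ref{cor:appendix-d5-delta} into Theorem~\ref{thm:appendix-finiteN-holder}, evaluates the linear sum exactly, and bounds the H\"older sum using subadditivity of $x\mapsto x^\alpha$ and $(t+1)^\alpha\le t+1$. Your explicit remarks on the rate ($N^{-1/2}\le N^{-\alpha/2}$) and on the $\alpha=1$ algebra spell out steps that the paper leaves implicit.
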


\begin{proof}
By Corollary~\ref{cor:appendix-d5-delta}, Theorem~\ref{thm:appendix-finiteN-holder} applies with
\[
\delta_{N,t}=\frac{C_\mu}{\sqrt N}\bigl(1+L_P(t+1)\bigr).
\]
Using Theorem~\ref{thm:appendix-finiteN-holder},
\[
\varepsilon_N
\le 2\sum_{t\ge 0}\gamma^t\Bigl(2R_\infty\delta_{N,t}+L_r\delta_{N,t}^{\alpha}\Bigr).
\]
For the linear term,
\[
\sum_{t\ge 0}\gamma^t\delta_{N,t}
= \frac{C_\mu}{\sqrt N}\sum_{t\ge 0}\gamma^t\bigl(1+L_P(t+1)\bigr)
= \frac{C_\mu}{\sqrt N}\left(\frac{1}{1-\gamma}+\frac{L_P}{(1-\gamma)^2}\right).
\]
For the H\"older term, since $\alpha\in(0,1]$ and $(x+y)^\alpha\le x^\alpha+y^\alpha$ for $x,y\ge 0$,
\[
\bigl(1+L_P(t+1)\bigr)^\alpha
\le 1+L_P^\alpha (t+1)^\alpha
\le 1+L_P^\alpha (t+1).
\]
Hence
\begin{align*}
\sum_{t\ge 0}\gamma^t\delta_{N,t}^{\alpha}
&\le \frac{C_\mu^{\alpha}}{N^{\alpha/2}}\sum_{t\ge 0}\gamma^t\Bigl(1+L_P^\alpha(t+1)\Bigr) \\
&= \frac{C_\mu^{\alpha}}{N^{\alpha/2}}\left(\frac{1}{1-\gamma}+\frac{L_P^\alpha}{(1-\gamma)^2}\right).
\end{align*}
Substituting the last two estimates into the theorem yields the stated bound. The Lipschitz specialization is the case $\alpha=1$.
\end{proof}

\section{Algorithmic solutions to Robust MFGs}
\subsection{Discussion: verifying the Lipschitz-mixing assumptions}
\label{subsec:alg_discussion}

\paragraph{(1) Verifying mixing via a Doeblin/minorization condition.}
A convenient sufficient condition for \eqref{eq:alg_mix_assump} is: there exist $\eta\in(0,1]$ and $\psi\in\Delta(\mathcal{S})$
such that for all $\mu$, all $s\in S$,
\begin{align}
K_\mu(\cdot\mid s) \ge \eta\,\psi(\cdot)\quad \text{(componentwise)}.
\end{align}
Then $\alpha(K_\mu)\le 1-\eta$ uniformly over $\mu$. 

Thus, as long as the kernel $K_\mu$ has a full support set of $\mcs$, (i.e., every entry of $K_\mu$ is positive), the condition will be satisfied. We highlight that this condition can be satisfied by distributional uncertainty set with small radius, and is also considered in standard robust RL literature, e.g., \citep{wang2023finite,chen2025sample}. 

\paragraph{(2) Bounding $L_K$ by policy and nature sensitivities.}
From $K_\mu(\cdot\mid s)=\sum_a \pi_\mu(a\mid s)p_\mu(\cdot\mid s,a)$,
one can bound, for each $s$,
\begin{align}
\|K_\mu(\cdot\mid s)-K_{\tilde\mu}(\cdot\mid s)\|_1
\le
\underbrace{\max_{a}\|p_\mu(\cdot\mid s,a)-p_{\tilde\mu}(\cdot\mid s,a)\|_1}_{\text{worst-case kernel sensitivity}}
+
\underbrace{\|\pi_\mu(\cdot\mid s)-\pi_{\tilde\mu}(\cdot\mid s)\|_1}_{\text{policy sensitivity}},
\end{align}
since $\|p_{\tilde\mu}(\cdot\mid s,a)\|_1=1$. Hence a sufficient condition for \eqref{eq:alg_LK_assump} is the existence
of Lipschitz selections $\mu\mapsto p_\mu$ and $\mu\mapsto \pi_\mu$.



We then verify (or relax) the kernel Lipschitz condition
\eqref{eq:alg_LK_assump} in Assumption~\ref{ass:alg_contractivity} for several standard ambiguity correspondences.
We focus on the ambiguity families: total-variation balls and Wasserstein balls.

\paragraph{Notation.}
Fix $(s,a)\in S\times A$. For a population distribution $\mu\in\Delta(\mathcal{S})$ and a bounded value function $v\in\mathbb{R}^S$,
define the one-step \emph{cost vector}
\begin{align}
c_{\mu,v}^{s,a}(s') ~:=~ r(s,a,s',\mu)+\gamma v(s'),\qquad s'\in S.
\label{eq:alg_cost_vector}
\end{align}
For a fixed selection rule $\mu\mapsto (\pi_\mu,p_\mu)$, recall $K_\mu(\cdot\mid s)=\sum_a \pi_\mu(a\mid s)p_\mu(\cdot\mid s,a)$.

\subsubsection{From policy/kernel Lipschitzness to $L_K$}
\label{subsubsec:alg_LK_decomp}

\begin{lemma}[A convenient bound for $L_K$]
\label{lem:alg_LK_decomp}
Assume there exist constants $L_\pi,L_p\ge 0$ such that for all $\mu,\tilde\mu\in\Delta(\mathcal{S})$,
\begin{align}
\max_{s\in S}\|\pi_\mu(\cdot\mid s)-\pi_{\tilde\mu}(\cdot\mid s)\|_1
&\le L_\pi \|\mu-\tilde\mu\|_1,
\label{eq:alg_Lpi}
\\
\max_{(s,a)\in S\times A}\|p_\mu(\cdot\mid s,a)-p_{\tilde\mu}(\cdot\mid s,a)\|_1
&\le L_p \|\mu-\tilde\mu\|_1.
\label{eq:alg_Lp}
\end{align}
Then the induced kernel $K_\mu$ satisfies \eqref{eq:alg_LK_assump} with
\begin{align}
L_K \le L_\pi + L_p.
\label{eq:alg_LK_bound_LpiLp}
\end{align}
\end{lemma}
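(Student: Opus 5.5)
The plan is a direct pointwise estimate: fix a state $s\in S$ and two populations $\mu,\tilde\mu\in\Delta(\mathcal{S})$, then split the difference $K_\mu(\cdot\mid s)-K_{\tilde\mu}(\cdot\mid s)$ into a policy-variation term and a kernel-variation term by adding and subtracting the mixed quantity $\sum_a \pi_\mu(a\mid s)\,p_{\tilde\mu}(\cdot\mid s,a)$. This gives
\begin{align*}
K_\mu(\cdot\mid s)-K_{\tilde\mu}(\cdot\mid s)
=\sum_{a\in A}\pi_\mu(a\mid s)\bigl(p_\mu(\cdot\mid s,a)-p_{\tilde\mu}(\cdot\mid s,a)\bigr)
+\sum_{a\in A}\bigl(\pi_\mu(a\mid s)-\pi_{\tilde\mu}(a\mid s)\bigr)p_{\tilde\mu}(\cdot\mid s,a).
\end{align*}

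For the first term, take the $\ell_1$ norm and apply the triangle inequality. Because $\pi_\mu(\cdot\mid s)$ is a probability vector, this term is a convex combination. It is therefore bounded by $\max_a\|p_\mu(\cdot\mid s,a)-p_{\tilde\mu}(\cdot\mid s,a)\|_1$, which by \eqref{eq:alg_Lp} is at most $L_p\|\mu-\tilde\mu\|_1$.

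For the second term, again use the triangle inequality over $a$. Since each $p_{\tilde\mu}(\cdot\mid s,a)$ is a probability vector with unit $\ell_1$ norm, the term is bounded by $\sum_a|\pi_\mu(a\mid s)-\pi_{\tilde\mu}(a\mid s)|=\|\pi_\mu(\cdot\mid s)-\pi_{\tilde\mu}(\cdot\mid s)\|_1$. By \eqref{eq:alg_Lpi} this is at most $L_\pi\|\mu-\tilde\mu\|_1$.

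Adding the two bounds and taking the maximum over $s\in S$ yields \eqref{eq:alg_LK_assump} with constant $L_\pi+L_p$, which proves \eqref{eq:alg_LK_bound_LpiLp}. The argument has no real obstacle. The only point needing care is to insert the cross term with $\pi_\mu$ paired with $p_{\tilde\mu}$, so that each piece is controlled by exactly one of the two hypotheses and the stochasticity of the untouched factor absorbs it.
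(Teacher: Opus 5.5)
Your proof is correct and takes the same route as the paper: you insert the same cross term $\sum_a \pi_\mu(a\mid s)\,p_{\tilde\mu}(\cdot\mid s,a)$, bound the kernel-variation term as a convex combination, and absorb the policy-variation term using $\|p_{\tilde\mu}(\cdot\mid s,a)\|_1=1$. Taking the maximum over $s$ then yields $L_K\le L_\pi+L_p$, exactly as in the paper.
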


\begin{proof}
Fix $\mu,\tilde\mu$ and $s\in S$. Add and subtract $\sum_a \pi_\mu(a\mid s)\,p_{\tilde\mu}(\cdot\mid s,a)$:
\begin{align}
\|K_\mu(\cdot\mid s)-K_{\tilde\mu}(\cdot\mid s)\|_1
\le
\Big\|\sum_a \pi_\mu(a\mid s)\big(p_\mu(\cdot\mid s,a)-p_{\tilde\mu}(\cdot\mid s,a)\big)\Big\|_1
+\Big\|\sum_a \big(\pi_\mu(a\mid s)-\pi_{\tilde\mu}(a\mid s)\big)p_{\tilde\mu}(\cdot\mid s,a)\Big\|_1.
\end{align}
The first term is at most $\max_a\|p_\mu(\cdot\mid s,a)-p_{\tilde\mu}(\cdot\mid s,a)\|_1$.
The second term is at most $\|\pi_\mu(\cdot\mid s)-\pi_{\tilde\mu}(\cdot\mid s)\|_1$ since $\|p_{\tilde\mu}(\cdot\mid s,a)\|_1=1$.
Taking maxima over $s$ and using \eqref{eq:alg_Lpi}--\eqref{eq:alg_Lp} yields \eqref{eq:alg_LK_bound_LpiLp}.
\end{proof}

\paragraph{Policy Lipschitzness: gap vs.\ softmax.}
In general, argmax-based policies can be discontinuous in $\mu$ due to ties.
Two standard sufficient conditions to obtain \eqref{eq:alg_Lpi} are:
(i) \emph{local action-gap/uniqueness} near $\mu^\star$ (then $L_\pi=0$ locally),
or (ii) \emph{entropy-regularized policies} (softmax), which are globally Lipschitz.

\begin{lemma}[Softmax policy is Lipschitz in $Q$]
\label{lem:alg_softmax_Lip}
Fix $\tau>0$. For each $s\in S$ define
\begin{align}
\pi^{\tau}(a\mid s;Q) ~:=~ \frac{\exp(Q(s,a)/\tau)}{\sum_{b\in A}\exp(Q(s,b)/\tau)}.
\end{align}
Then for any two $Q,\tilde Q$,
\begin{align}
\|\pi^{\tau}(\cdot\mid s;Q)-\pi^{\tau}(\cdot\mid s;\tilde Q)\|_1
\le \frac{2}{\tau}\,\max_{a\in A}|Q(s,a)-\tilde Q(s,a)|.
\label{eq:alg_softmax_Lip}
\end{align}
Consequently, if $\max_{s,a}|Q_\mu(s,a)-Q_{\tilde\mu}(s,a)|\le L_Q\|\mu-\tilde\mu\|_1$, then \eqref{eq:alg_Lpi} holds with
$L_\pi \le 2L_Q/\tau$.
\end{lemma}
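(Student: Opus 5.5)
The plan is to reduce the statement to the Lipschitz continuity of the softmax map on a single state's action vector, and then compose with the assumed $Q$-Lipschitz bound. Fix $s$ and write $x_a:=Q(s,a)/\tau$ and $\sigma(x)_a:=e^{x_a}/\sum_b e^{x_b}$. I would interpolate linearly, $x(\theta):=\tilde x+\theta(x-\tilde x)$ for $\theta\in[0,1]$, and use the fundamental theorem of calculus: $\sigma(x)-\sigma(\tilde x)=\int_0^1 J_\sigma(x(\theta))\,(x-\tilde x)\,d\theta$. Here $J_\sigma(y)=\mathrm{diag}(\sigma(y))-\sigma(y)\sigma(y)^\top$ is the softmax Jacobian.

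The key step is to bound $\|J_\sigma(y)h\|_1$ by $2\|h\|_\infty$, uniformly in $y$. Writing $p=\sigma(y)$, the components are $(J_\sigma(y)h)_a=p_a\bigl(h_a-\langle p,h\rangle\bigr)$. Hence
\[
\|J_\sigma(y)h\|_1=\sum_a p_a\,|h_a-\langle p,h\rangle|\le \sum_a p_a\bigl(|h_a|+|\langle p,h\rangle|\bigr)\le 2\|h\|_\infty .
\]
One could sharpen the constant by centering, since $|h_a-\langle p,h\rangle|\le \max h-\min h$, but the crude bound already gives the stated constant. Integrating over $\theta$ then gives
\[
\|\sigma(x)-\sigma(\tilde x)\|_1\le 2\|x-\tilde x\|_\infty=\tfrac{2}{\tau}\max_a|Q(s,a)-\tilde Q(s,a)|,
\]
which is \eqref{eq:alg_softmax_Lip}. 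The $1/\tau$ factor comes simply from the rescaling $x=Q/\tau$.

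For the consequence, I apply \eqref{eq:alg_softmax_Lip} with $Q=Q_\mu$ and $\tilde Q=Q_{\tilde\mu}$ for each fixed $s$. The right side is then bounded by $\tfrac{2}{\tau}\max_{s,a}|Q_\mu-Q_{\tilde\mu}|\le \tfrac{2L_Q}{\tau}\|\mu-\tilde\mu\|_1$. Taking the maximum over $s$ yields \eqref{eq:alg_Lpi} with $L_\pi\le 2L_Q/\tau$.

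There is no real obstacle in this argument. The only point needing care is the Jacobian bound, where the $\ell_\infty\to\ell_1$ operator norm must be controlled uniformly in the interpolation point. This is exactly what the weighted form $p_a(h_a-\langle p,h\rangle)$ delivers, because the weights $p_a$ sum to one.
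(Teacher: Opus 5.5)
Your proposal is correct and follows essentially the same route as the paper, which likewise bounds the $\ell_\infty\to\ell_1$ operator norm of the softmax Jacobian $\mathrm{diag}(\pi)-\pi\pi^\top$ by $2$ and deduces the Lipschitz estimate, then composes with the $L_Q$ bound. You additionally supply the details the paper leaves implicit: the line-segment integral representation and the explicit computation $(J_\sigma(y)h)_a=p_a\bigl(h_a-\langle p,h\rangle\bigr)$ that gives the uniform bound.
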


\begin{proof}
Let $u(a)=Q(s,a)/\tau$ and $\tilde u(a)=\tilde Q(s,a)/\tau$. The softmax map $u\mapsto \mathrm{softmax}(u)$ has Jacobian
$J(u)=\mathrm{diag}(\pi)-\pi\pi^\top$ whose operator norm from $\ell_\infty$ to $\ell_1$ is at most $2$.
Thus $\|\pi(u)-\pi(\tilde u)\|_1\le 2\|u-\tilde u\|_\infty = \frac{2}{\tau}\|Q(s,\cdot)-\tilde Q(s,\cdot)\|_\infty$,
which is \eqref{eq:alg_softmax_Lip}.
\end{proof}

Thus, we can replace the $\arg\max$ in \eqref{eq:alg_policy_selection} by a softmax policy. This allows us to consider more verifiable assumptions and still derive the convergence. See our discussion in \Cref{subsec:soft_rb_picard}.

It then suffices to verify the Lipschitz continuity of the worst-case kernel. We study it under three different distributional uncertainty sets.

\subsubsection{Total variation balls}
\label{subsubsec:alg_TV}

Consider the total variation ambiguity family:
\begin{align}
\mathfrak{P}_{\mathrm{TV}}(s,a,\mu)
~:=~
\Big\{p\in\Delta(\mathcal{S}):\ \|p-\bar p_{s,a}(\mu)\|_1 \le \varepsilon_{s,a}\Big\},
\qquad \varepsilon_{s,a}\ge 0.
\label{eq:alg_TV_ball}
\end{align}
Assume the center is Lipschitz:
\begin{align}
\max_{(s,a)}\|\bar p_{s,a}(\mu)-\bar p_{s,a}(\tilde\mu)\|_1 \le L_{\bar p}\|\mu-\tilde\mu\|_1.
\label{eq:alg_center_Lip_TV}
\end{align}

Any selection $p_\mu(\cdot\mid s,a)\in\mathfrak{P}_{\mathrm{TV}}(s,a,\mu)$ satisfies
\begin{align}
\|p_\mu(\cdot\mid s,a)-p_{\tilde\mu}(\cdot\mid s,a)\|_1
\le \|\bar p_{s,a}(\mu)-\bar p_{s,a}(\tilde\mu)\|_1 + 2\varepsilon_{s,a}
\le L_{\bar p}\|\mu-\tilde\mu\|_1 + 2\varepsilon_{s,a}.
\label{eq:alg_TV_inexact_Lip}
\end{align}
Consequently,
\begin{align}
\max_{s}\|K_\mu(\cdot\mid s)-K_{\tilde\mu}(\cdot\mid s)\|_1
\le (L_\pi+L_{\bar p})\|\mu-\tilde\mu\|_1 + 2\bar\varepsilon,
\qquad \bar\varepsilon:=\max_{s,a}\varepsilon_{s,a}.
\label{eq:alg_TV_inexact_LK}
\end{align}
Plugging \eqref{eq:alg_TV_inexact_LK} into the recursion yields convergence to an $O(\bar\varepsilon)$-neighborhood
via Theorem~\ref{thm:alg_inexact} (interpret $2\bar\varepsilon$ as a deterministic per-iteration update error).

\subsubsection{Wasserstein--1 balls}
\label{subsubsec:alg_Wass}

Consider the $W_1$ ambiguity family: fix a ground metric $d$ on $S$ and define
\begin{align}
\mathfrak{P}_{\mathrm{W}}(s,a,\mu)
~:=~
\Big\{p\in\Delta(\mathcal{S}):\ W_1\big(p,\bar p_{s,a}(\mu)\big)\le \rho_{s,a}(\mu)\Big\},
\qquad \rho_{s,a}(\mu)\ge 0.
\label{eq:alg_W_ball}
\end{align}
Let
\begin{align}
\underline d ~:=~ \min\{d(x,y):x\neq y\} \in (0,\infty),
\qquad
\bar\rho ~:=~ \max_{s,a,\mu}\rho_{s,a}(\mu).
\label{eq:alg_dmin_rhomax}
\end{align}

\begin{lemma}
\label{lem:alg_W1_to_L1}
For all $p,q\in\Delta(\mathcal{S})$,
\begin{align}
\|p-q\|_1 \le \frac{2}{\underline d}\,W_1(p,q).
\label{eq:alg_W1_to_L1}
\end{align}
\end{lemma}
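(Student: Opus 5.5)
We prove the inequality $\|p-q\|_1 \le \frac{2}{\underline d}\,W_1(p,q)$ by comparing the transport cost with the total-variation mass that any coupling must move off the diagonal. Fix $p,q\in\Delta(\mathcal{S})$. Since $\mathcal{S}$ is finite, the infimum defining $W_1(p,q)$ over couplings $\Gamma\in\Delta(\mathcal{S}\times\mathcal{S})$ with marginals $p,q$ is attained. Let $\Gamma$ be an optimal coupling, so that $W_1(p,q)=\sum_{x,y}\Gamma(x,y)\,d(x,y)$.

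The first step is to lower bound the transport cost. Whenever $x\neq y$ we have $d(x,y)\ge \underline d$, and $d(x,x)=0$. Hence
\[
W_1(p,q)\;\ge\;\underline d\sum_{x\neq y}\Gamma(x,y)\;=\;\underline d\,\Gamma(X\neq Y),
\]
where $(X,Y)\sim\Gamma$.

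The second step is the standard coupling inequality for total variation. For any coupling, $\Gamma(X\neq Y)\ge d_{\mathrm{TV}}(p,q)=\tfrac12\|p-q\|_1$. To see this, note that for each $s$ we have $p(s)-q(s)=\Gamma(X=s)-\Gamma(Y=s)\le \Gamma(X=s,\,Y\neq s)$. Summing over the states $s$ with $p(s)>q(s)$ gives $\tfrac12\|p-q\|_1\le\Gamma(X\neq Y)$.

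Combining the two steps yields $\tfrac12\|p-q\|_1\le \Gamma(X\neq Y)\le W_1(p,q)/\underline d$, which is exactly the claim. There is no real obstacle here. The only point to check is that $\underline d>0$, and this holds because $\mathcal{S}$ is finite and $d$ is a metric, so the minimum over the finitely many pairs $x\neq y$ is positive.
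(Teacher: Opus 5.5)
Your proof is correct and follows the paper's argument: bound the transport cost below by $\underline d\,\Gamma(X\neq Y)$, then apply the coupling inequality $\Gamma(X\neq Y)\ge d_{\mathrm{TV}}(p,q)=\tfrac12\|p-q\|_1$. The differences are cosmetic: you use an attained optimal coupling where the paper bounds an arbitrary coupling and takes the infimum, and you prove the coupling inequality explicitly where the paper cites it.
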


\begin{proof}
For any coupling $\gamma\in\Gamma(p,q)$, we have $d(X,Y)\ge \underline d\,\mathbf{1}\{X\neq Y\}$ almost surely,
so $\mathbb{E}[d(X,Y)]\ge \underline d\,\mathbb{P}(X\neq Y)\ge \underline d\,d_{\mathrm{TV}}(p,q)=\underline d\,\|p-q\|_1/2$.
Taking the infimum over $\gamma$ yields \eqref{eq:alg_W1_to_L1}.
\end{proof}

Assume the center is Lipschitz in $W_1$:
\begin{align}
\max_{(s,a)} W_1\big(\bar p_{s,a}(\mu),\bar p_{s,a}(\tilde\mu)\big)
\le L_{\bar p}^{\mathrm{W}}\|\mu-\tilde\mu\|_1.
\label{eq:alg_center_Lip_W1}
\end{align}
Then for any selections $p_\mu(\cdot\mid s,a)\in\mathfrak{P}_{\mathrm{W}}(s,a,\mu)$ and $p_{\tilde\mu}(\cdot\mid s,a)\in\mathfrak{P}_{\mathrm{W}}(s,a,\tilde\mu)$,
\begin{align}
\|p_\mu(\cdot\mid s,a)-p_{\tilde\mu}(\cdot\mid s,a)\|_1
&\le \frac{2}{\underline d}\,W_1\big(p_\mu(\cdot\mid s,a),p_{\tilde\mu}(\cdot\mid s,a)\big)
\nonumber\\
&\le \frac{2}{\underline d}\Big(\rho_{s,a}(\mu) + W_1\big(\bar p_{s,a}(\mu),\bar p_{s,a}(\tilde\mu)\big) + \rho_{s,a}(\tilde\mu)\Big)
\nonumber\\
&\le \frac{2}{\underline d}\Big(L_{\bar p}^{\mathrm{W}}\|\mu-\tilde\mu\|_1 + 2\bar\rho\Big).
\label{eq:alg_W_inexact_Lp}
\end{align}
Therefore,
\begin{align}
\max_s\|K_\mu(\cdot\mid s)-K_{\tilde\mu}(\cdot\mid s)\|_1
\le \Big(L_\pi+\frac{2}{\underline d}L_{\bar p}^{\mathrm{W}}\Big)\|\mu-\tilde\mu\|_1 + \frac{4\bar\rho}{\underline d}.
\label{eq:alg_W_inexact_LK}
\end{align}
As in the TV case, \eqref{eq:alg_W_inexact_LK} yields convergence to an $O(\bar\rho)$-neighborhood via Theorem~\ref{thm:alg_inexact}.
Exact contraction (with no additive slack) typically requires additional structure (e.g., a unique stable optimizer selection or
a regularized OT-based inner minimization).

\subsection{Convergence}

\begin{lemma}
Let $\mu\in\Delta(\mathcal{S})$ and let $(\pi_\mu,p_\mu)$ be any selections as in \eqref{eq:alg_policy_selection}--\eqref{eq:alg_kernel_selection}, i.e., $\mathrm{supp}\,\pi_\mu(\cdot|s)\subseteq\mathsf{D}(s,\mu)$ and $p_\mu(\cdot|s,a)\in\widehat{\mathfrak{P}}(s,a,\mu)$ for all $(s,a)$. Then:
\begin{enumerate}[label=(\roman*)]
\item (robust optimality and worst-case attainment at $\mu$)
\[
V_\mu\;=\;\inf_{p\in\mathfrak{P}(\mu)}J_\mu(\pi_\mu,p)\;=\;J_\mu(\pi_\mu,p_\mu)\;=\;\inner{\mu}{v_\mu}.
\]
\item If in addition $\mu=F(\mu)=\mu K_\mu$, then $(\mu,\pi_\mu,p_\mu)$ is a stationary robust mean-field equilibrium in the sense of Definition~\ref{def:smfe}.
\end{enumerate}
\end{lemma}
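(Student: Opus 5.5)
The plan is to reduce everything to results already established in the appendix, chiefly Proposition~\ref{prop:robust-dp} and Proposition~\ref{prop:statewise-vs-distributional}, with the initial distribution taken to be $\mu$ itself (as in the definition of $J_\mu$, where $s_0\sim\mu$).

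For part (i), fix $\mu$ and selections $(\pi_\mu,p_\mu)$ with $\supp\pi_\mu(\cdot|s)\subseteq\mathsf D(s,\mu)$ and $p_\mu(\cdot|s,a)\in\widehat{\mathfrak{P}}(s,a,\mu)$. These are exactly the hypotheses of Proposition~\ref{prop:robust-dp}. Part (i) of that proposition gives, for every $s$,
\[
v_\mu(s)=\inf_{p}J_\mu(s;\pi_\mu,p)=J_\mu(s;\pi_\mu,p_\mu).
\]
Averaging over $s\sim\mu$ and using linearity of $J_\mu(\cdot;\pi,p)$ in the initial law (as in the first step of the proof of Proposition~\ref{prop:statewise-vs-distributional}) yields $J_\mu(\pi_\mu,p_\mu)=\inner{\mu}{v_\mu}$.

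For the infimum, the inequality $\inf_p J_\mu(\pi_\mu,p)\ge\sum_s\mu(s)\inf_p J_\mu(s;\pi_\mu,p)=\inner{\mu}{v_\mu}$ is immediate. The reverse inequality holds because $p_\mu$ is feasible and attains $\inner{\mu}{v_\mu}$. By rectangularity, a single kernel is simultaneously statewise optimal, so no gap arises. Proposition~\ref{prop:statewise-vs-distributional} with $\lambda=\mu$ then gives $V_\mu=V_\mu(\mu)=\inner{\mu}{v_\mu}$. Chaining these equalities proves (i).

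If one wants the infimum over history-dependent adversaries, Proposition~\ref{prop:history} shows it coincides with the stationary infimum. The value is therefore unchanged whichever class of adversaries one admits.

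For part (ii), note that (i) already gives robust optimality, i.e.\ Definition~\ref{def:smfe}(i): $V_\mu=\inf_p J_\mu(\pi_\mu,p)$. It also gives the worst-case kernel condition, Definition~\ref{def:smfe}(ii): $V_\mu=J_\mu(\pi_\mu,p_\mu)$. Consistency follows by expanding $\mu=\mu K_\mu$ componentwise, using $K_\mu(s'|s)=\sum_a\pi_\mu(a|s)p_\mu(s'|s,a)$; this is exactly \eqref{eq:inv-smfe} with $(\mu^\star,\pi^\star,p^\star)=(\mu,\pi_\mu,p_\mu)$.

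One bookkeeping point remains. The definition treats $p^\star$ as a map on $\mcs\times\mca\times\DeltaS$, whereas $p_\mu$ is defined only at the fixed $\mu$. I would extend it arbitrarily to other populations, for instance by any selection from $\mathfrak{P}(s,a,\nu)$, which is nonempty. This is harmless because only the value at $\mu$ enters the three conditions.

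The main obstacle is purely definitional. I must ensure that $J_\mu(\pi,p)$ in Definition~\ref{def:smfe} is the $\mu$-averaged objective, and that the infimum there ranges over the same rectangular kernel class used in Proposition~\ref{prop:robust-dp}. Once the averaging step is justified via rectangularity, everything else is a direct citation.
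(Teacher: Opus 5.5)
Your proposal is correct and follows the paper's proof essentially step for step. You apply Proposition~\ref{prop:robust-dp} statewise, average against $\mu$, bound $J_\mu(\pi_\mu,p)\ge\inner{\mu}{v_\mu}$ for every admissible $p$ with attainment at $p_\mu$, invoke Proposition~\ref{prop:statewise-vs-distributional} with $\lambda=\mu$ for $V_\mu$, and read $\mu=\mu K_\mu$ componentwise for consistency; your remarks on history-dependent adversaries and on extending $p_\mu$ off $\mu$ are harmless additions that the paper does not make.
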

 
\begin{proof}
Fix the population $\mu$ and apply Proposition~\ref{prop:robust-dp} at $\mu$: since $\pi_\mu$ is supported on the greedy sets $\mathsf{D}(\cdot,\mu)$ and $p_\mu(\cdot|s,a)\in\widehat{\mathfrak{P}}(s,a,\mu)$ pointwise, Proposition~\ref{prop:robust-dp} gives, for every $s\in\mathcal{S}$,
\begin{equation}\label{eq:lem1-perstate}
v_\mu(s)\;=\;\inf_{p\in\mathfrak{P}(\mu)}J_\mu(s;\pi_\mu,p)\;=\;J_\mu(s;\pi_\mu,p_\mu),
\qquad\text{and}\qquad
\inf_{p\in\mathfrak{P}(\mu)}J_\mu(s;\pi',p)\;\le\;v_\mu(s)\ \ \forall\pi'.
\end{equation}
Averaging the second identity in \eqref{eq:lem1-perstate} against $\mu$ gives $J_\mu(\pi_\mu,p_\mu)=\sum_s\mu(s)\,J_\mu(s;\pi_\mu,p_\mu)=\inner{\mu}{v_\mu}$.

Then, for any admissible stationary kernel $p\in\mathfrak{P}(\mu)$, the first identity in \eqref{eq:lem1-perstate} gives $J_\mu(s;\pi_\mu,p)\ge v_\mu(s)$ for every $s$, hence $J_\mu(\pi_\mu,p)\ge\inner{\mu}{v_\mu}$. 

Therefore, the infimum over $p$ of $J_\mu(\pi_\mu,p)$ equals $\inner{\mu}{v_\mu}$ and is attained at $p_\mu$. Finally, Proposition~\ref{prop:statewise-vs-distributional} gives $V_\mu=V_\mu(\mu)=\inner{\mu}{v_\mu}$, completing (i); in particular Definition~\ref{def:smfe}(i)--(ii) hold at the population $\mu$.
 
For (ii), it remains only to check Definition~\ref{def:smfe}(iii). The hypothesis $\mu=\mu K_\mu$ reads, coordinatewise,
\[
\mu(s')\;=\;\sum_{s\in\mathcal{S}}\mu(s)\sum_{a\in\mathcal{A}}\pi_\mu(a|s)\,p_\mu(s'|s,a)\qquad\forall s'\in\mathcal{S},
\]
which is the consistency condition for the triple $(\mu,\pi_\mu,p_\mu)$.
\end{proof}

\begin{lemma}[Contraction of the population operator]
\label{lem:alg_F_contraction}
Under Assumption~\ref{ass:alg_contractivity}, the map $F(\mu)=\mu K_\mu$ is a contraction in $\|\cdot\|_1$:
\begin{align}
\|F(\mu)-F(\tilde\mu)\|_1 \le \rho\,\|\mu-\tilde\mu\|_1,\qquad \forall \mu,\tilde\mu\in\Delta(\mathcal{S}),
\label{eq:alg_F_contr}
\end{align}
where $\rho=\rho_{\mathrm{mix}}+L_K<1$.
\end{lemma}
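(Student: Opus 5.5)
The plan is to reuse the add-and-subtract argument already given in the proof of Lemma~\ref{lem:appendix-d5-contraction}, replacing the fixed-policy kernel $K_\nu$ there by the best-response kernel $K_\mu$ of \eqref{eq:alg_Kmu}. Fix $\mu,\tilde\mu\in\Delta(\mathcal{S})$ and insert the intermediate measure $\mu K_{\tilde\mu}$. The triangle inequality then gives
\begin{align*}
\|F(\mu)-F(\tilde\mu)\|_1
\le \|\mu K_\mu-\mu K_{\tilde\mu}\|_1+\|\mu K_{\tilde\mu}-\tilde\mu K_{\tilde\mu}\|_1 .
\end{align*}
The first term compares two kernels applied to the same measure, and the second compares two measures pushed through the same kernel.

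For the first term, write $(\mu K_\mu-\mu K_{\tilde\mu})(s')=\sum_s\mu(s)\bigl(K_\mu(s'|s)-K_{\tilde\mu}(s'|s)\bigr)$. Summing absolute values over $s'$ and exchanging sums bounds it by $\sum_s\mu(s)\|K_\mu(\cdot|s)-K_{\tilde\mu}(\cdot|s)\|_1$. Since $\mu$ is a probability vector, this is at most $\max_s\|K_\mu(\cdot|s)-K_{\tilde\mu}(\cdot|s)\|_1$, which is $\le L_K\|\mu-\tilde\mu\|_1$ by \eqref{eq:alg_LK_assump}. The Lipschitz selections from Lemma~\ref{lem:alg_LK_decomp} are not needed, because the assumption is stated directly on $K_\mu$.

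For the second term I will use the Dobrushin contraction inequality: for any Markov kernel $K$ and any signed vector $\xi$ with $\sum_s\xi(s)=0$, one has $\|\xi K\|_1\le\alpha(K)\|\xi\|_1$. I will take $\xi=\mu-\tilde\mu$ and $K=K_{\tilde\mu}$, then apply \eqref{eq:alg_mix_assump} to get a bound of $\rho_{\mathrm{mix}}\|\mu-\tilde\mu\|_1$. Adding the two bounds gives $(\rho_{\mathrm{mix}}+L_K)\|\mu-\tilde\mu\|_1=\rho\|\mu-\tilde\mu\|_1$, and $\rho<1$ by Assumption~\ref{ass:alg_contractivity}.

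The only step that is not bookkeeping is the Dobrushin inequality. The earlier appendix cites it as standard, but I will prove it briefly to be self-contained. Write $\xi=c(\xi^+/c-\xi^-/c)$, where $c=\|\xi\|_1/2$ is the common mass of $\xi^+$ and $\xi^-$; this uses $\sum_s\xi(s)=0$. Let $\lambda=\xi^+/c$ and $\lambda'=\xi^-/c$, which are probability vectors, and pick any coupling $\Gamma$ of them. Then
\[
\|\xi K\|_1=c\,\Bigl\|\sum_{s,\tilde s}\Gamma(s,\tilde s)\bigl(K(\cdot|s)-K(\cdot|\tilde s)\bigr)\Bigr\|_1\le c\cdot 2\alpha(K)=\alpha(K)\|\xi\|_1 .
\]
Here I use $\|K(\cdot|s)-K(\cdot|\tilde s)\|_1=2d_{\mathrm{TV}}\le2\alpha(K)$. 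The point to watch is the normalization $\|\cdot\|_1=2d_{\mathrm{TV}}$, so the constant must come out as $\alpha(K)$ and not $2\alpha(K)$. Once that check is done, the conclusion \eqref{eq:alg_F_contr} follows.
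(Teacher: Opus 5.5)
Your proposal is correct and matches the paper's proof: you insert $\mu K_{\tilde\mu}$, bound the same-measure term by averaging the row-wise bound \eqref{eq:alg_LK_assump} against $\mu$, and bound the same-kernel term by the Dobrushin inequality together with \eqref{eq:alg_mix_assump}. The only addition is your coupling proof of the Dobrushin inequality, which the paper cites as standard; your normalization check $\|\cdot\|_1=2d_{\mathrm{TV}}$ correctly gives the constant $\alpha(K)$.
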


\begin{proof}
Add and subtract $\mu K_{\tilde\mu}$:
\begin{align}
\|F(\mu)-F(\tilde\mu)\|_1
=
\|\mu K_\mu - \tilde\mu K_{\tilde\mu}\|_1
\le
\|\mu K_\mu-\mu K_{\tilde\mu}\|_1 + \|\mu K_{\tilde\mu}-\tilde\mu K_{\tilde\mu}\|_1.
\end{align}
For the second term, 
first note that the Dobrushin contraction inequality implies
\begin{align}
\|\nu K - \tilde\nu K\|_1 \le \alpha(K)\,\|\nu-\tilde\nu\|_1,\quad \forall \nu,\tilde\nu\in\Delta(\mathcal{S}).
\label{eq:alg_dobrushin_contr}
\end{align}
Apply \eqref{eq:alg_dobrushin_contr} and \eqref{eq:alg_mix_assump}:
\begin{align}
\|\mu K_{\tilde\mu}-\tilde\mu K_{\tilde\mu}\|_1 \le \alpha(K_{\tilde\mu})\|\mu-\tilde\mu\|_1
\le \rho_{\mathrm{mix}}\|\mu-\tilde\mu\|_1.
\end{align}
For the first term,
\begin{align}
\|\mu K_\mu-\mu K_{\tilde\mu}\|_1
=
\left\|\sum_{s\in S}\mu(s)\big(K_\mu(\cdot\mid s)-K_{\tilde\mu}(\cdot\mid s)\big)\right\|_1
\le
\sum_{s\in S}\mu(s)\|K_\mu(\cdot\mid s)-K_{\tilde\mu}(\cdot\mid s)\|_1
\le
\max_{s}\|K_\mu(\cdot\mid s)-K_{\tilde\mu}(\cdot\mid s)\|_1.
\end{align}
Now apply \eqref{eq:alg_LK_assump} to get $\|\mu K_\mu-\mu K_{\tilde\mu}\|_1\le L_K\|\mu-\tilde\mu\|_1$.
Combining the two bounds yields \eqref{eq:alg_F_contr}.
\end{proof}

\begin{theorem}\label{thm:alg_picard_convergence}
Under Assumption~\ref{ass:alg_contractivity}, the map $F$ has a unique fixed point $\mu^\star$.
Moreover, the damped RB-Picard update
\begin{align}
\mu^{k+1} = (1-\alpha)\mu^k + \alpha F(\mu^k)
\label{eq:alg_damped_picard}
\end{align}
converges linearly to $\mu^\star$:
\begin{align}
\|\mu^{k}-\mu^\star\|_1
~\le~
\big(1-\alpha(1-\rho)\big)^{k}\,\|\mu^{0}-\mu^\star\|_1,
\qquad \forall k\ge 0.
\label{eq:alg_linear_rate}
\end{align}
In particular, with $\alpha=1$ (undamped iteration), $\|\mu^{k}-\mu^\star\|_1\le \rho^k\|\mu^0-\mu^\star\|_1$.
\end{theorem}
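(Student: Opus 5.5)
The argument applies Banach's fixed point theorem to $F$ on the complete metric space $(\DeltaS,\|\cdot\|_1)$, and then runs a one-line averaging estimate for the damped iteration. Lemma~\ref{lem:alg_F_contraction} gives $\|F(\mu)-F(\tilde\mu)\|_1\le\rho\|\mu-\tilde\mu\|_1$ with $\rho<1$ under Assumption~\ref{ass:alg_contractivity}. Since $\DeltaS$ is a closed subset of $\R^{\mcs}$, it is complete, and $F$ maps $\DeltaS$ into itself because $\mu K_\mu$ is a probability vector. Banach's theorem therefore yields a unique fixed point $\mu^\star$. By the preceding lemma, part (ii), $(\mu^\star,\pi_{\mu^\star},p_{\mu^\star})$ is a stationary robust MFE, although the statement itself only asks about $\mu^\star$.

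For the rate, I first check that the damped iterates remain in $\DeltaS$. This holds since $\DeltaS$ is convex and both $\mu^k$ and $F(\mu^k)$ lie in it. Next I use $\mu^\star=(1-\alpha)\mu^\star+\alpha F(\mu^\star)$ and subtract it from the update:
\begin{align*}
\mu^{k+1}-\mu^\star=(1-\alpha)(\mu^k-\mu^\star)+\alpha\bigl(F(\mu^k)-F(\mu^\star)\bigr).
\end{align*}
Taking $\|\cdot\|_1$, and using the triangle inequality together with $\alpha\in(0,1]$ (so that $1-\alpha\ge 0$), gives
\begin{align*}
\|\mu^{k+1}-\mu^\star\|_1\le(1-\alpha+\alpha\rho)\|\mu^k-\mu^\star\|_1=\bigl(1-\alpha(1-\rho)\bigr)\|\mu^k-\mu^\star\|_1.
\end{align*}
Induction on $k$ then yields \eqref{eq:alg_linear_rate}. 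Setting $\alpha=1$ gives the factor $\rho^k$. The factor $q=1-\alpha(1-\rho)$ lies in $[\rho,1)$ whenever $\alpha\in(0,1]$, so convergence is linear.

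There is no real obstacle, since the contraction lemma does all the work. The only point needing care is the implicit requirement that the stepsize satisfy $\alpha\in(0,1]$. This is needed both for the iterates to stay in the simplex and for the bound $|1-\alpha|=1-\alpha$. I will state that requirement explicitly. I will also note that uniqueness of $\mu^\star$ presupposes a fixed selection rule $\mu\mapsto(\pi_\mu,p_\mu)$, which is implicit in Assumption~\ref{ass:alg_contractivity}.
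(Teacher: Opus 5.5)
Your proposal is correct and follows essentially the same route as the paper: Lemma~\ref{lem:alg_F_contraction} plus Banach's theorem on $(\DeltaS,\|\cdot\|_1)$ give existence and uniqueness. Subtracting $\mu^\star=(1-\alpha)\mu^\star+\alpha F(\mu^\star)$ and applying the triangle inequality then gives the per-step factor $1-\alpha(1-\rho)$. You also make explicit that $\alpha\in(0,1]$ is needed and that $F$ depends on a fixed selection rule $\mu\mapsto(\pi_\mu,p_\mu)$; both are sensible clarifications that the paper leaves implicit.
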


\begin{proof}
By Lemma~\ref{lem:alg_F_contraction}, $F$ is a contraction on the complete metric space $(\Delta(\mathcal{S}),\|\cdot\|_1)$,
so Banach's fixed point theorem yields a unique fixed point $\mu^\star$ and linear convergence for $\alpha=1$.
For the damped update \eqref{eq:alg_damped_picard},
\begin{align}
\|\mu^{k+1}-\mu^\star\|_1
\le (1-\alpha)\|\mu^k-\mu^\star\|_1 + \alpha\|F(\mu^k)-F(\mu^\star)\|_1
\le (1-\alpha+\alpha\rho)\|\mu^k-\mu^\star\|_1
=
\big(1-\alpha(1-\rho)\big)\|\mu^k-\mu^\star\|_1,
\end{align}
and iterating gives \eqref{eq:alg_linear_rate}.
\end{proof}

\begin{theorem}[Stability under inexact updates]
 
Assume Assumption~\ref{ass:alg_contractivity} and that Algorithm~\ref{alg:rb_picard} uses the inexact update
$\mu^{k+1}=(1-\alpha)\mu^k+\alpha\widehat F(\mu^k)$ with \eqref{eq:alg_inexact_update}.
Then, letting $q:=1-\alpha(1-\rho)\in(0,1)$,
\begin{align}
\|\mu^{k}-\mu^\star\|_1
~\le~
q^{k}\|\mu^{0}-\mu^\star\|_1
~+~
\alpha\sum_{j=0}^{k-1} q^{k-1-j}\varepsilon_j.
\label{eq:alg_inexact_bound}
\end{align}
In particular, if $\sup_j\varepsilon_j\le \bar\varepsilon$, then
$\limsup_{k\to\infty}\|\mu^{k}-\mu^\star\|_1 \le \bar\varepsilon/(1-\rho)$.
\end{theorem}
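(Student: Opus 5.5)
The plan is a standard perturbed-contraction argument that reuses Lemma~\ref{lem:alg_F_contraction} and Theorem~\ref{thm:alg_picard_convergence}. By Lemma~\ref{lem:alg_F_contraction}, under Assumption~\ref{ass:alg_contractivity} the map $F$ is a $\rho$-contraction in $\|\cdot\|_1$. By Theorem~\ref{thm:alg_picard_convergence} it has a unique fixed point $\mu^\star$, so $\mu^\star=(1-\alpha)\mu^\star+\alpha F(\mu^\star)$. I will compare the inexact iterate with this identity one step at a time.

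For the one-step bound, I write
\[
\mu^{k+1}-\mu^\star=(1-\alpha)(\mu^k-\mu^\star)+\alpha\bigl(F(\mu^k)-F(\mu^\star)\bigr)+\alpha\bigl(\widehat F(\mu^k)-F(\mu^k)\bigr).
\]
Taking $\|\cdot\|_1$, using the triangle inequality with $\alpha\in(0,1]$, the contraction bound, and \eqref{eq:alg_inexact_update} gives
\[
\|\mu^{k+1}-\mu^\star\|_1\le q\|\mu^k-\mu^\star\|_1+\alpha\varepsilon_k,
\]
with $q=1-\alpha(1-\rho)\in(0,1)$ because $\rho<1$.

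Unrolling this linear recursion by induction on $k$ yields \eqref{eq:alg_inexact_bound} directly. The base case $k=0$ is trivial (the sum is empty). For the inductive step, multiply the hypothesis by $q$, add $\alpha\varepsilon_k$, and reindex the sum.

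For the limsup claim, I bound $\varepsilon_j\le\bar\varepsilon$ to get
\[
\alpha\sum_{j<k}q^{k-1-j}\varepsilon_j\le \frac{\alpha\bar\varepsilon}{1-q}=\frac{\alpha\bar\varepsilon}{\alpha(1-\rho)}=\frac{\bar\varepsilon}{1-\rho}.
\]
Since $q^k\|\mu^0-\mu^\star\|_1\to0$, the claim follows.

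The only point needing care is that the iterates stay in $\Delta(\mathcal{S})$, so that $F(\mu^k)$ is defined and the contraction bound applies. This is the main thing I would check. It holds provided $\widehat F(\mu^k)\in\Delta(\mathcal{S})$, since a convex combination of distributions is a distribution. If the implemented update could leave the simplex, I would insert a projection onto $\Delta(\mathcal{S})$ in $\ell_1$. Because $F(\mu^k)$ lies in the simplex, the projected point is within $2\varepsilon_k$ of $F(\mu^k)$, which changes only a constant in front of $\varepsilon_k$. I expect to state this simply as an assumption that $\widehat F$ maps into $\Delta(\mathcal{S})$, which is the natural reading of the algorithm.
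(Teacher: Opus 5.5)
Your proposal is correct and follows the paper's proof essentially step for step: the paper also adds and subtracts $F(\mu^k)$, applies Lemma~\ref{lem:alg_F_contraction} together with \eqref{eq:alg_inexact_update} to get $\|\mu^{k+1}-\mu^\star\|_1\le q\|\mu^k-\mu^\star\|_1+\alpha\varepsilon_k$, unrolls this recursion, and bounds the geometric series by $\alpha\bar\varepsilon/(1-q)=\bar\varepsilon/(1-\rho)$. You also note that $\widehat F(\mu^k)$ must lie in $\Delta(\mathcal{S})$ so that the contraction lemma applies at the next iterate; the paper leaves this implicit, and your projection fix, which costs a factor $2\varepsilon_k$, is sound.
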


\begin{proof}
Write the recursion with the fixed point $\mu^\star=F(\mu^\star)$:
\begin{align}
\mu^{k+1}-\mu^\star
=
(1-\alpha)(\mu^k-\mu^\star)+\alpha\big(\widehat F(\mu^k)-F(\mu^\star)\big).
\end{align}
Add and subtract $F(\mu^k)$ and use triangle inequality:
\begin{align}
\|\mu^{k+1}-\mu^\star\|_1
\le
(1-\alpha)\|\mu^k-\mu^\star\|_1 + \alpha\|F(\mu^k)-F(\mu^\star)\|_1 + \alpha\|\widehat F(\mu^k)-F(\mu^k)\|_1.
\end{align}
Apply Lemma~\ref{lem:alg_F_contraction} and \eqref{eq:alg_inexact_update}:
\begin{align}
\|\mu^{k+1}-\mu^\star\|_1 \le (1-\alpha+\alpha\rho)\|\mu^k-\mu^\star\|_1 + \alpha\varepsilon_k = q\|\mu^k-\mu^\star\|_1+\alpha\varepsilon_k.
\end{align}
Unrolling yields \eqref{eq:alg_inexact_bound}. The $\limsup$ statement follows by bounding the geometric series.
\end{proof}

\subsection{Convergence under mixing setting}
Assume $\alpha(K_\mu)\le \rho_{\mathrm{mix}}<1$ for all $\mu$ so that each $K_\mu$ admits a unique stationary distribution.
Define the stationary-distribution map $\Phi:\Delta(\mathcal{S})\to\Delta(\mathcal{S})$ by
\begin{align}
\Phi(\mu) ~:=~ \text{the unique } \hat\mu \in \Delta(\mathcal{S})\text{ such that }\hat\mu=\hat\mu K_\mu.
\label{eq:alg_Phi}
\end{align}

\begin{theorem}[Contraction of the stationary-distribution map]
\label{thm:alg_stationary_map}
Suppose \eqref{eq:alg_mix_assump} and \eqref{eq:alg_LK_assump} hold with $\rho_{\mathrm{mix}}<1$.
Then $\Phi$ is Lipschitz with
\begin{align}
\|\Phi(\mu)-\Phi(\tilde\mu)\|_1 \le \frac{L_K}{1-\rho_{\mathrm{mix}}}\,\|\mu-\tilde\mu\|_1.
\label{eq:alg_Phi_Lip}
\end{align}
Consequently, if $L_K<1-\rho_{\mathrm{mix}}$ (equivalently $\rho_{\mathrm{mix}}+L_K<1$), then $\Phi$ is a contraction.
\end{theorem}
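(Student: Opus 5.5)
Fix $\mu,\tilde\mu$ and set $\hat\mu:=\Phi(\mu)$ and $\hat{\tilde\mu}:=\Phi(\tilde\mu)$, so that $\hat\mu=\hat\mu K_\mu$ and $\hat{\tilde\mu}=\hat{\tilde\mu}K_{\tilde\mu}$. Before anything else I will check that $\Phi$ is well defined. Since $\alpha(K_\mu)\le\rho_{\mathrm{mix}}<1$, the Dobrushin inequality \eqref{eq:alg_dobrushin_contr} makes $\nu\mapsto\nu K_\mu$ a $\rho_{\mathrm{mix}}$-contraction on $(\Delta(\mathcal{S}),\|\cdot\|_1)$. Banach's theorem then gives existence and uniqueness of the invariant law.

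The core estimate is a perturbation argument. I will write
\begin{align*}
\hat\mu-\hat{\tilde\mu}
=\hat\mu K_\mu-\hat{\tilde\mu}K_{\tilde\mu}
=\bigl(\hat\mu K_\mu-\hat{\tilde\mu}K_\mu\bigr)+\hat{\tilde\mu}\bigl(K_\mu-K_{\tilde\mu}\bigr).
\end{align*}
For the first term I apply \eqref{eq:alg_dobrushin_contr} to $K_\mu$, which is legitimate because $\hat\mu$ and $\hat{\tilde\mu}$ are both probability vectors. This bounds its $\ell_1$ norm by $\rho_{\mathrm{mix}}\|\hat\mu-\hat{\tilde\mu}\|_1$. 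For the second term I argue as in the proof of Lemma~\ref{lem:alg_F_contraction}: convexity of the norm gives
\begin{align*}
\|\hat{\tilde\mu}(K_\mu-K_{\tilde\mu})\|_1\le\sum_s\hat{\tilde\mu}(s)\,\|K_\mu(\cdot\mid s)-K_{\tilde\mu}(\cdot\mid s)\|_1\le L_K\|\mu-\tilde\mu\|_1,
\end{align*}
where the last step uses \eqref{eq:alg_LK_assump}.

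Combining the two bounds gives $\|\hat\mu-\hat{\tilde\mu}\|_1\le\rho_{\mathrm{mix}}\|\hat\mu-\hat{\tilde\mu}\|_1+L_K\|\mu-\tilde\mu\|_1$. Because all quantities are finite, I can rearrange using $1-\rho_{\mathrm{mix}}>0$, and this yields \eqref{eq:alg_Phi_Lip}. The contraction claim follows immediately, since $L_K/(1-\rho_{\mathrm{mix}})<1$ holds exactly when $\rho_{\mathrm{mix}}+L_K<1$.

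I do not expect a serious obstacle. The one point requiring care is that the Dobrushin inequality is stated for probability vectors, and I apply it only to the difference of two such vectors, so the mass-zero condition it relies on is satisfied.
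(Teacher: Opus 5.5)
Your proof is correct and follows the paper's argument: the same decomposition $\hat\mu-\hat{\tilde\mu}=(\hat\mu-\hat{\tilde\mu})K_\mu+\hat{\tilde\mu}(K_\mu-K_{\tilde\mu})$, the Dobrushin bound on the first term, the $L_K$ bound on the second, and a final rearrangement using $1-\rho_{\mathrm{mix}}>0$. Your extra Banach check that $\Phi$ is well defined is a small addition; the paper simply takes uniqueness of the stationary distribution for granted.
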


\begin{proof}
Let $\hat\mu=\Phi(\mu)$ and $\hat{\tilde\mu}=\Phi(\tilde\mu)$. Then
$\hat\mu=\hat\mu K_\mu$ and $\hat{\tilde\mu}=\hat{\tilde\mu}K_{\tilde\mu}$, so
\begin{align}
\hat\mu-\hat{\tilde\mu} = \hat\mu K_\mu - \hat{\tilde\mu}K_{\tilde\mu}
= (\hat\mu-\hat{\tilde\mu})K_\mu + \hat{\tilde\mu}(K_\mu-K_{\tilde\mu}).
\end{align}
Taking $\|\cdot\|_1$ norms and applying \eqref{eq:alg_dobrushin_contr} and \eqref{eq:alg_LK_assump} yields
\begin{align}
\|\hat\mu-\hat{\tilde\mu}\|_1 \le \alpha(K_\mu)\|\hat\mu-\hat{\tilde\mu}\|_1
+ \max_{s}\|K_\mu(\cdot\mid s)-K_{\tilde\mu}(\cdot\mid s)\|_1
\le \rho_{\mathrm{mix}}\|\hat\mu-\hat{\tilde\mu}\|_1 + L_K\|\mu-\tilde\mu\|_1.
\end{align}
Rearranging gives \eqref{eq:alg_Phi_Lip}.
\end{proof}


\subsection{Softmax robust best response and Soft Algorithm}
\label{subsec:soft_rb_picard}

Fix $\tau>0$. For a given population distribution $\mu\in\Delta(\mathcal{S})$ and $v\in\mathbb{R}^S$, define the robust $Q$-operator
\begin{align}
(Q_\mu v)(s,a) := \min_{P\in\mathfrak{P}(s,a,\mu)} \sum_{s'\in S} P(s')\big(r(s,a,s',\mu)+\gamma v(s')\big),
\end{align}
and the \emph{soft} robust Bellman operator
\begin{align}
(T_\mu^\tau v)(s) := \tau\log\sum_{a\in A}\exp\Big(\frac{(Q_\mu v)(s,a)}{\tau}\Big).
\end{align}

\begin{lemma} 
For each fixed $\mu$, the map $T_\mu^\tau$ is a contraction on $(\mathbb{R}^S,\|\cdot\|_\infty)$ with modulus $\gamma$.
Hence there exists a unique $v_\mu^\tau$ such that $v_\mu^\tau=T_\mu^\tau v_\mu^\tau$.
\end{lemma}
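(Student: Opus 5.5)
The plan is to compare $T_\mu^\tau$ with the robust $Q$-operator $\mathcal{Q}_\mu$, which Lemma~\ref{lem:contraction} already handles, and then use two elementary properties of the log-sum-exp function. Define $\mathrm{lse}_\tau(x):=\tau\log\sum_{a}\exp(x_a/\tau)$ for $x\in\mathbb R^{\mca}$. Then $(T_\mu^\tau v)(s)=\mathrm{lse}_\tau\bigl((\mathcal{Q}_\mu v)(s,\cdot)\bigr)$, so it suffices to show two things. First, $\mathcal{Q}_\mu$ is $\gamma$-Lipschitz from $(\R^{\mcs},\|\cdot\|_\infty)$ into $(\R^{\mcs\times\mca},\|\cdot\|_\infty)$. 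Second, $\mathrm{lse}_\tau$ is $1$-Lipschitz with respect to $\|\cdot\|_\infty$.

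\textbf{Step 1.} The first claim is exactly the intermediate bound established in the proof of Lemma~\ref{lem:contraction}, before the maximum over actions is taken. For every $(s,a)$ we have
\[
|(\mathcal{Q}_\mu v)(s,a)-(\mathcal{Q}_\mu w)(s,a)|\le\gamma\|v-w\|_\infty .
\]
The argument there bounds the difference for each fixed $P\in\mathfrak{P}(s,a,\mu)$ and then passes to the minimum. Alternatively, one can evaluate both sides at the minimizer of one of them, which exists by Lemma~\ref{lem:selectors}.

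\textbf{Step 2.} To show $\mathrm{lse}_\tau$ is $1$-Lipschitz in $\|\cdot\|_\infty$, I would use monotonicity together with translation equivariance. If $x\le y$ componentwise, then $\mathrm{lse}_\tau(x)\le\mathrm{lse}_\tau(y)$, and for any constant $c$ we have $\mathrm{lse}_\tau(x+c\mathbf 1)=\mathrm{lse}_\tau(x)+c$. Since $x\le y+\|x-y\|_\infty\mathbf 1$, it follows that $\mathrm{lse}_\tau(x)-\mathrm{lse}_\tau(y)\le\|x-y\|_\infty$, and swapping $x$ and $y$ gives the symmetric inequality. Another route is the mean value theorem: the gradient of $\mathrm{lse}_\tau$ is the softmax vector, which has $\ell_1$ norm $1$, and this dual-norm bound also yields the $\ell_\infty$-Lipschitz constant $1$.

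\textbf{Step 3.} Composing the two bounds gives, for every $s$,
\[
|(T_\mu^\tau v)(s)-(T_\mu^\tau w)(s)|\le\gamma\|v-w\|_\infty .
\]
Taking the maximum over $s$ shows $T_\mu^\tau$ is a $\gamma$-contraction on $(\R^{\mcs},\|\cdot\|_\infty)$. Because $\gamma<1$, Banach's fixed-point theorem then gives a unique $v_\mu^\tau$ with $v_\mu^\tau=T_\mu^\tau v_\mu^\tau$.

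No step here is a real obstacle. The only point needing care is the $1$-Lipschitz property of log-sum-exp, and the monotonicity-plus-shift argument in Step~2 settles it cleanly.
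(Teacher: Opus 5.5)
Your proposal is correct and follows essentially the same route as the paper's proof: the $\gamma$-Lipschitz bound on the robust $Q$-operator, composed with the $1$-Lipschitz property of log-sum-exp in $\|\cdot\|_\infty$, and then Banach's fixed-point theorem. The only difference is that you spell out why log-sum-exp is $1$-Lipschitz (monotonicity plus shift equivariance, or the softmax gradient having $\ell_1$ norm $1$), while the paper simply asserts this fact.
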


\begin{proof}
For any $s,a$ and any $v,w$, one has
$|(Q_\mu v)(s,a)-(Q_\mu w)(s,a)|\le \gamma\|v-w\|_\infty$.
Also, $\mathrm{LSE}_\tau(x):=\tau\log\sum_a e^{x_a/\tau}$ satisfies
$|\mathrm{LSE}_\tau(x)-\mathrm{LSE}_\tau(y)|\le \|x-y\|_\infty$.
Combining yields $\|T_\mu^\tau v-T_\mu^\tau w\|_\infty\le \gamma\|v-w\|_\infty$.
\end{proof}

Define the soft robust $Q$-function $Q_\mu^\tau(s,a):=(Q_\mu v_\mu^\tau)(s,a)$ and the SoftMax policy
\begin{align}
\pi_\mu^\tau(a\mid s):=\frac{\exp(Q_\mu^\tau(s,a)/\tau)}{\sum_{b\in A}\exp(Q_\mu^\tau(s,b)/\tau)}.
\end{align}
Choose a worst-case kernel selector
\begin{align}
p_\mu^\tau(\cdot\mid s,a)\in
\arg\min_{P\in\mathfrak{P}(s,a,\mu)}\sum_{s'\in S}P(s')\big(r(s,a,s',\mu)+\gamma v_\mu^\tau(s')\big).
\end{align}
Let $K_\mu^\tau(s'\mid s):=\sum_a \pi_\mu^\tau(a\mid s)p_\mu^\tau(s'\mid s,a)$ and define the population operator
$F^\tau(\mu):=\mu K_\mu^\tau$.

\begin{algorithm}[!htb]
\caption{Soft RB-Iteration}
\label{alg:soft_rb_picard}
\begin{algorithmic}[1]
\Require initial $\mu^0\in\Delta(\mathcal{S})$; temperature $\tau>0$; stepsize $\alpha\in(0,1]$;
tolerances $\varepsilon_{\mathrm{DP}},\varepsilon_\mu$.
\For{$k=0,1,2,\ldots$}
    \State \textbf{(Soft robust DP at $\mu^k$)} iterate $v^{(m+1)}\leftarrow T_{\mu^k}^\tau v^{(m)}$ until
    $\|v^{(m+1)}-v^{(m)}\|_\infty\le \varepsilon_{\mathrm{DP}}$; set $v_{\mu^k}^\tau\leftarrow v^{(m+1)}$.
    \State \textbf{(SoftMax policy)} set $Q^k(s,a):=(Q_{\mu^k}v_{\mu^k}^\tau)(s,a)$ and
    $\pi^k(a\mid s)\propto \exp(Q^k(s,a)/\tau)$.
    \State \textbf{(Worst-case kernel)} for each $(s,a)$ choose
    $p^k(\cdot\mid s,a)\in \arg\min_{P\in\mathfrak{P}(s,a,\mu^k)}\sum_{s'}P(s')\big(r+\gamma v_{\mu^k}^\tau\big)$.
    \State form $K^k(s'\mid s)=\sum_a \pi^k(a\mid s)p^k(s'\mid s,a)$ and $\tilde\mu^{k+1}\leftarrow \mu^kK^k$.
    \State \textbf{(Damped update)} $\mu^{k+1}\leftarrow (1-\alpha)\mu^k+\alpha\,\tilde\mu^{k+1}$.
    \If{$\|\mu^{k+1}-\mu^k\|_1\le \varepsilon_\mu$} \State \textbf{break} \EndIf
\EndFor
\State \Return $(\mu^{k+1},\pi^k,p^k)$.
\end{algorithmic}
\end{algorithm}

\begin{assumption}[Soft contractivity for $F^\tau$]
\label{ass:soft_contract}
There exist $\rho_{\mathrm{mix}}\in[0,1)$, $L_Q^\tau\ge 0$, and $L_p^\tau\ge 0$ such that for all $\mu,\tilde\mu$:
(i) $\alpha(K_\mu^\tau)\le \rho_{\mathrm{mix}}$;
(ii) $\max_{s,a}|Q_\mu^\tau(s,a)-Q_{\tilde\mu}^\tau(s,a)|\le L_Q^\tau\|\mu-\tilde\mu\|_1$;
(iii) $\max_{s,a}\|p_\mu^\tau(\cdot\mid s,a)-p_{\tilde\mu}^\tau(\cdot\mid s,a)\|_1\le L_p^\tau\|\mu-\tilde\mu\|_1$.
Assume $\rho^\tau:=\rho_{\mathrm{mix}}+L_p^\tau+\frac{2L_Q^\tau}{\tau}<1$.
\end{assumption}

\begin{theorem}[Linear convergence]
Under Assumption~\ref{ass:soft_contract}, $F^\tau$ has a unique fixed point $\mu^{\tau,\star}$.
Moreover, the update $\mu^{k+1}=(1-\alpha)\mu^k+\alpha F^\tau(\mu^k)$ satisfies
\begin{align}
\|\mu^k-\mu^{\tau,\star}\|_1\le (1-\alpha(1-\rho^\tau))^k\|\mu^0-\mu^{\tau,\star}\|_1.
\end{align}
\end{theorem}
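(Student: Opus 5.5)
The plan mirrors the hard-max argument (Lemma~\ref{lem:alg_F_contraction} and Theorem~\ref{thm:alg_picard_convergence}), with the softmax sensitivity feeding the policy-Lipschitz term. The first step is to show that $F^\tau$ is a $\rho^\tau$-contraction on $(\Delta(\mathcal{S}),\|\cdot\|_1)$. Fix $\mu,\tilde\mu$, add and subtract $\mu K^\tau_{\tilde\mu}$, and write
\begin{align*}
\|F^\tau(\mu)-F^\tau(\tilde\mu)\|_1\le \|\mu K^\tau_\mu-\mu K^\tau_{\tilde\mu}\|_1+\|\mu K^\tau_{\tilde\mu}-\tilde\mu K^\tau_{\tilde\mu}\|_1 .
\end{align*}
The second term is at most $\alpha(K^\tau_{\tilde\mu})\|\mu-\tilde\mu\|_1\le\rho_{\mathrm{mix}}\|\mu-\tilde\mu\|_1$ by the Dobrushin inequality \eqref{eq:alg_dobrushin_contr} together with Assumption~\ref{ass:soft_contract}(i). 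The first term is at most $\max_s\|K^\tau_\mu(\cdot|s)-K^\tau_{\tilde\mu}(\cdot|s)\|_1$.

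To bound that maximum, I would apply Lemma~\ref{lem:alg_LK_decomp} to the selections $(\pi^\tau_\mu,p^\tau_\mu)$. Assumption~\ref{ass:soft_contract}(iii) directly gives the kernel constant $L_p=L_p^\tau$. For the policy constant, Lemma~\ref{lem:alg_softmax_Lip} applied with $Q=Q^\tau_\mu$ and $\tilde Q=Q^\tau_{\tilde\mu}$, combined with (ii), gives $\|\pi^\tau_\mu(\cdot|s)-\pi^\tau_{\tilde\mu}(\cdot|s)\|_1\le \frac{2}{\tau}L_Q^\tau\|\mu-\tilde\mu\|_1$. Hence $L_K\le L_p^\tau+2L_Q^\tau/\tau$, and combining with the Dobrushin term yields $\|F^\tau(\mu)-F^\tau(\tilde\mu)\|_1\le\rho^\tau\|\mu-\tilde\mu\|_1$ with $\rho^\tau<1$.

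Existence and uniqueness of $\mu^{\tau,\star}$ then follow from Banach's theorem, since $\Delta(\mathcal{S})$ with $\|\cdot\|_1$ is complete; it is a closed subset of $\mathbb{R}^S$. For the rate I would use $\mu^{\tau,\star}=F^\tau(\mu^{\tau,\star})$ to write $\mu^{k+1}-\mu^{\tau,\star}=(1-\alpha)(\mu^k-\mu^{\tau,\star})+\alpha(F^\tau(\mu^k)-F^\tau(\mu^{\tau,\star}))$. The triangle inequality and the contraction bound give a one-step factor $1-\alpha+\alpha\rho^\tau=1-\alpha(1-\rho^\tau)$, and induction on $k$ gives the claimed linear rate. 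The damped iterate stays in the simplex because it is a convex combination of points of the simplex.

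The only step needing care is the well-definedness of the selectors. The softmax policy is a genuine function of $\mu$, but $p^\tau_\mu$ is a selection from an argmin set, so $F^\tau$ depends on that choice. I would fix a single selection rule once and for all and read Assumption~\ref{ass:soft_contract}(i) and (iii) as holding for that rule. With this convention the argument is otherwise routine, since Lemma~\ref{lem:alg_LK_decomp} only uses $\|p^\tau_{\tilde\mu}(\cdot|s,a)\|_1=1$ and the $\ell_\infty$-Lipschitz bound on softmax.
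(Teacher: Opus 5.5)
Your proposal is correct and follows essentially the same route as the paper: you split $F^\tau(\mu)-F^\tau(\tilde\mu)$ via $\mu K^\tau_{\tilde\mu}$, bound one term by the Dobrushin coefficient and the other by the kernel-Lipschitz constant $L_p^\tau+2L_Q^\tau/\tau$ from the softmax Lipschitz lemma, and then apply Banach's theorem and the damped recursion. Your remark about fixing a single worst-case-kernel selection rule is a sensible clarification that the paper leaves implicit.
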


\begin{proof}
Combine: (a) Dobrushin contraction in the initial measure with factor $\rho_{\mathrm{mix}}$,
(b) the kernel Lipschitz bound
$\max_s\|K_\mu^\tau(\cdot\mid s)-K_{\tilde\mu}^\tau(\cdot\mid s)\|_1\le L_p^\tau\|\mu-\tilde\mu\|_1 + \frac{2L_Q^\tau}{\tau}\|\mu-\tilde\mu\|_1$
(using SoftMax Lipschitz), to show $\|F^\tau(\mu)-F^\tau(\tilde\mu)\|_1\le \rho^\tau\|\mu-\tilde\mu\|_1$.
Then apply Banach fixed point theorem and the standard damped contraction recursion.
\end{proof}


We moreover quantify the closeness of the soft limit and the original one. 
\begin{proposition}[Bias of the soft equilibrium]\label{prop:softMFE}
Let $\tau>0$, let $\mu^{\tau,\star}=F^\tau(\mu^{\tau,\star})$ with associated $(\pi^\tau,p^\tau)$, and set $\varepsilon(\tau):=\dfrac{\tau\log|\mathcal{A}|}{1-\gamma}$. Write $\mu:=\mu^{\tau,\star}$. Then:
\begin{enumerate}[label=(\roman*)]
\item $v_\mu\ \le\ v^\tau_\mu\ \le\ v_\mu+\varepsilon(\tau)$ pointwise;
\item $v^\tau_\mu-\varepsilon(\tau)\ \le\ u^{\pi^\tau}_\mu\ \le\ v^\tau_\mu$, hence $\inf_p J_\mu(s;\pi^\tau,p)\ \ge\ v_\mu(s)-\varepsilon(\tau)$ for all $s$;
\item $u^{\pi^\tau}_\mu\ \le\ J_\mu(\cdot;\pi^\tau,p^\tau)\ \le\ u^{\pi^\tau}_\mu+\dfrac{2\gamma\,\varepsilon(\tau)}{1-\gamma}$ pointwise.
\end{enumerate}
Consequently $(\mu^{\tau,\star},\pi^\tau,p^\tau)$ is an $\bigl(\varepsilon(\tau),\,\tfrac{2\gamma}{1-\gamma}\varepsilon(\tau)\bigr)$-robust MFE, and $\tau$ trades equilibrium accuracy against the verifiability  ($L_\pi=2L^\tau_Q/\tau$).
\end{proposition}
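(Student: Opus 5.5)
The plan is to compare the soft and hard robust operators at the fixed population $\mu=\mu^{\tau,\star}$ through the elementary log-sum-exp inequalities. Everything else is monotonicity (Lemma~\ref{lem:monotone}) and $\gamma$-contraction, as in Proposition~\ref{prop:robust-dp}. The two facts I will use are the following. First, for any $x\in\mathbb{R}^{\mathcal{A}}$,
\[
\max_a x_a\le \mathrm{LSE}_\tau(x)\le \max_a x_a+\tau\log|\mathcal{A}|.
\]
Second, writing $\pi=\mathrm{softmax}(x/\tau)$, the Gibbs variational identity gives
\[
\mathrm{LSE}_\tau(x)=\sum_a\pi_a x_a+\tau\mathcal{H}(\pi),\qquad 0\le \mathcal{H}(\pi)\le\log|\mathcal{A}|.
\]
Throughout, one-sided bounds of the form $\mathcal{T}v^\tau_\mu\ge v^\tau_\mu-c\mathbf 1$ get transported to fixed points by iterating a monotone operator with the shift property. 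Each such step yields a loss of $c/(1-\gamma)$.

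For (i), applying the first fact pointwise to $x=(Q_\mu v)(s,\cdot)$ gives $\mathcal{T}_\mu v\le T^\tau_\mu v\le \mathcal{T}_\mu v+\tau\log|\mathcal{A}|\mathbf 1$ for every $v$. Starting from $v^\tau_\mu$ and iterating $\mathcal{T}_\mu$, monotonicity gives $\mathcal{T}_\mu^n v^\tau_\mu\le v^\tau_\mu$, and passing to the limit yields $v_\mu\le v^\tau_\mu$. For the other side, the shift property gives $\mathcal{T}_\mu^n v^\tau_\mu\ge v^\tau_\mu-\tau\log|\mathcal{A}|\sum_{k<n}\gamma^k$, and the limit yields $v_\mu\ge v^\tau_\mu-\varepsilon(\tau)$.

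For (ii), note that $Q^\tau_\mu=Q_\mu v^\tau_\mu$ and that $\pi^\tau$ is the corresponding softmax policy. The Gibbs identity then gives
\[
\mathcal{T}^{\pi^\tau}_\mu v^\tau_\mu=v^\tau_\mu-\tau\mathcal{H}(\pi^\tau(\cdot|s))\in[v^\tau_\mu-\tau\log|\mathcal{A}|,\,v^\tau_\mu]
\]
pointwise. The same monotone-iteration argument applied to $\mathcal{T}^{\pi^\tau}_\mu$, whose fixed point is $u^{\pi^\tau}_\mu$, yields $v^\tau_\mu-\varepsilon(\tau)\le u^{\pi^\tau}_\mu\le v^\tau_\mu$. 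Since Proposition~\ref{prop:robust-dp} (and Proposition~\ref{prop:history}) identify $u^{\pi^\tau}_\mu(s)=\inf_p J_\mu(s;\pi^\tau,p)$, combining with (i) gives $\inf_pJ_\mu(s;\pi^\tau,p)\ge v_\mu(s)-\varepsilon(\tau)$.

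For (iii), the lower bound is immediate because $p^\tau$ is admissible and $u^{\pi^\tau}_\mu$ is the infimum. The upper bound is the step that needs care: $p^\tau$ is a minimizer against $v^\tau_\mu$, not against $u^{\pi^\tau}_\mu$. I will handle this with a perturbation argument. Write $J:=J_\mu(\cdot;\pi^\tau,p^\tau)=\mathcal{T}^{\pi^\tau,p^\tau}_\mu J$ and $u:=u^{\pi^\tau}_\mu$. By the choice of $p^\tau$ we have $\mathcal{T}^{\pi^\tau,p^\tau}_\mu v^\tau_\mu=\mathcal{T}^{\pi^\tau}_\mu v^\tau_\mu$. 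Hence
\[
J-u=\bigl(\mathcal{T}^{\pi^\tau,p^\tau}_\mu J-\mathcal{T}^{\pi^\tau,p^\tau}_\mu v^\tau_\mu\bigr)+\bigl(\mathcal{T}^{\pi^\tau}_\mu v^\tau_\mu-\mathcal{T}^{\pi^\tau}_\mu u\bigr).
\]
Both operators are $\gamma$-contractions, so
\[
\|J-u\|_\infty\le\gamma\|J-v^\tau_\mu\|_\infty+\gamma\|v^\tau_\mu-u\|_\infty\le\gamma\|J-u\|_\infty+2\gamma\varepsilon(\tau),
\]
where the last step uses (ii) twice via the triangle inequality. Rearranging gives $\|J-u\|_\infty\le 2\gamma\varepsilon(\tau)/(1-\gamma)$.

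Finally, consistency $\mu^{\tau,\star}=\mu^{\tau,\star}K^\tau_{\mu^{\tau,\star}}$ holds by the fixed-point property. Together, (ii) gives approximate robust optimality up to $\varepsilon(\tau)$, since by Proposition~\ref{prop:statewise-vs-distributional} $V_\mu=\langle\mu,v_\mu\rangle$. Item (iii), averaged against $\mu$, gives approximate worst-case attainment up to $\tfrac{2\gamma}{1-\gamma}\varepsilon(\tau)$, which is the stated approximate-MFE conclusion.
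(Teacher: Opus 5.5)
Your proposal is correct and follows the paper's proof essentially step for step: you use the log-sum-exp sandwich and the Gibbs (Fenchel) identity for (i)--(ii), then transport these to fixed points by monotone iteration with the shift property, which is exactly the paper's ``perturbation fact.'' For (iii) you close with a two-sided contraction estimate routed through $v^\tau_\mu$, where the paper instead uses the one-sided bound $\mathcal{T}^{\pi^\tau,p^\tau}_\mu u\le u+2\gamma\varepsilon(\tau)\mathbf{1}$; both rest on $p^\tau$ being exactly optimal against $v^\tau_\mu$ and give the same constant $2\gamma\varepsilon(\tau)/(1-\gamma)$.
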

 
\begin{proof}
Throughout, $\varepsilon_0:=\tau\log|\mathcal{A}|$, so $\varepsilon(\tau)=\varepsilon_0/(1-\gamma)$. We use the elementary perturbation fact: if $S$ is a monotone $\gamma$-contraction with $S(v+c\textbf{1})=Sv+\gamma c\textbf{1}$, and $Sw\ge w-\varepsilon_0\textbf{1}$ pointwise, then its fixed point $u$ satisfies $u\ge w-\varepsilon(\tau)\textbf{1}$\footnote{The results are from induction:  $S^{n+1}w\ge S(w-\varepsilon_0\sum_{i<n}\gamma^i\textbf{1})=Sw-\varepsilon_0\sum_{1\le i\le n}\gamma^i\textbf{1}\ge w-\varepsilon_0\sum_{i\le n}\gamma^i\textbf{1}$; let $n\to\infty$.}. Symmetrically $Sw\le w+\varepsilon_0\textbf{1}$ implies $u\le w+\varepsilon(\tau)\textbf{1}$, and $S_1\le S_2$ pointwise (both monotone contractions) implies their fixed points are ordered.
 
(i) $\max_a x_a\le\text{LSE}_\tau(x)\le\max_a x_a+\tau\log|\mathcal{A}|$ gives $\mathcal{T}_\mu v\le\mathcal{T}^\tau_\mu v\le\mathcal{T}_\mu v+\varepsilon_0\textbf{1}$ for every $v$; the ordering of fixed points gives $v_\mu\le v^\tau_\mu$, and applying the perturbation fact to $S=\mathcal{T}_\mu$, $w=v^\tau_\mu$ (note $\mathcal{T}_\mu v^\tau_\mu\ge\mathcal{T}^\tau_\mu v^\tau_\mu-\varepsilon_0\textbf{1}=v^\tau_\mu-\varepsilon_0\textbf{1}$) gives $v_\mu\ge v^\tau_\mu-\varepsilon(\tau)$.
 
(ii) The Fenchel identity $\text{LSE}_\tau(x)=\sum_a\pi^\tau_a x_a+\tau H(\pi^\tau)$ with $\pi^\tau=\mathrm{softmax}(x/\tau)$ and $H\le\log|\mathcal{A}|$ gives, at $x=Q^\tau_\mu(s,\cdot)$,
$(\mathcal{T}^{\pi^\tau}_\mu v^\tau_\mu)(s)=v^\tau_\mu(s)-\tau H(\pi^\tau(\cdot|s))\in[v^\tau_\mu(s)-\varepsilon_0,\ v^\tau_\mu(s)]$.
Apply the perturbation fact (both directions) to $S=\mathcal{T}^{\pi^\tau}_\mu$, $w=v^\tau_\mu$: its fixed point $u^{\pi^\tau}_\mu$ satisfies $v^\tau_\mu-\varepsilon(\tau)\le u^{\pi^\tau}_\mu\le v^\tau_\mu$.
 
(iii) The lower bound is $J_\mu(\cdot;\pi^\tau,p^\tau)\ge\inf_p J_\mu(\cdot;\pi^\tau,p)=u^{\pi^\tau}_\mu$. For the upper bound, write $u:=u^{\pi^\tau}_\mu$, $c_v:=r+\gamma v$. For each $(s,a)$, with $p^\tau$ the minimizer of $\langle\cdot,c_{v^\tau_\mu}\rangle$ over $\mathfrak{P}(s,a,\mu)$,
\[
\langle p^\tau, c_u\rangle\le\langle p^\tau,c_{v^\tau_\mu}\rangle+\gamma\|u-v^\tau_\mu\|_\infty
=\min_P\langle P,c_{v^\tau_\mu}\rangle+\gamma\|u-v^\tau_\mu\|_\infty
\le\min_P\langle P,c_u\rangle+2\gamma\|u-v^\tau_\mu\|_\infty .
\]
Since $\|u-v^\tau_\mu\|_\infty\le\varepsilon(\tau)$ by (ii), averaging over $\pi^\tau(\cdot|s)$ yields $\mathcal{T}^{\pi^\tau,p^\tau}_\mu u\le\mathcal{T}^{\pi^\tau}_\mu u+2\gamma\varepsilon(\tau)\textbf{1}=u+2\gamma\varepsilon(\tau)\textbf{1}$. The perturbation fact applied to $S=\mathcal{T}^{\pi^\tau,p^\tau}_\mu$, whose fixed point is $J_\mu(\cdot;\pi^\tau,p^\tau)$, gives the claim with constant $2\gamma\varepsilon(\tau)/(1-\gamma)$.
\end{proof}

\end{document}